\documentclass{article}

\usepackage[preprint]{neurips_2026}

\usepackage[utf8]{inputenc} 
\usepackage[T1]{fontenc}    
\usepackage{url}            
\usepackage{amsfonts}       
\usepackage{nicefrac}       
\usepackage{microtype}      

\usepackage{amsmath,amssymb,amsthm}
\usepackage{mathtools}
\usepackage{bm}
\usepackage{enumitem}
\usepackage{booktabs}
\usepackage{xcolor}
\usepackage[colorlinks=true,linkcolor=blue,citecolor=blue,urlcolor=blue]{hyperref}
\usepackage{cleveref}
\usepackage{natbib}
\usepackage{graphicx}
\usepackage{subcaption}

\newtheorem{theorem}{Theorem}[section]
\newtheorem{proposition}[theorem]{Proposition}

\newtheorem{corollary}[theorem]{Corollary}
\theoremstyle{definition}

\newtheorem{remark}[theorem]{Remark}

\newcommand{\R}{\mathbb{R}}
\newcommand{\N}{\mathbb{N}}
\newcommand{\E}{\mathbb{E}}
\newcommand{\Cov}{\mathrm{Cov}}
\newcommand{\Normal}{\mathcal{N}}
\newcommand{\pdata}{p_{\mathrm{data}}}
\newcommand{\fdata}{f_{\mathrm{data}}}
\newcommand{\sfinf}{f_{\infty}^\star}
\newcommand{\truncpinf}{p_\infty^{(t_\infty)}}
\newcommand{\phat}{\hat{p}}
\newcommand{\spinf}{p_{\infty}^\star}

\newcommand{\tinf}{t_{\infty}}
\newcommand{\Id}{\mathbf{I}_d}
\newcommand{\Probspace}{\mathcal{P}}

\newcommand{\Law}{\mathrm{Law}}

\newcommand{\indep}{\perp \!\!\! \perp}

\newcommand{\bx}{\mathbf{x}}
\newcommand{\bn}{\mathbf{n}}
\newcommand{\be}{\mathbf{e}}
\newcommand{\by}{\mathbf{y}}
\newcommand{\bv}{\mathbf{v}}

\newcommand{\bX}{\mathbf{X}}
\newcommand{\bI}{\mathbf{I}}
\newcommand{\I}{\mathrm{Id}}

\newcommand{\bY}{\mathbf{Y}}
\newcommand{\bZ}{\mathbf{Z}}
\newcommand{\bB}{\mathbf{B}}

\newcommand{\cU}{\mathcal{U}}
\newcommand{\cK}{\mathcal{K}}
\newcommand{\cS}{\mathcal{S}}
\newcommand{\cN}{\mathcal{N}}
\newcommand{\cT}{\mathcal{T}}

\newcommand{\bs}{\mathbf{s}}

\newcommand{\cW}{\mathcal{W}}
\newcommand{\md}{\mathrm{d}}

\definecolor{color_learned}{HTML}{E76F51}
\definecolor{color_ideal}{HTML}{2471A3}

\title{Recursively Trained Diffusion Models: Limiting Collapse Distribution and Spectral Characterization}

\author{
  Nail B. Khelifa  \\
  University of Cambridge\\
  \texttt{nbk24@cam.ac.uk} \\
  \And
  Richard E. Turner \\
  University of Cambridge \\
  \texttt{ret26@cam.ac.uk} \\
  \And
  Ramji Venkataramanan \\
  University of Cambridge \\
  \texttt{rv285@cam.ac.uk} \\
}

\begin{document}

\maketitle

\begin{abstract}
Recursive training of generative models on their own outputs can lead to model collapse, a compounding drift away from the true data distribution. Existing theoretical works bound finite-round error accumulation in the context of diffusion models, but two questions remain open:~what distribution does the recursion converge to, and how fast? We answer both, isolating a mechanism distinct from imperfect learning: even with perfect score estimation and exact sampling, the early stopping of the reverse diffusion (required for numerical stability) drives a progressive drift away from the data distribution. We prove that this recursion converges geometrically to a unique limiting distribution, which admits a closed-form characterization as an infinite mixture of increasingly Gaussian-smoothed versions of the data distribution. A Hermite spectral decomposition of this limit reveals that recursive training acts as a low-pass filter: higher-order modes, which encode fine non-Gaussian structure, are attenuated much more strongly than coarse modes. This spectral picture motivates annealed truncation schedules that progressively shrink truncation times across retraining rounds; we prove that any schedule converging to $0$ asymptotically eliminates recursive compounding. Finally, we show our idealized characterization is robust: in the presence of discretization and score estimation errors, the learned distribution remains in a Wasserstein-2 ball around the ideal limit, with mode-dependent contraction rates that contract high-order errors faster than low-order ones. We validate the theory on synthetic Gaussian mixtures and CIFAR-10.
\end{abstract}

\section{Introduction}
\label{sec:intro}

Following the success of generative modeling, the use of synthetic data for training has attracted considerable attention. Such generated data may be used deliberately to improve or fine-tune existing models \cite{gulcehre2023reinforcedselftrainingrestlanguage, alemohammad2024sims, zelikman2022star}, or may enter training sets inadvertently when data are polluted with outputs of generative models \cite{carlini2024datapoisoning}. It has been found that repeatedly retraining models on their own outputs causes systematic drift away from the target distribution \cite{alemohammad2023selfconsuminggenerativemodelsmad, mc_shumaylov}, a phenomenon referred to as \textit{model collapse}. Subsequent work characterized collapse
as a loss of the tails of the target distribution and a decrease in the diversity of generated samples \cite{dohmatob2024tale, shi2026a}. In settings that allow for a combination of synthetic and real data in each retraining iteration, different behaviors have been observed: in some cases, a small proportion of synthetic data can significantly degrade performance \cite{dohmatob2024strong}, while in others, it has been shown that collapse can be mitigated \cite{model-collapse-inevitable, model-collapse-demystified,dey2025universality, analyzing-mitigating-model-collapse, garg2026preventing} or completely avoided \cite{Barzilai26when-models-dont}. In some settings, synthetic data could even be beneficial when used judiciously \cite{jain2024scaling, dohmatob2024tale}. 

In the context of diffusion models \cite{seminal-diffusion-models, song2019generative, song2021scorebased, DDPM}, recent theory has quantified finite-round error propagation under population-level, finite-sample, or model-specific assumptions \cite{mc-diffusion-sample-level, mc-diffusion-solvable-model, khelifa2026errorpropagationmodelcollapse, analyzing-mitigating-model-collapse}. However, these works do not characterize the asymptotic behavior of iteratively retrained diffusion models, i.e., whether the model tends toward a collapsed regime in which further retraining no longer changes the learned distribution.

\paragraph{Setting} Let $\pdata$ denote the true target data distribution on $\mathbb{R}^d$. We consider a recursive procedure which, in each training round (generation), mixes fresh samples from $\pdata$ with synthetic samples generated by the current model. At generation $i \geq 1$, a proportion $\alpha \in (0,1)$ of the training data consists of fresh samples from $\pdata$, while the remaining proportion $(1-\alpha)$ consists of synthetic samples drawn from the learned model at the previous generation, denoted $\phat^i$. To capture a realistic setting where training is agnostic to how samples were generated, we assume samples are not labeled as fresh or synthetic. As a result, the effective training distribution at generation $i$ is the mixture
\begin{equation}
\label{eq:mixture}
q_i := \alpha\, \pdata + (1-\alpha)\,\phat^i.
\end{equation}
A score network is then trained on samples from $q_i$, and the resulting diffusion model defines the next synthetic distribution $\phat^{i+1}$. Starting from $\phat^0 =\pdata$, this defines the recursion
\begin{equation}
\phat^i 
\;\xrightarrow{\text{mix with } \pdata}\; 
q_i 
\;\xrightarrow{\text{train model}}\; 
\phat^{i+1}.
\label{eq:recursive-structure}
\end{equation}
This framework matches that of existing works \cite{khelifa2026errorpropagationmodelcollapse}, and may be generalized to a setting where each past generation $k \le i$ is sampled from in proportion $\alpha_k$ \cite{mc-diffusion-sample-level}. The goal of this work is to answer the following questions:
\begin{center}
\textit{In this recursive setting, does the sequence $(\phat^i)_{i\geq0}$ converge to a specific distribution as $i$ grows? If so, can it be characterized explicitly, and at what rate does the recursion converge to it?}
\end{center}

Understanding this asymptotic behavior is fundamental because it identifies the regime towards which a model is driven by recursive training. Analyzing the collapsed regime highlights the role of each source of error and the mechanisms by which collapse occurs.

\paragraph{Sources of error} \label{paragraph:sources-of-errors}
Training and sampling a diffusion model in this recursive setting introduce three distinct sources of error. The first, and the main focus of this paper, is \emph{truncation}: in practice, the reverse SDE is only integrated on $[t_0, T]$ for some $t_0 > 0$, because score estimation becomes numerically unstable as $t\downarrow0$ \cite{song2020improved_ncsn, kim2022soft} (formalized in Appendix \ref{app:self-regularization}). While modern samplers introduce various corrections (e.g. Tweedie-style denoising \cite{Efron01122011}, $x_0$-prediction \cite{karras2022elucidating}, and other post-truncation refinements \cite{zhang2024tackling}), the requirement of stopping the reverse SDE at $t_0>0$ is universal. As a result, even with perfect score estimation, exact initialization, and exact numerical sampling, the generated distribution is not $q_i$ itself but a Gaussian-smoothed version of $q_i$. The second source of error is the score estimation error, caused by finite data and function-approximation constraints. The third is the discretization error, introduced when the reverse SDE is solved numerically. There is a rich body of work providing non-asymptotic guarantees for diffusion-based models, accounting for score estimation errors under different discretization schemes \cite{sampling-is-as-easy-as-learning-the-score, benton2024nearly, convergence-manifold-hypotheses-vdb, chen-2023, minimax-optim-score-based-diff-models, wasserstein, confortiKLguarantees, samworth-shape-constraint}.

Our analysis first isolates truncation and shows that it alone induces a progressive drift away from $\pdata$ with recursive training. Errors due to imperfect score estimation and time discretization are then treated as perturbations of this idealized mechanism.

\paragraph{Contributions}
This paper develops an operator-theoretic framework for recursively trained diffusion models and characterizes their limiting behavior. Our main contributions are:
\begin{itemize}[leftmargin=2.5em]
  \item \textbf{Explicit collapse distribution.}
  We identify truncation as a primary driver of model collapse and show that, in the ideal case with no  discretization or score estimation errors, the recursive dynamics \eqref{eq:recursive-structure} converge to a unique limiting distribution $\spinf$. We derive a closed-form expression for $\spinf$ as an infinite mixture of progressively Gaussian-smoothed versions of $\pdata$, and establish a geometric rate of convergence  to that limit in the error-free regime (Theorem~\ref{thm:collapse-distrib-existence-uniqueness}). 
 
  \item \textbf{Spectral structure of collapse.}
  Decomposing the limiting distribution $\spinf$ in the Hermite polynomial basis, we provide a spectral characterization of model collapse (Proposition~\ref{prop:spectral-rpz-limit}), which shows that recursive training acts as  a low-pass filter on $\pdata$, suppressing high-order components more strongly than low-order ones, with explicit mode-dependent attenuation controlled by $\alpha$ and the stopping time $t_0$.
  \item \textbf{Annealed truncation schedules.} Having identified fixed truncation as a driver of model collapse, we introduce \emph{annealed} truncation schedules with generation-adaptive truncation times $t_0^{(N)}$. We prove that any schedule converging to zero asymptotically eliminates recursive compounding in the error-free regime; schedules converging to a positive limit attenuate but do not remove compounding (Theorem \ref{thm:annealed-truncation-corrected}). 
  
  \item \textbf{Robustness and rate of collapse under imperfect training.}
  We extend the analysis beyond the ideal setting to account for discretization and score estimation errors. We show that the recursive dynamics remain stable, converging at an exponential rate to a Wasserstein-2 ball around the ideal limiting distribution $\spinf$ (Theorem~\ref{thm:convergence-imperfect-regime} and Proposition \ref{prop:spectral-perturbation}). Thus, the limiting collapse regime becomes relevant even with a moderate number of rounds of recursive training.
\end{itemize}

\paragraph{Organization}
Section~\ref{sec:background} introduces the mathematical background and the assumptions underlying our analysis. Section~\ref{sec:fixedpoint} shows that, even in an idealized regime (perfect score estimation and exact sampling), model collapse occurs because of truncation of the reverse SDE as $t\downarrow0$. Section~\ref{sec:annealed-truncation} shows that allowing the truncation time to vary across generations changes the asymptotic behavior of the recursion. Section~\ref{sec:imperfect-sampling} provides robustness results for the idealized framework in the presence of errors. 
\paragraph{Notation}
Bold symbols (e.g.\ $\mathbf X_t$, $\mathbf Y_t$) denote $\mathbb R^d$-valued random variables or processes, and $\Probspace_2(\R^d)$ denotes the space of Borel probability measures on $\R^d$ with finite second moment. The space of continuous functions from $[0, T]$  to $\R^d$ is denoted by $C([0,T],\R^d)$. For random variables $\bX_1, \bX_2$ defined on the same probability space, we write $\bX_1 \indep \bX_2$ when $\bX_1$ and $\bX_2$ are independent. For a stochastic process $(\bZ_t)_{t\in[0, T]}$ defined on $C([0,T],\R^d)$, we write $\Law((\bZ_t)_t)$ for its law on path space and $\Law(\bZ_t)$ for the law of its marginal at time $t$. For two measures $\mu$ and $\nu$, we write $\mu \ll \nu$ to denote that $\mu$ is absolutely continuous with respect to $\nu$, meaning that every $\nu$-null set is also $\mu$-null. All probability measures $\mu \in \Probspace_2(\R^d)$ are assumed absolutely continuous with respect to the Lebesgue measure on $\R^d$ and, with some abuse of notation, we also use $\mu$ to refer to their density. Calligraphic symbols (e.g. $\cT$, $\cU$, $\cS$) refer to operators on probability distributions i.e., mapping $\Probspace_2(\R^d) \to \Probspace_2(\R^d)$, and $\I$ is the identity operator ($\I(\mu) = \mu$ for $\mu \in \Probspace_2(\R^d)$).  For $\mu, \nu \in \Probspace_2(\R^d)$, the Wasserstein-2 distance between $\mu$ and $\nu$, denoted $\cW_2(\mu, \nu)$ is defined as 
\begin{equation}
  \cW_2(\mu, \nu) \;:=\; \left(\inf_{\gamma \in \Gamma(\mu,\nu)}
  \int_{\R^d \times \R^d} \|\bx - \by\|^2 \,\gamma(\md\bx, \md\by)\right)^{1/2},
\end{equation}
where $\Gamma(\mu, \nu)$ denotes the set of all couplings of $\mu$ and $\nu$,
i.e., probability measures on $\R^d \times \R^d$ with marginals $\mu$ and $\nu$, respectively.

\section{Background} \label{sec:background}

\paragraph{Forward diffusion}
The $i$-th generation diffusion model is defined by a variance-preserving Ornstein-Uhlenbeck (OU) process \cite{sto-calculus-2,song2021scorebased} on the interval $[0,T]$, initialized at the mixture $q_i$:
\begin{equation} 
\md \bX_t^i
=
-\tfrac12 \bX_t^i\,\md t + \md \bB_t, 
\qquad 
\bX_{0}^i \sim q_i,
\label{eq:ou-forward}
\end{equation}
where $(\bB_t)_{t \in [0,T]}$ is a standard $d$-dimensional Brownian motion. For each time $t \in [0, T]$, the OU process \eqref{eq:ou-forward} defines a sampling operator $\cU_t : \Probspace_2(\R^d) \to \Probspace_2(\R^d)$ \cite{ethier-kurtz, oskendal-sde, Bakry2014Analysis, da2014-infinite-sde} that propagates the initial distribution $q_i$ through the diffusion as follows:
\begin{equation}
    \cU_t q_i:=\Law(e^{-t/2}\bX_0 + \sqrt{1 - e^{-t}}\bZ), \quad \bX_0 \sim q_i \indep \bZ \sim \cN(0, \bI_d),
    \label{eq:UtOU}
\end{equation}
Therefore, we denote the time-$t$ marginal of $q_i$ in \eqref{eq:ou-forward} by
$$
q_{i,t} := \Law(\bX_t^i) = \cU_t q_i.
$$
The sampling operator $\cU_t$ is central to our analysis, and we provide extended background on its properties in Appendix \ref{app:extended-background}.

\paragraph{Reverse-time generation and truncation}
Under standard regularity conditions \cite{anderson1982reverse,time-reversal-of-diffusions}, the time reversal of \eqref{eq:ou-forward} solves the reverse-time SDE integrated backward from $s=T$ to $s=0$,
\begin{equation}
\md\mathbf{Y}_s^{i, \star}
=
\big[
-\tfrac{1}{2}\mathbf{Y}_s^{i, \star} 
- 
\nabla_\bx \log q_{i,s}(\mathbf{Y}_s^{i, \star})\big]\, \md s
+ \md\bar{\mathbf{B}}_s,
\label{eq:reverse-sde-ideal}
\end{equation}
where $(\bar{\mathbf{B}}_s)_{s \in [0, T]}$ is another Brownian motion on the same probability space. If we run the reverse-time SDE starting from $\mathbf{Y}_T^{i, \star} \sim q_{i,T}$, then $\mathbf{Y}_{0}^{i, \star} \sim q_{i,0}$  \cite{anderson1982reverse, time-reversal-of-diffusions}.  The true scores $\nabla \log q_{i,t}$ are unknown and are approximated by a learned time-dependent network $\bs_{\theta_i}$ trained on samples from  $q_i$ in \eqref{eq:mixture}. However, since $q_i$ may be non-smooth or singular, the score $\nabla \log q_{i,t}$ can become unstable as $t\downarrow 0$ \cite{score-approx-chen-low-dim,zhang2024tackling}. Therefore, as is common in the literature \cite{khelifa2026errorpropagationmodelcollapse, benton2024nearly, sampling-is-as-easy-as-learning-the-score}, we work on a truncated interval $[t_0, T]$ with $t_0 >0$, where the diffusion has already regularized the law: under suitable nondegeneracy and regularity assumptions  \cite{aronson1967, aronson1968}, $q_{i,t}$ admits a smooth density for every $t>0$, so that the score is well defined and better behaved away from the singular endpoint. 
Substituting the learned score $\bs_{\theta_i}$ into \eqref{eq:reverse-sde-ideal} yields
\begin{equation}
\md \hat \bY_s^i
=
\big[-\tfrac12 \hat \bY_s^i - \bs_{\theta_i}(\hat \bY_s^i,s)\big]\md s
+ \md \bar \bB_s ,
\label{eq:reverse-sde-learned}
\end{equation}
with initialization $\hat \bY_T^{i}\sim \mathcal{N}(0,\bI_d)$.
The next generation model is therefore $\phat^{i+1} := \Law(\hat{\bY}_{t_0}^i)$.

\paragraph{Sampling operators} \label{paragraph:sampling-operators} The reverse SDEs   \eqref{eq:reverse-sde-ideal}  and \eqref{eq:reverse-sde-learned} define two sampling procedures. In the absence of score estimation and discretization errors, the reverse sampling operator for \eqref{eq:reverse-sde-ideal} is exactly $\cU_{t_0}$ and $\phat^{i+1} = \cU_{t_0}(q_i)$. Accounting for these two errors, the \textit{learned sampling operator} at generation $i$, denoted $\hat{\cS}_i$ is defined as  $\hat{\cS}_i(q_i) = \phat^{i+1}$. These operators formally define the one-step recursion in \eqref{eq:recursive-structure}.

\paragraph{Regularity conditions} 
Let $\Probspace_{\mathrm{reg}}(\R^d)$ be the set of probability distributions $\mu$ satisfying the following assumptions:
\begin{enumerate}[label=\textbf{(A\arabic*)}, ref=A\arabic*, leftmargin=*, resume=assumptions]
  \item\label{A1} $\mu$ admits a strictly positive smooth density with respect to the Lebesgue measure:
        $\mu(\bx) > 0$ for all $\bx \in \R^d$.
  \item\label{A2} Finite second moment:
        $\int_{\R^d} \|\bx\|^2 \mu(\bx)\,\md\bx < \infty$.
  \item\label{A3} The score $\nabla\log\mu(\bx)$ is
        square-integrable under $\mu$.
\end{enumerate}
Throughout, we assume that $\pdata \in \Probspace_{\mathrm{reg}}(\R^d)$, and observe that assumptions~\ref{A1}--\ref{A3} are preserved under the forward diffusion \eqref{eq:ou-forward} and under convex combinations. 

\section{Limiting Collapse Distribution in the Error-free Regime}
\label{sec:fixedpoint}

In this section, we isolate the effect of truncation by studying an idealized error-free regime in which score estimation and discretization errors are ignored. This setting shows that collapse does not require imperfect learning: truncation of the reverse SDE (at $t_0 >0$) alone is sufficient to drive the recursive dynamics toward a nontrivial limiting distribution.

\paragraph{Time reversal and one-step dynamics} In the error-free regime, the transition from one generation to the next is defined by the true reverse SDE \eqref{eq:reverse-sde-ideal} whose marginals, by time-reversal \cite{ anderson1982reverse, time-reversal-of-diffusions}, exactly match those of the forward SDE \eqref{eq:ou-forward}. For each $i \ge 0$, the model $\phat^{i+1}$ can therefore be expressed using the OU operator \eqref{eq:UtOU} applied at the truncation time $t_0$: $\phat^{i+1} = \cU_{t_0}(q_i) = \Law(e^{-t_0/2}\bX_0^i\, + \, \sqrt{1-e^{-t_0}} \bZ)$. Thus truncation alone introduces a mismatch between $\phat^{i+1}$ and $q_i$, the target distribution at generation $i$, and it corresponds to a small amount of residual Gaussian noise. Since this smoothing is repeated recursively, Gaussian-convolved residuals accumulate over generations, and truncation alone leads to progressive drift away from the true data distribution $\pdata$.

\paragraph{Collapse distribution}
Since $\phat^{i+1} = \cU_{t_0}(\alpha \pdata + (1-\alpha)\phat^i)$, the transition from the $i$-th to the $i+1$-th generation is generation-independent. Therefore, the limiting behavior of the recursion amounts to the fixed-point problem of
\begin{equation}
\label{eq:fixed-point-problem}
\mu \mapsto \cU_{t_0}\bigl(\alpha \pdata + (1-\alpha)\mu\bigr).
\end{equation}
The following theorem shows that this recursion admits a unique fixed point and expresses it as a Neumann series (proof in Appendix \ref{app:proof-thm-collapse}).

\begin{theorem}[Collapse distribution: existence, uniqueness, and explicit form]
\label{thm:collapse-distrib-existence-uniqueness} Under \ref{A1}--\ref{A3}, there exists a unique collapse distribution $\spinf \in \Probspace_2(\R^d)$ that is a fixed point of \eqref{eq:fixed-point-problem}. It is given by,
\begin{equation}\label{eq:neumann}
  \spinf \;=\; \alpha\sum_{k=0}^\infty (1-\alpha)^k\,\cU_{(k+1)t_0}(\pdata).
\end{equation}
Moreover, in the absence of score estimation and discretization errors, the sequence $(\phat^N)_{N\ge1}$ converges geometrically fast to $\spinf$: $\cW_2(\phat^N, \spinf)
\;\le\;
\kappa^N\,\cW_2(\pdata, \spinf)$ where $\kappa= \sqrt{1-\alpha}\,e^{-t_0/2}$.
\end{theorem}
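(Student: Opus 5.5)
We will treat the map $\Phi(\mu) := \cU_{t_0}\bigl(\alpha\pdata + (1-\alpha)\mu\bigr)$ from \eqref{eq:fixed-point-problem} as a self-map of $\Probspace_2(\R^d)$ and prove that it is a strict contraction in $\cW_2$ with constant $\kappa=\sqrt{1-\alpha}\,e^{-t_0/2}$. Banach's fixed point theorem will then give existence and uniqueness, and the geometric rate follows by iterating. The explicit form \eqref{eq:neumann} will be obtained separately, by checking directly that the Neumann series is a fixed point. Uniqueness then identifies it with $\spinf$.

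\textbf{Step 1: contraction of $\cU_{t_0}$ and of the mixing step.} For $\mu,\nu\in\Probspace_2(\R^d)$, take an optimal coupling $(\bX,\bY)$ of $(\mu,\nu)$ and a common $\bZ\sim\cN(0,\bI_d)$ independent of it. The synchronous coupling $(e^{-t_0/2}\bX+\sqrt{1-e^{-t_0}}\bZ,\ e^{-t_0/2}\bY+\sqrt{1-e^{-t_0}}\bZ)$ then gives $\cW_2(\cU_{t_0}\mu,\cU_{t_0}\nu)\le e^{-t_0/2}\cW_2(\mu,\nu)$.

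For the mixture, we construct a coupling of $\alpha\pdata+(1-\alpha)\mu$ and $\alpha\pdata+(1-\alpha)\nu$ as follows. With probability $\alpha$ we draw the same point from $\pdata$ on both sides, contributing zero cost. With probability $1-\alpha$ we draw from the optimal coupling of $(\mu,\nu)$. The squared cost is then $(1-\alpha)\cW_2^2(\mu,\nu)$, so this step contracts by $\sqrt{1-\alpha}$. Composing the two bounds gives $\cW_2(\Phi\mu,\Phi\nu)\le\kappa\,\cW_2(\mu,\nu)$ with $\kappa<1$.

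\textbf{Step 2: completeness and the conclusion.} $\Phi$ maps $\Probspace_2$ to itself, since second moments are preserved under mixing and OU smoothing. The space $(\Probspace_2(\R^d),\cW_2)$ is complete, so Banach's theorem yields a unique fixed point $\spinf$. The paper's standing convention restricts to absolutely continuous measures, but this is harmless: the fixed point is an OU-smoothing, hence has a density, and \ref{A1}--\ref{A3} are preserved. For the rate, we use $\phat^0=\pdata$ and $\phat^N=\Phi^N(\pdata)$ to get
$$\cW_2(\phat^N,\spinf)=\cW_2(\Phi^N\pdata,\Phi^N\spinf)\le\kappa^N\cW_2(\pdata,\spinf).$$

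\textbf{Step 3: the explicit form.} We use the semigroup property $\cU_s\cU_t=\cU_{s+t}$, which is immediate from \eqref{eq:UtOU} by composing Gaussians, together with linearity of $\cU_t$ on mixtures. Applying $\Phi$ to the series $\mu_\star:=\alpha\sum_k(1-\alpha)^k\cU_{(k+1)t_0}\pdata$ gives
$$\alpha\,\cU_{t_0}\pdata+\alpha\sum_{k\ge0}(1-\alpha)^{k+1}\cU_{(k+2)t_0}\pdata=\mu_\star,$$
after reindexing.

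The main technical point is justifying that $\mu_\star$ is a genuine element of $\Probspace_2$ and that $\cU_{t_0}$ commutes with the infinite sum. For the first, the weights $\alpha(1-\alpha)^k$ sum to $1$. The second moments of $\cU_{(k+1)t_0}\pdata$ are uniformly bounded by $\E\|\bX_0\|^2+d$, so the series converges as a probability measure with finite second moment. For the second, $\cU_{t_0}$ acts on a countable mixture componentwise, since it is a Markov kernel and integration against a kernel is countably additive by monotone convergence. Alternatively, $\mu_\star$ is the $\cW_2$-limit of the partial iterates $\Phi^N(\delta)$ for a Gaussian starting point. With these in hand, uniqueness gives $\mu_\star=\spinf$.
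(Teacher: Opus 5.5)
Your proof is correct. Steps 1--2 are the paper's argument: the same synchronous coupling for $\cU_{t_0}$, the same diagonal coupling on the fresh-data component giving $\sqrt{1-\alpha}$, Banach's theorem, and the iterated bound $\kappa^N\cW_2(\pdata,\spinf)$.

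Where you differ is in obtaining the explicit form. The paper uses linearity to rewrite the fixed-point equation as $(\I-(1-\alpha)\cU_{t_0})\spinf=\alpha\,\cU_{t_0}\pdata$ and inverts the operator by a Neumann series. You instead verify by reindexing that the series is itself a fixed point in $\Probspace_2(\R^d)$, and let Banach uniqueness identify it with $\spinf$.

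The paper's route derives the formula rather than checking a guess. However, it justifies convergence of the Neumann series by $\kappa<1$, which is a $\cW_2$-Lipschitz constant on a metric space, not an operator norm on a Banach space. To make it rigorous one should work on finite signed measures with the total-variation norm. There the Markov operator $\cU_{t_0}$ has norm $1$, so $\|(1-\alpha)\cU_{t_0}\|=1-\alpha<1$, and one must then check separately that the limit has finite second moment. Your verification needs only countable additivity of the kernel and the uniform second-moment bound, both of which you supply, so it is the more self-contained argument.

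You also obtain the fixed point in all of $\Probspace_2(\R^d)$, which is complete, whereas the absolutely continuous measures are not $\cW_2$-closed. Only afterwards do you show it has a density, which handles a point the paper passes over.
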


Theorem \ref{thm:collapse-distrib-existence-uniqueness} shows that the collapse distribution is a geometric mixture of increasingly smoothed copies of the true data distribution. The $k$th component of the mixture, $\cU_{(k+1)t_0}(\pdata)$, is the data distribution smoothed by $(k+1)t_0$ units of OU evolution, and it is weighted by $\alpha(1-\alpha)^k$. The rate of convergence depends on both the fresh-data proportion $\alpha$ and the truncation time $t_0$. The larger the $t_0$, the larger the Gaussian noise injection at each generation, and the faster the convergence; the larger the $\alpha$, the more fresh data injected in each generation, and the slower the convergence. The following result further quantifies the dependence of $\spinf$ on the parameters $\alpha$ and $t_0$ in various limiting regimes (proof in Appendix \ref{app:proof-limiting-behavior}).

\begin{corollary}[Limiting behaviors]\label{cor:limiting-behavior}
The exact collapse distribution~\eqref{eq:neumann} satisfies the following:
\begin{enumerate}[label=(\roman*)]
  \item $\alpha \to 1^-$: $\cW_2(\spinf, \; \cU_{t_0}(\pdata)) \to 0$.
  \item $\alpha \to 0^+$: $\cW_2(\spinf,\;\Normal(0, \Id)) \to 0$.
  \item $t_0 \to 0^+$: $\cW_2(\spinf,\; \pdata) \to 0$.
  \item For any $\alpha \in (0,1)$, $t_0 > 0$, and $\pdata \not\equiv \Normal(0,\Id)$:
        $\cW_2(\spinf,\; \pdata) > 0$.
\end{enumerate}
\end{corollary}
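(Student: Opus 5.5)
The plan is to work directly with the explicit mixture \eqref{eq:neumann} and two elementary facts about $\cU_t$. The first is a synchronous coupling bound: coupling $\cU_s\pdata$ and $\cU_t\pdata$ through the same $\bX_0\sim\pdata$ and the same $\bZ$ gives
\begin{equation*}
\cW_2^2(\cU_s\pdata,\cU_t\pdata)\le (e^{-s/2}-e^{-t/2})^2 m_2+(\sqrt{1-e^{-s}}-\sqrt{1-e^{-t}})^2 d,
\end{equation*}
where $m_2=\E\|\bX_0\|^2<\infty$ by \ref{A2}. In particular $\cW_2^2(\cU_t\pdata,\Normal(0,\Id))\le e^{-t}m_2+(1-\sqrt{1-e^{-t}})^2d\to0$ as $t\to\infty$. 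The second fact is joint convexity of $\cW_2^2$ over mixtures with common weights: if $\mu=\sum_k w_k\mu_k$ and $\nu=\sum_k w_k\nu_k$, then $\cW_2^2(\mu,\nu)\le\sum_k w_k\cW_2^2(\mu_k,\nu_k)$. This follows by gluing the optimal couplings of each pair with the weights $w_k$.

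For (i), write $\cU_{t_0}\pdata=\sum_k\alpha(1-\alpha)^k\,\cU_{t_0}\pdata$ and apply convexity. This gives $\cW_2^2(\spinf,\cU_{t_0}\pdata)\le\sum_k\alpha(1-\alpha)^k\cW_2^2(\cU_{(k+1)t_0}\pdata,\cU_{t_0}\pdata)$. Each term is bounded by a constant $C=2m_2+2d$, uniformly in $k$. The $k=0$ term vanishes, so the sum is at most $C\bigl(1-\alpha\bigr)\to0$ as $\alpha\to1^-$.

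For (ii), apply the same argument with $\Normal(0,\Id)$ as the comparison measure. This gives $\cW_2^2(\spinf,\Normal(0,\Id))\le\sum_k\alpha(1-\alpha)^k\varepsilon_{k+1}$, where $\varepsilon_j:=\cW_2^2(\cU_{jt_0}\pdata,\Normal(0,\Id))$. Here $\varepsilon_j\le C$ and $\varepsilon_j\to0$. Fix $K$ so that $\varepsilon_j<\eta$ for $j>K$. The first $K$ terms then contribute at most $C\alpha K$ and the tail at most $\eta$. Letting $\alpha\to0$ and then $\eta\to0$ finishes (ii). This truncation of the weight sequence is the only step requiring care, because the weights spread to infinity as $\alpha\to0$.

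For (iii), use the same convexity argument, comparing each component with $\pdata$. This gives $\cW_2^2(\spinf,\pdata)\le\sum_k\alpha(1-\alpha)^k\delta((k+1)t_0)$, where $\delta(t):=\cW_2^2(\cU_t\pdata,\pdata)\le(1-e^{-t/2})^2m_2+(1-e^{-t})d$. Since $\delta(t)\le C$ and $\delta(t)\to0$ as $t\to0$, dominated convergence on the series with fixed $\alpha$ gives the limit. Alternatively, split at $K$ with $(1-\alpha)^K$ small and use continuity for the first $K$ terms.

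For (iv), I would argue through the fixed-point equation rather than the series, and I expect this to be the main obstacle, since a positive distance must be produced rather than an upper bound. Suppose $\spinf=\pdata$. The fixed-point relation then reads $\pdata=\cU_{t_0}\pdata$. Taking characteristic functions, $\varphi(\xi)=\varphi(e^{-t_0/2}\xi)e^{-(1-e^{-t_0})\|\xi\|^2/2}$. Iterating $n$ times telescopes the Gaussian factors to $\varphi(\xi)=\varphi(e^{-nt_0/2}\xi)e^{-(1-e^{-nt_0})\|\xi\|^2/2}$. Letting $n\to\infty$ and using continuity of $\varphi$ with $\varphi(0)=1$ gives $\varphi(\xi)=e^{-\|\xi\|^2/2}$, so $\pdata=\Normal(0,\Id)$, a contradiction. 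Since $\cW_2$ is a metric, $\cW_2(\spinf,\pdata)>0$. An alternative route is to note that $\cU_t\pdata=\pdata$ forces $\pdata$ to be invariant for the OU semigroup.
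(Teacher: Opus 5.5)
Your proposal is correct and follows the paper's proof. It uses convexity of $\cW_2^2$ over common-weight mixtures for (i)--(iii), with dominated convergence in (iii), and the fixed-point contradiction $\pdata=\cU_{t_0}\pdata$ for (iv). The differences are minor. In (ii), the paper uses the OU contraction $\cW_2(\cU_t\pdata,\gamma)\le e^{-t/2}\cW_2(\pdata,\gamma)$ to get the explicit bound $\cW_2^2(\spinf,\gamma)\le \cW_2^2(\pdata,\gamma)\,\alpha e^{-t_0}/(1-(1-\alpha)e^{-t_0})$, where you use a $K$--$\eta$ splitting. In (iv), your characteristic-function iteration proves the uniqueness of the OU invariant measure, which the paper only cites.
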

These limits have simple interpretations. When $\alpha \to 1$, the recursion effectively disappears and only the one-step truncation bias remains. When $\alpha \to 0$, the fraction of fresh data injected in each retraining generation vanishes, letting Gaussian smoothing compound without attenuation; this ultimately drives the recursion towards the standard Gaussian (the invariant measure of the OU semigroup). When $t_0 \to 0^+$, truncation vanishes and the collapse distribution approaches $\pdata$.

\subsection{Spectral decomposition of the collapse distribution}

While Theorem~\ref{thm:collapse-distrib-existence-uniqueness} gives a closed-form description of the collapse distribution, it does not directly identify how different features of the data distribution are affected by recursive training. In particular, the representation~\eqref{eq:neumann} does not
distinguish between low-order structure (e.g. mean, covariance) and finer non-Gaussian features (e.g. tails, oscillations, multimodality). We now refine this description by studying collapse mode by mode. Specifically, we analyze $\spinf$ in the Hermite polynomial basis, the spectral coordinates naturally associated with the Ornstein-Uhlenbeck dynamics. The functional analytic details for this section are collected in Appendix~\ref{app:extended-background}.

Let $\gamma = \cN(0, \bI_d)$, and let $L^2(\gamma)$ be the space of square-integrable functions with respect to $\gamma$. The inner product $\langle\cdot, \cdot\rangle_{L^2(\gamma)}$ on this space is defined as
$
\langle f, g \rangle_{L^2(\gamma)} = \int f(\bx) g(\bx) \gamma(\md \bx)
$.
    
\textbf{Decomposition of $\cU_t$ in Hermite Polynomial Basis}
For any multi-index $\bn=(n_1, \ldots, n_d) \in \N^d$, the multivariate Hermite polynomial \cite{bogachev2015gaussian-measures} $H_\bn$ is defined as
$
H_{\bn}(\bx):=\prod_{j=1}^d H_{n_j}(x_j),
$
where $H_k$ denotes the $k$-th univariate Hermite polynomial (defined in \eqref{eq:univariate_Hermite}). The family $\{H_{\bn}\}_{\bn\in\mathbb N^d}$ forms an orthogonal basis of $L^2(\gamma)$. That is, any $g \in L^2(\gamma)$ can be decomposed as  \cite{bogachev2015gaussian-measures, janson1997gaussian-hilbert-spaces}:
\begin{equation*}
g = \sum_{\bn} \langle g, H_{\bn} \rangle_{L^2(\gamma)} H_{\bn},
\end{equation*}

Now, for any probability measure $\mu \ll \gamma$, let $f=\md\mu/\md\gamma$ be its Radon-Nikodym derivative (density ratio w.r.t. $\gamma$),  assumed to be in $L^2(\gamma)$.
Then, writing $f=\sum_{\bn} a_{\bn}H_{\bn}$, we have (Proposition \ref{prop:decomposition-Ut-hermite-basis}), 
\begin{equation}
\label{eq:sampling-OU-decomposition-Hn}
\frac{\md(\cU_t\mu)}{\md\gamma}
=
\sum_{\bn\in\mathbb N^d}
e^{-|\bn|t/2} a_{\bn}H_{\bn}.
\end{equation}
That is, OU smoothing attenuates higher-degree Hermite modes of the density ratio more strongly than lower-degree ones. Applying the decomposition \eqref{eq:sampling-OU-decomposition-Hn} to the representation \eqref{eq:neumann} of $\spinf$ requires that (i) $\pdata \ll  \gamma$, which is guaranteed by \ref{A1}--\ref{A3}, and (ii) $\frac{\md \pdata}{\md \gamma} \in L^2(\gamma)$, which we assume.

\begin{enumerate}[label=\textbf{(A\arabic*)}, ref=A\arabic*, leftmargin=*, resume=assumptions]
\item \label{ass:spectral-regularity-A4} The data distribution \(\pdata\) is such that $f_{\mathrm{data}} := \frac{\md \pdata}{\md \gamma} \in L^2(\gamma)$.
\end{enumerate}

Assumption \ref{ass:spectral-regularity-A4} excludes distributions $\pdata$ with heavy tails relative to a Gaussian. However, our experiments on CIFAR-10 (Figure \ref{fig:cifar-validation}) suggest that the spectral representation results in this section hold more widely in practice. Under \ref{ass:spectral-regularity-A4}, $\spinf$ is also absolutely continuous with respect to $\gamma$, with density denoted by
$
f_\infty^\star := \frac{\md \spinf}{\md \gamma}
$.
The following result shows that the collapse dynamics act diagonally in the Hermite basis, with an explicit mode-wise attenuation factor (proof in Appendix \ref{app:proof-spectral-rpz-limit}).

\begin{proposition}[Spectral representation of the collapse distribution]
\label{prop:spectral-rpz-limit}
Under Assumptions \ref{A1}, \ref{A2}, \ref{A3} and \ref{ass:spectral-regularity-A4}, the density $f_\infty^\star =  \frac{\md \spinf}{\md \gamma}$ of the \textit{error-free} collapse distribution $\spinf$ admits the expansion
\begin{equation}\label{eq:spectral}
  f_\infty^\star
  =
  \sum_{\bn\in\mathbb N^d}
  m_{\bn}(\alpha,t_0)\,
  \big\langle f_{\mathrm{data}}, H_{\bn}\big\rangle_{L^2(\gamma)}\, H_{\bn},
  \qquad
  m_{\bn}(\alpha,t_0)
  =
  \frac{\alpha\,e^{-|\bn|t_0/2}}
       {1-(1-\alpha)e^{-|\bn|t_0/2}}.
\end{equation}
Equivalently, for every \(\bn\in\mathbb N^d\), we have 
$
\big\langle f_\infty^\star, H_{\bn}\big\rangle_{L^2(\gamma)}
=
m_{\bn}(\alpha,t_0)\,
\big\langle f_{\mathrm{data}}, H_{\bn}\big\rangle_{L^2(\gamma)}.
$
\end{proposition}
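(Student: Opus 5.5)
The plan is to combine the Neumann series representation \eqref{eq:neumann} of Theorem~\ref{thm:collapse-distrib-existence-uniqueness} with the diagonal action \eqref{eq:sampling-OU-decomposition-Hn} of $\cU_t$ in the Hermite basis, and then sum a geometric series mode by mode. Write $\fdata=\sum_{\bn} a_{\bn}H_{\bn}$ with $a_{\bn}$ proportional to $\langle \fdata,H_{\bn}\rangle_{L^2(\gamma)}$; the normalisation of the $H_{\bn}$ cancels in the final identity, since both sides are linear in the coefficients. By Proposition~\ref{prop:decomposition-Ut-hermite-basis}, each component of \eqref{eq:neumann} satisfies
\[
\frac{\md\,\cU_{(k+1)t_0}(\pdata)}{\md\gamma}=\sum_{\bn} e^{-|\bn|(k+1)t_0/2}a_{\bn}H_{\bn}=:f_k .
\]
In particular $\|f_k\|_{L^2(\gamma)}\le \|\fdata\|_{L^2(\gamma)}$, because every multiplier lies in $(0,1]$.

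First, I identify the density of $\spinf$ with respect to $\gamma$. Since \eqref{eq:neumann} is a countable convex combination of measures, for every Borel set $A$ we have $\spinf(A)=\alpha\sum_k(1-\alpha)^k\int_A f_k\,\md\gamma$. The series $F:=\alpha\sum_k(1-\alpha)^k f_k$ converges absolutely in $L^2(\gamma)$, with norm at most $\|\fdata\|_{L^2(\gamma)}$ because the weights sum to one. By monotone convergence applied to the nonnegative $f_k$, $F$ equals $\md\spinf/\md\gamma$ almost everywhere, so $\sfinf=F\in L^2(\gamma)$.

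Next, I compute the coefficients. Pairing with $H_{\bn}$ is a continuous linear functional on $L^2(\gamma)$, so it commutes with the $L^2$-convergent sum:
\[
\langle \sfinf,H_{\bn}\rangle=\alpha\sum_{k\ge0}(1-\alpha)^k e^{-|\bn|(k+1)t_0/2}\langle\fdata,H_{\bn}\rangle .
\]
Set $r=(1-\alpha)e^{-|\bn|t_0/2}\in[0,1)$. The geometric sum then gives $\alpha e^{-|\bn|t_0/2}/(1-r)=m_{\bn}(\alpha,t_0)$, which proves the coefficientwise identity. The expansion \eqref{eq:spectral} follows from completeness of $\{H_{\bn}\}$ in $L^2(\gamma)$. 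It converges in $L^2(\gamma)$ because $0<m_{\bn}\le 1$ (the inequality $\alpha x\le 1-(1-\alpha)x$ is equivalent to $x\le1$), so its coefficients are dominated by those of $\fdata$.

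The only delicate step is the interchange of the infinite mixture sum with the Radon–Nikodym derivative and with the $L^2$ pairing, that is, justifying that the series of densities converges in $L^2(\gamma)$ and represents $\sfinf$. The uniform contraction bound $\|f_k\|_{L^2(\gamma)}\le\|\fdata\|_{L^2(\gamma)}$, which comes from the Hermite diagonalisation together with Parseval, handles this cleanly. Everything else is algebra.
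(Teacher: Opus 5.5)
Your proof is correct and follows the same route as the paper: the Neumann series for $\spinf$, the diagonal action of $\cU_{(k+1)t_0}$ on the Hermite modes of $\fdata$, $L^2(\gamma)$-convergence of the resulting series of densities, and a mode-by-mode geometric sum that yields $m_{\bn}(\alpha,t_0)$. The paper passes quickly over a few steps that you make explicit: identifying $\sfinf$ with the $L^2$ sum via monotone convergence, pairing with the continuous functional $\langle\cdot,H_{\bn}\rangle_{L^2(\gamma)}$ in place of a double-sum interchange, and noting that the normalization of the $H_{\bn}$ cancels in the coefficient identity.
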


Equation~\eqref{eq:spectral} shows that the collapse distribution is obtained from $\pdata$ by a mode-wise attenuation in the Hermite basis relative to the Gaussian reference measure $\gamma$. The attenuation factor $m_\bn$ is monotone decreasing in the total degree $\bn$, meaning that high-order polynomials, which encode increasingly fine non-Gaussian features such as tails, oscillations, and multimodality, are suppressed more strongly than low-order ones, which encode Gaussian-like structure of a distribution relative to $\gamma$. This observation matches previous results in which model collapse has, for example, been associated with a degradation of tails \cite{mc_shumaylov, alemohammad2023selfconsuminggenerativemodelsmad, curse-schumailov}. Moreover, the more fresh-data is injected at each retraining generation, the fewer modes of $\pdata$ are affected by the recursion, consistent with similar observations in  \cite{bertrand2024stability, model-collapse-inevitable, khelifa2026errorpropagationmodelcollapse, mc-diffusion-sample-level}. Proposition \ref{prop:spectral-rpz-limit} thus completes Theorem \ref{thm:collapse-distrib-existence-uniqueness} by quantifying the mode-wise effect of \textit{truncation} on $\pdata$. Figure \ref{fig:hermite-pdata-vs-spinf-vs-alpha} illustrates the attenuation of high-degree Hermite modes of $\spinf$ on a two-dimensional distribution for varying $\alpha$, and how this evolves with the proportion of fresh data.

\begin{figure}[t]
    \centering
    \includegraphics[width=1.0\linewidth]{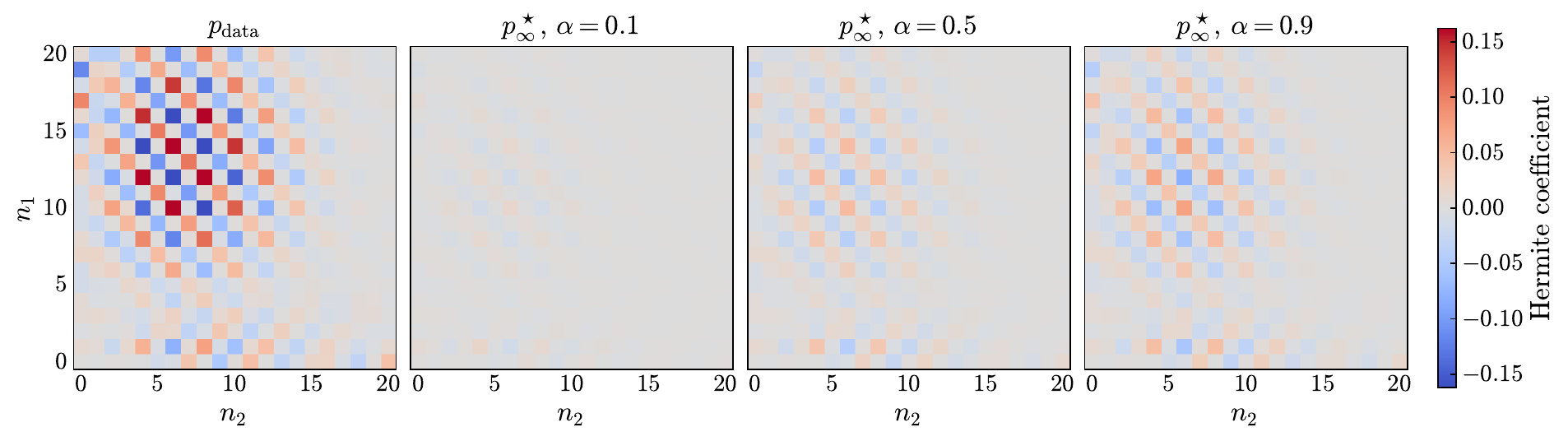}
    \caption{\textbf{Hermite-mode attenuation of $\spinf$ for varying $\alpha$.} Each heatmap displays the coefficient $\left\langle f, H_{n_1,n_2} \right\rangle_{L^2(\gamma)}$ of the density ratio \(f = \md p / \md\gamma\) in the two-dimensional Hermite basis, with \(n_1\) and \(n_2\) denoting the polynomial degrees along the two coordinates.
    \textit{Right}: decomposition of $\pdata$, chosen as an oscillatory distribution with substantial energy in medium- and high-degree Hermite modes. \textit{Remaining panels}: similar spectral decomposition of the fixed-point distribution $\spinf$ for $\alpha \in \{0.1, 0.5, 0.9\}$. As predicted by Proposition~\ref{prop:spectral-rpz-limit}, the attenuation is stronger for smaller $\alpha$.}
    \label{fig:hermite-pdata-vs-spinf-vs-alpha}
\end{figure}

\section{Mitigating Model Collapse with Annealed Truncation Schedules}
\label{sec:annealed-truncation}

Prior studies of model collapse for recursively trained general generative models identified the fresh-data proportion $\alpha$ as a key parameter to stabilize or attenuate recursive degradation \cite{bertrand2024stability, model-collapse-inevitable, khelifa2026errorpropagationmodelcollapse, mc-diffusion-sample-level}. However, for diffusion models, Theorem \ref{thm:collapse-distrib-existence-uniqueness} and Proposition \ref{prop:spectral-rpz-limit} show that for any fresh-data proportion $\alpha<1$,  there is a smoothing bias created by the fixed truncation time $t_0>0$,  which implies $\cW_2(\spinf,\; \pdata) > 0$. Figure \ref{fig:truncation-affect-collapse} (Appendix \ref{app:experiments}) illustrates the effect of $t_0$ on collapse when varying $\alpha$ on CIFAR-10 \cite{cifar10}, showing that while a higher fresh-data injection rate slows down collapse, it does not prevent it altogether. This suggests a complementary intervention on truncation: modifying the sampling operator across generations by progressively decreasing the truncation times. The resulting annealed schedule targets the specific source of bias identified above, namely the residual OU smoothing due to early stopping. 
\begin{figure}[t]
    \centering
    \includegraphics[width=1.0\linewidth]{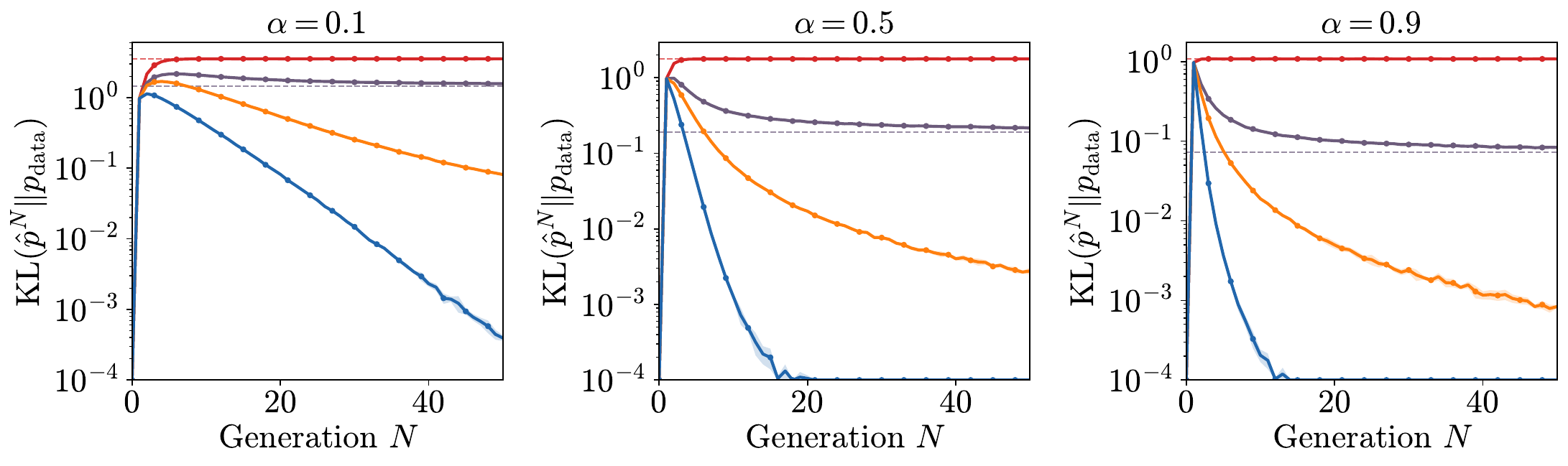}
    \caption{\textbf{Annealed truncation schedules eliminate recursive compounding (Theorem~\ref{thm:annealed-truncation-corrected}).} Error-free recursion on a 2D Gaussian mixture, $\mathrm{KL}(\phat^N \,\|\, \pdata)$ vs.\ generation $N$ for different fresh-data proportions $\alpha \in \{0.1, 0.5, 0.9\}$. Each plot compares four schedules: fixed at $t_0=0.5$ (red), decreasing and converging to $t_\infty = 0.2$ (purple), and annealed converging to $0$ given by $t_0/(1+i)^\beta$, with $\beta = 1$ (orange) and $\beta = 2$ (blue). The annealed schedules drive KL toward $0$ at rates increasing with $\beta$, while the fixed and positive-limit schedules plateau at  $\mathrm{KL}(\spinf, \pdata)$ and $\mathrm{KL}(p_\infty^{(t_\infty)}, \pdata)$, respectively. Experiment details in Appendix \ref{app:experiments}.}
    \label{fig:truncation-2d-gmm}
\end{figure}
\paragraph{Generation-dependent truncation} We consider a sequence of truncation times $(t_0^{(N)})_{N\ge0}$, so that, in the error-free regime, the recursion becomes
\begin{equation}
\label{eq:annealed-recursion}
\phat^{N+1}
=
\cU_{t_0^{(N)}}\bigl(\alpha\,\pdata + (1-\alpha)\phat^N\bigr),
\qquad N\ge 0.
\end{equation}
The following result (proof in Appendix \ref{app:proof-annealed-truncation-corrected}) gives an exact representation of the iterates $\phat^N$ obtained by the adaptive recursion \eqref{eq:annealed-recursion}, characterizes their limit when the schedule converges and shows that $\pdata$ can be exactly recovered. 

\begin{theorem}[Asymptotic effect of generation-dependent truncation]
\label{thm:annealed-truncation-corrected}
Let \((t_0^{(i)})_{i\ge 0}\) be a sequence of positive truncation times, and consider
the error-free recursion \eqref{eq:annealed-recursion}. Then, for every \(N\ge 1\),
\begin{equation}
\label{eq:annealed-unrolled}
\phat^N
=
\alpha \sum_{m=0}^{N-1} (1-\alpha)^m\,\cU_{\sigma_{m,N}}(\pdata)
+
(1-\alpha)^N \cU_{s_{0,N}}(\pdata),
\end{equation}
where $\sigma_{m,N}:=\sum_{\ell=N-1-m}^{N-1} t_0^{(\ell)}$ and $s_{0,N}:=\sum_{\ell=0}^{N-1} t_0^{(\ell)}$. Furthermore, if \(t_0^{(N)} \to \tinf \in [0,\infty)\) as $N \to \infty$,  then
$$
\phat^N \, \xrightarrow[N\to\infty]{\cW_2} \, 
p_\infty^{(\tinf)}
:=
\alpha \sum_{m=0}^{\infty} (1-\alpha)^m\,\cU_{(m+1)\tinf}\pdata.
$$
In particular, if $\tinf=0$, then \(p_\infty^{(\tinf)}=\pdata\), and the recursive compounding effect disappears asymptotically.
\end{theorem}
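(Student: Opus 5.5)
We prove the unrolled formula \eqref{eq:annealed-unrolled} by induction on $N$. Two properties of $\cU_t$ are used throughout: it is affine on mixtures, $\cU_t(\lambda\mu+(1-\lambda)\nu)=\lambda\cU_t\mu+(1-\lambda)\cU_t\nu$, since it is the law of $e^{-t/2}\bX_0+\sqrt{1-e^{-t}}\bZ$ with $\bX_0$ drawn from the mixture; and it has the semigroup property $\cU_t\cU_s=\cU_{t+s}$, which follows from composing the two Gaussian transitions.

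For $N=1$ the recursion gives $\phat^1=\cU_{t_0^{(0)}}(\alpha\pdata+(1-\alpha)\pdata)=\cU_{t_0^{(0)}}\pdata$. The right-hand side of \eqref{eq:annealed-unrolled} equals $\alpha\,\cU_{t_0^{(0)}}\pdata+(1-\alpha)\cU_{t_0^{(0)}}\pdata$, which is the same, since $\sigma_{0,1}=s_{0,1}=t_0^{(0)}$.

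For the inductive step, assume the formula at $N$ and apply $\cU_{t_0^{(N)}}$ to $\alpha\pdata+(1-\alpha)\phat^N$. The fresh-data term gives $\alpha\,\cU_{t_0^{(N)}}\pdata$, which is the $m=0$ term at level $N+1$ because $\sigma_{0,N+1}=t_0^{(N)}$. The $m$-th term of $\phat^N$, after multiplication by $(1-\alpha)$, becomes $\alpha(1-\alpha)^{m+1}\cU_{\sigma_{m,N}+t_0^{(N)}}\pdata$. Since $\sigma_{m,N}+t_0^{(N)}=\sigma_{m+1,N+1}$, this is the $(m+1)$-th term at level $N+1$. Finally, the remainder term becomes $(1-\alpha)^{N+1}\cU_{s_{0,N+1}}\pdata$, as required.

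For the limit, we use two $\cW_2$ estimates.

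\emph{Time regularity.} Couple $\cU_s\pdata$ and $\cU_t\pdata$ through the same pair $(\bX_0,\bZ)$. This gives
\[
\cW_2^2(\cU_s\pdata,\cU_t\pdata)\le (e^{-s/2}-e^{-t/2})^2M_2+\big(\sqrt{1-e^{-s}}-\sqrt{1-e^{-t}}\big)^2 d,
\]
where $M_2=\E\|\bX_0\|^2$. The right-hand side tends to $0$ as $s\to t$, and all second moments $\E\|\cdot\|^2\le M_2+d$ are uniformly bounded in $t$.

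\emph{Convexity.} Mixtures with common weights satisfy $\cW_2^2(\sum_k w_k\mu_k,\sum_k w_k\nu_k)\le\sum_k w_k\cW_2^2(\mu_k,\nu_k)$, obtained by gluing optimal couplings. This also holds for countable mixtures.

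Now write $p_\infty^{(\tinf)}$ with the same weights as \eqref{eq:annealed-unrolled}. Truncate its series at $N-1$ and group the tail $\alpha\sum_{m\ge N}(1-\alpha)^m\cU_{(m+1)\tinf}\pdata$ into a single probability measure $\nu_N$ of weight $(1-\alpha)^N$. Convexity then gives
\[
\cW_2^2(\phat^N,p_\infty^{(\tinf)})\le \alpha\sum_{m=0}^{N-1}(1-\alpha)^m\cW_2^2\big(\cU_{\sigma_{m,N}}\pdata,\cU_{(m+1)\tinf}\pdata\big)+(1-\alpha)^N C,
\]
where $C=4(M_2+d)$ bounds $\cW_2^2$ between any two measures with second moment at most $M_2+d$. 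The last term vanishes as $N\to\infty$.

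\textbf{Main obstacle.} The delicate step is the first sum, which we handle by dominated convergence over $m$ with dominating sequence $\alpha(1-\alpha)^mC$. It suffices to show that for each fixed $m$ the summand tends to $0$. For fixed $m$, $\sigma_{m,N}$ is a sum of $m+1$ consecutive terms $t_0^{(N-1-m)},\dots,t_0^{(N-1)}$, all of whose indices tend to infinity with $N$, so $\sigma_{m,N}\to(m+1)\tinf$. The time regularity estimate then gives convergence of that summand. Note that the argument deliberately avoids needing a uniform rate in $m$, which is why we fix $m$ and dominate rather than bound each term directly.

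If $\tinf=0$, then $\cU_0=\I$, so every term of $p_\infty^{(0)}$ equals $\pdata$ and the weights $\alpha\sum_m(1-\alpha)^m$ sum to $1$; hence $p_\infty^{(0)}=\pdata$.
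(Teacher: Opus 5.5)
Your proof is correct and follows the paper's argument essentially step for step. Both use the same induction via linearity and the semigroup property, the same split of $p_\infty^{(\tinf)}$ into its first $N$ terms plus a tail probability measure of weight $(1-\alpha)^N$, and convexity of $\cW_2^2$ under mixtures. The only differences are cosmetic: you apply dominated convergence over $m$ where the paper makes the equivalent explicit split at a fixed $M$, and you prove $\cW_2$-continuity of $t\mapsto\cU_t\pdata$ directly by a synchronous coupling instead of citing the paper's semigroup-plus-contraction argument.
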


Theorem~\ref{thm:annealed-truncation-corrected} shows that the long-run effect of a varying truncation schedule is controlled by its \emph{asymptotic level} $\tinf$: because fresh data are re-injected at every generation, recent truncation times dominate the long-run dynamics. If the schedule converges to a positive limit, collapse persists as in the fixed-truncation regime with $t_0=\tinf$, but with smaller residual noise if the sequence $(t^{(i)}_0)_{i\ge0}$ is decreasing. By contrast, if $t_0^{(i)}\to 0$,
then the error-free recursion converges back to $\pdata$; asymptotically, no
nontrivial collapse distribution remains.

\paragraph{ $\beta$-annealed schedules}
For any $\beta>0$, the schedule $t_0^{(i)}(\beta)=t_0/(1+i)^\beta$ converges to $0$, implying $\cW_2(\phat^N,\pdata) \to 0$ as $N \to \infty$ in the error-free regime. However, different values of $\beta$ induce different \emph{finite-generation} distortions: larger $\beta$ drives the truncation time to zero faster, therefore suppressing recursive attenuation more aggressively over a fixed number of generations. Figure~\ref{fig:truncation-2d-gmm} compares four schedules over $50$ generations at $\alpha \in \{0.1, 0.5, 0.9\}$: fixed at $t_0=0.5$, decreasing to $t_\infty = 0.2$, and $\beta$-annealed schedules $t_0/(1+i)^\beta$ with $\beta \in \{1, 2\}$. The fixed and shifted schedules plateau at the predicted floors $\mathrm{KL}(\spinf, \pdata)$ and $\mathrm{KL}(p_\infty^{(t_\infty)}, \pdata)$, while the annealed schedules drive KL toward $0$ at rates increasing with $\beta$, confirming that recursive compounding is eliminated whenever $\tinf = 0$. Convergence slows with smaller $\alpha$, as expected from $\kappa = \sqrt{1-\alpha}\,e^{-t_0/2}$.

\paragraph{Self-regularization regime} Although taking  $t_0^{(i)}(\beta) \to 0$ as $i$ grows raises the question of score estimation stability, in some cases the recursive structure \eqref{eq:recursive-structure} provides a self-regularization mechanism: the synthetic component of $q_i$ has already accumulated cumulative smoothing, which keeps its Fisher information bounded even as $t_0^{(i)}(\beta) \to 0$. We make this precise in Appendix \ref{app:self-regularization}.

\begin{figure}[t]
    \centering
    \begin{subfigure}[t]{0.48\linewidth}
        \centering
        \includegraphics[width=\linewidth]{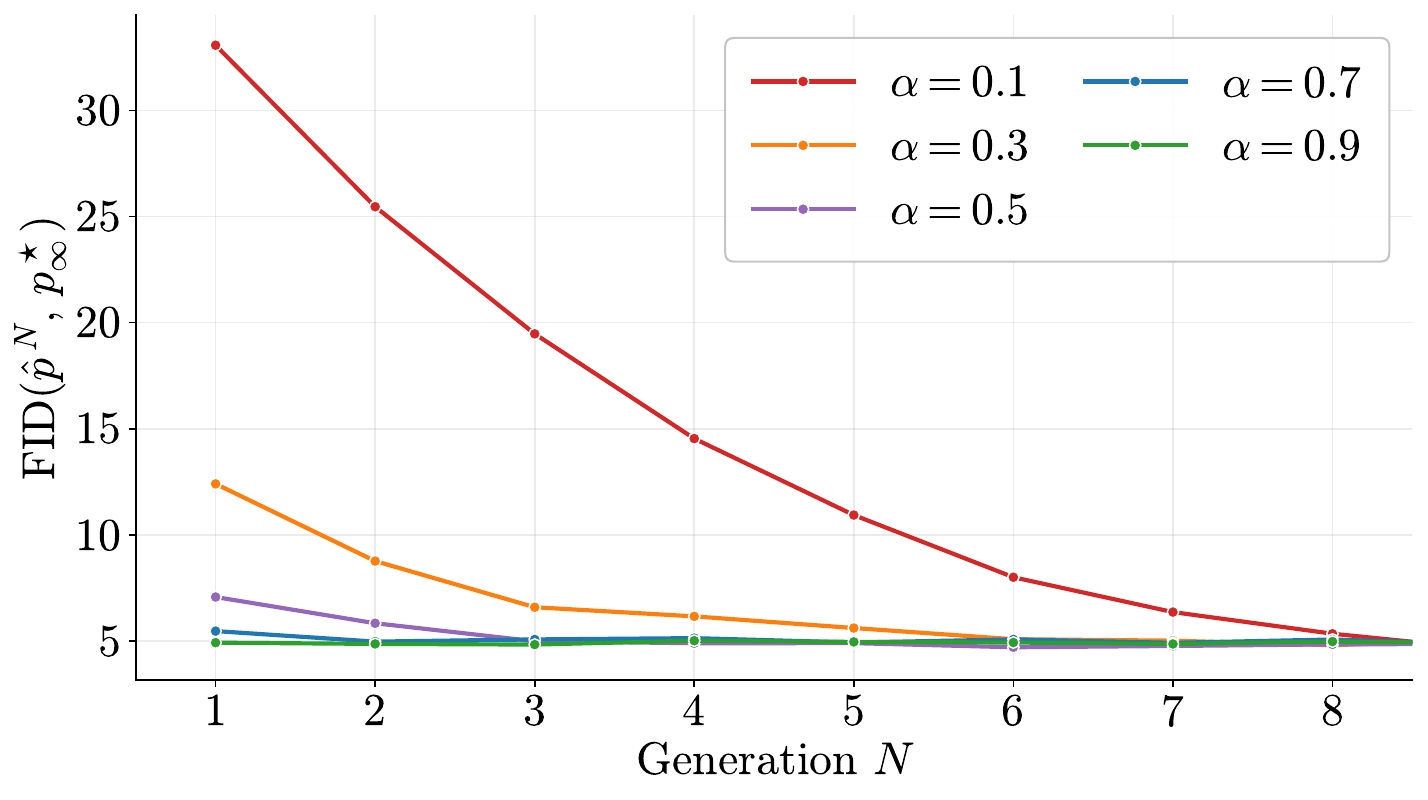}
        \centering
        \caption{$\mathrm{FID}(\phat^N, \spinf)$ vs.\ generation $N$.}
        \label{fig:cifar-fid-pinf}
    \end{subfigure}\hfill
    \begin{subfigure}[t]{0.48\linewidth}
        \centering
        \includegraphics[width=\linewidth]{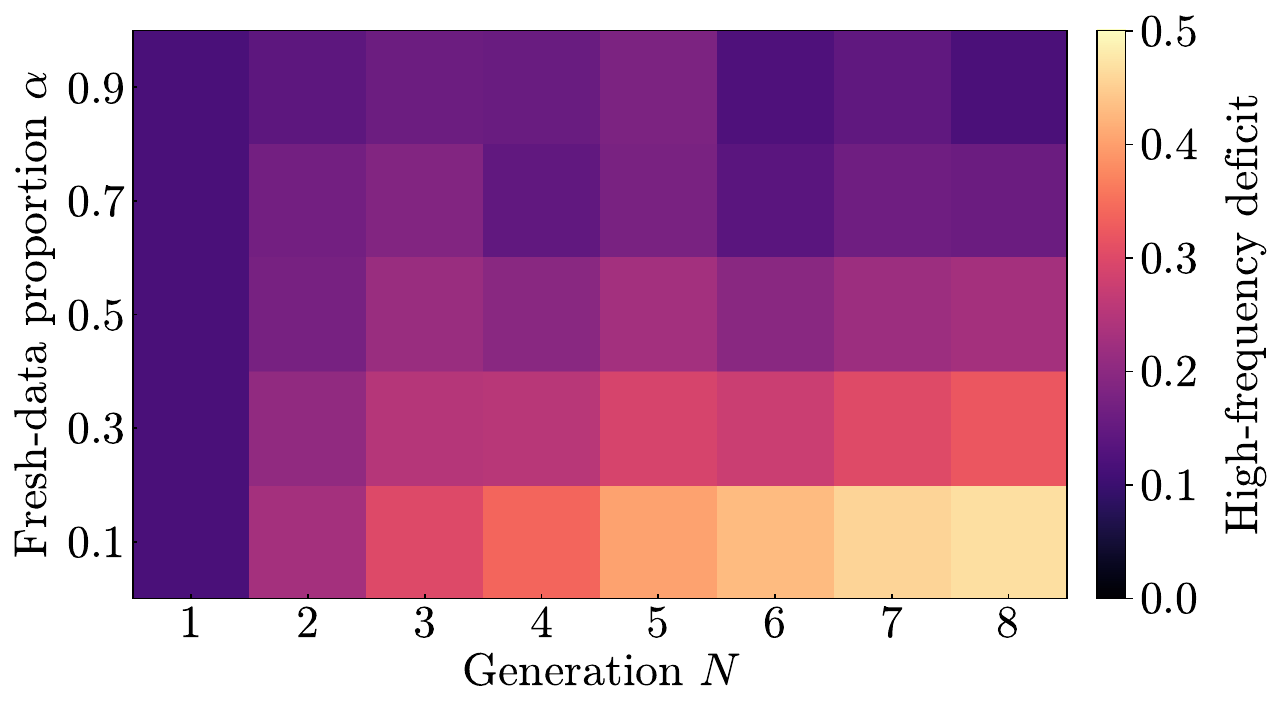}
        \caption{High-frequency energy ratio $\mathrm{HF}(\phat^N) / \mathrm{HF}(p_{\mathrm{data}})$.}
        \label{fig:cifar-hf-heatmap}
    \end{subfigure}
    \caption{\textbf{Empirical validation of Theorem~\ref{thm:convergence-imperfect-regime} and Proposition~\ref{prop:spectral-perturbation} (CIFAR-10).} Recursive training for $8$ generations at $\alpha \in \{0.1, 0.3, 0.5, 0.7, 0.9\}$. \textbf{(a)} The iterates $\phat^N$ contract toward $\spinf$ at a rate increasing with $\alpha$, matching the contraction factor $\kappa(\alpha) = \sqrt{1-\alpha}\,e^{-t_0/2}$, and settle at a common floor consistent with stability around $\spinf$ (Theorem~\ref{thm:convergence-imperfect-regime}). 
    \textbf{(b)} High-frequency deficit $1-\mathrm{HF}(\phat^N)/\mathrm{HF}(p_{\mathrm{data}})$ over recursive generations and fresh-data proportions \(\alpha\). Larger values indicate a stronger loss of high-frequency content relative to the data distribution. The deficit grows fastest when $\alpha$ is small and $N$ increases, validating the  low-pass mechanism of Propositions~\ref{prop:spectral-rpz-limit} and \ref{prop:spectral-perturbation} on real images. Together, the results show that  $\spinf$ predicts both the convergence dynamics and the spectral signature of the empirical limit. Details in Appendix \ref{app:experiments}.}
    \label{fig:cifar-validation}
\end{figure}

\section{Robustness of Limiting Distribution in the Presence of Errors} 
\label{sec:imperfect-sampling}
The idealized analysis in the last two sections isolated the effect of truncation, but in practice, recursive training is also affected by score estimation and discretization errors. In this section we provide robustness results quantifying how perturbations of the learned sampling operator, induced by errors in score estimation and discretization, affect the divergence between $\phat^N$ and the ideal limit $\spinf$. We return to the setting of a fixed truncation time $t_0 >0$. 

\paragraph{Score Estimation and Discretization Errors} Comparing the ideal and reverse SDEs \eqref{eq:reverse-sde-ideal} and \eqref{eq:reverse-sde-learned}, the score estimation error at generation $i$ is
$\be_i(\bx, s) = \bs_{\theta_i}(\bx, s) - \nabla_{\bx} \log q_{i,s}(\bx)$. For a given score network, $\be_i$ depends, through the target $q_i$, on the mixing factor $\alpha$, the ambient dimension $d$ and the $i$-th generation training sample size $n_i$. The learned reverse SDE~\eqref{eq:reverse-sde-learned} is further discretized via a numerical scheme on a partition $\{t_k\}_{k=1}^{K_i}$ of $[t_0, T]$. The $i$-th generation \textit{one-step sampling error} $\delta_i$ captures the combined effect of score estimation and discretization. We define it as the $\cW_2$ distance between the output of the imperfect sampler $\hat{\cS}_i$ and that of the ideal sampler $\cU_{t_0}$:
\begin{equation}\label{eq:one-step-error}
  \delta_i\equiv\delta_i(d, n_i, \{t_k\}_{k=1}^{K_i}, \alpha) \;=\; \cW_2\!\bigl(\hat{\cS}_i(q_i),\;\cU_{t_0}(q_i)\bigr).
\end{equation}
\begin{remark}[Concrete instantiations of $\delta_i$]\label{rem:concrete-delta}
Several works make the dependence between $\delta_i$ and $(d, n_i, \{t_k\}_{k=1}^{K_i}, \alpha)$ explicit via upper bounds. For the Euler--Maruyama discretization, Arsenyan et al. \cite{arsenyan-vardanyan-dalalyan} establish optimal $\cW_2$ bounds, and 
for the exponential integrator,  Chen et al. \cite{sampling-is-as-easy-as-learning-the-score} provide KL bounds that can be transferred to $\cW_2$ under moment assumptions. Higher-order schemes such as midpoint randomization \cite{yu2025advancing}, stochastic Runge--Kutta methods \cite{wu2024stochastic}, and accelerated DDIM-type samplers \cite{li2024accelerating} have been shown to have improved dependence on the step size or on the number of function evaluations. Deterministic samplers based on the probability flow ODE \cite{gao2025convergence} admit analogous $\cW_2$ guarantees under log-concavity. Our results are general and applicable to any of these schemes. 
\end{remark}

\paragraph{Concentration around $\spinf$} We now show that the imperfect iterates $\phat^N$ track the ideal collapse distribution $\spinf$ at a geometric rate, perturbed by the one-step errors $\delta_i$ (proof in Appendix \ref{app:proof-convergence-imperfect-regime}). 

\begin{theorem}[Geometric convergence  and stability]
\label{thm:convergence-imperfect-regime}
Assume \textup{\ref{A1}--\ref{A3}}. Then for each $N \ge 1$,
\begin{equation}
\label{eq:W2-recursion-main}
\cW_2(\phat^N, \spinf)
\;\le\;
\kappa^N\,\cW_2(\pdata, \spinf)
\;+\;
\sum_{i=0}^{N-1}\kappa^{N-1-i}\,\delta_i,
\end{equation}
where $\kappa = \sqrt{(1-\alpha)}e^{-t_0/2} < 1$. In particular:
\begin{enumerate}[label=(\roman*)]
    \item if $\sum_{i \ge 0} \delta_i < \infty$, then
          $\cW_2(\phat^N, \spinf) \to 0$;
    \item if $\sup_i\delta_i < \delta$, then
    $\displaystyle
    \limsup_{N\to\infty}\cW_2(\phat^N, \spinf)
    \le \frac{\delta}{1-\kappa}.
    $
\end{enumerate}
\end{theorem}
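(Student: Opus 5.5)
The plan is a one-step perturbation inequality followed by unrolling; the whole argument rests on a single $\cW_2$ contraction estimate. Write $\Phi(\mu) := \cU_{t_0}(\alpha\pdata + (1-\alpha)\mu)$ for the ideal one-step map, so that $\spinf = \Phi(\spinf)$ by Theorem~\ref{thm:collapse-distrib-existence-uniqueness}. In the imperfect regime the iterates are
$$\phat^{i+1} = \hat{\cS}_i(q_i), \qquad q_i = \alpha\pdata + (1-\alpha)\phat^i.$$

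The first step is the triangle inequality
$$\cW_2(\phat^{i+1},\spinf) \le \cW_2\bigl(\hat{\cS}_i(q_i),\cU_{t_0}(q_i)\bigr) + \cW_2\bigl(\Phi(\phat^i),\Phi(\spinf)\bigr) = \delta_i + \cW_2\bigl(\Phi(\phat^i),\Phi(\spinf)\bigr),$$
where the first term is exactly $\delta_i$ by definition \eqref{eq:one-step-error}. It therefore suffices to show that $\Phi$ is a $\kappa$-contraction in $\cW_2$. I would obtain this by chaining two estimates.

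\emph{OU contraction.} For any $\mu,\nu\in\Probspace_2(\R^d)$ we have $\cW_2(\cU_t\mu,\cU_t\nu)\le e^{-t/2}\cW_2(\mu,\nu)$. To see this, take an optimal coupling $(\bX,\bY)$ of $\mu$ and $\nu$, add the same independent $\sqrt{1-e^{-t}}\,\bZ$ to $e^{-t/2}\bX$ and to $e^{-t/2}\bY$, and note that the noise cancels in the difference.

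\emph{Mixture estimate.} We need
$$\cW_2\bigl(\alpha\pdata+(1-\alpha)\mu,\ \alpha\pdata+(1-\alpha)\nu\bigr)\le\sqrt{1-\alpha}\,\cW_2(\mu,\nu).$$
The coupling is built as follows. With probability $\alpha$, draw $\bX\sim\pdata$ and set both coordinates equal to it. With probability $1-\alpha$, draw the pair from an optimal coupling of $(\mu,\nu)$. The resulting cost is $(1-\alpha)\cW_2^2(\mu,\nu)$, and taking the square root gives the factor $\sqrt{1-\alpha}$. This step is the only place where any care is needed: one must use the joint convexity of $\cW_2^2$, not of $\cW_2$, to obtain the square-root factor that appears in $\kappa$. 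Everything else is routine.

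Combining the two estimates gives $\cW_2(\Phi(\mu),\Phi(\nu))\le\kappa\,\cW_2(\mu,\nu)$ with $\kappa=\sqrt{1-\alpha}\,e^{-t_0/2}$, and hence
$$\cW_2(\phat^{i+1},\spinf)\le\kappa\,\cW_2(\phat^i,\spinf)+\delta_i .$$
Induction from $\phat^0=\pdata$ then yields \eqref{eq:W2-recursion-main}. All iterates remain in $\Probspace_2$ since each step preserves finite second moments, assuming the same holds for $\hat{\cS}_i$, which is implicit in $\delta_i<\infty$.

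For the consequences, use $\kappa<1$. In case (ii), the sum is bounded by $\delta\sum_j\kappa^j=\delta/(1-\kappa)$, while $\kappa^N\cW_2(\pdata,\spinf)\to0$, which gives the stated $\limsup$. In case (i), the sum $\sum_i\kappa^{N-1-i}\delta_i$ is the convolution of a summable geometric sequence with a summable sequence, so it tends to $0$. Concretely, split the sum at $i=M$: the tail $i\ge M$ is at most $\sum_{i\ge M}\delta_i$, which is small for $M$ large, and the head $i<M$ is at most $\kappa^{N-M}\sum_i\delta_i$, which vanishes as $N\to\infty$.
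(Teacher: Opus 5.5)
Your proposal is correct and follows essentially the same route as the paper's proof: the triangle inequality isolating $\delta_i$, the $\kappa$-contraction of the ideal one-step map from the synchronous OU coupling and the diagonal-plus-optimal mixture coupling (Proposition~\ref{prop:contraction-one-step-sampling}), induction from $\phat^0=\pdata$, and the same head/tail split for case (i). There is no gap.
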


Theorem~\ref{thm:convergence-imperfect-regime} establishes that $\phat^N$ remains within a $\cW_2$-ball of radius $\delta/(1-\kappa)$ around $\spinf$, and Figure \ref{fig:cifar-validation} illustrates this robustness to error in a real image dataset (CIFAR-10) across varying levels of $\alpha$. Proposition \ref{prop:perturbation-decomposition} (Appendix \ref{app:proof-perturbation-decomp-prop}) complements Theorem \ref{thm:convergence-imperfect-regime} by providing an explicit characterization of the perturbation $(\phat^N - \spinf)$.

\paragraph{Spectral Decomposition in the Learned Regime} The spectral viewpoint on the error-free collapse distribution $\spinf$ can be extended to the general setting.

The following  result (proof in Appendix \ref{app:proof-spectral-perturbation-cor}) shows that the error ball around $\spinf$ identified in Theorem \ref{thm:convergence-imperfect-regime} is spectrally shaped: high-frequency modes ($|\bn|$ large) contract with geometric ratio $\kappa_\bn \ll \kappa$, so errors in fine-scale structure are suppressed exponentially faster than errors in coarse structure. 

\begin{proposition}[Mode-dependent error propagation]
\label{prop:spectral-perturbation}
Under \ref{A1}--\ref{ass:spectral-regularity-A4}, further assume that for all $i \ge 0$, the learned distribution in the imperfect regime satisfies $\md\phat^i/\md\gamma\in L^2(\gamma)$. Then, for $N \ge 1$, $\md (\phat^N - \spinf)/\md \gamma \in L^2(\gamma)$ and, decomposing mode by mode, for each multi-index $\bn \in \N^d$,
\begin{equation}\label{eq:spectral-perturbation}
  \langle \phat^N - \spinf,\, H_\bn \rangle
  \;=\;
  \kappa_\bn^N\,\langle \pdata - \spinf,\, H_\bn\rangle
  \;+\;
  \sum_{i=0}^{N-1} \kappa_\bn^{N-1-i}\,\langle \phat^{i+1} - \cU_{t_0}(q_i),\, H_\bn\rangle,
\end{equation}
where $\kappa_\bn := (1-\alpha)\,e^{-|\bn|t_0/2}$ is the mode-dependent contraction rate. 
\end{proposition}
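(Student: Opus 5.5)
The plan is to write a one-step error recursion for $\be^N := \phat^N - \spinf$, a signed measure, and project it onto each Hermite mode. Throughout, $\langle \mu, H_\bn\rangle$ for a signed measure $\mu$ with $\md\mu/\md\gamma\in L^2(\gamma)$ means $\langle \md\mu/\md\gamma, H_\bn\rangle_{L^2(\gamma)} = \int H_\bn\,\md\mu$.

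First, the $L^2$ membership. By hypothesis $\md\phat^N/\md\gamma\in L^2(\gamma)$. For $\spinf$, Proposition~\ref{prop:spectral-rpz-limit} gives the expansion of $\sfinf$ with multipliers $0\le m_\bn(\alpha,t_0)\le 1$, so $\|\sfinf\|_{L^2(\gamma)}\le\|\fdata\|_{L^2(\gamma)}<\infty$ by Parseval. Hence the difference lies in $L^2(\gamma)$, and so does $\pdata-\spinf$ by \ref{ass:spectral-regularity-A4}.

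Second, the one-step identity. I would decompose
\[
\phat^{i+1}-\spinf=\bigl(\phat^{i+1}-\cU_{t_0}(q_i)\bigr)+\bigl(\cU_{t_0}(q_i)-\spinf\bigr).
\]
By the fixed-point property in Theorem~\ref{thm:collapse-distrib-existence-uniqueness}, $\spinf=\cU_{t_0}(\alpha\pdata+(1-\alpha)\spinf)$. Since $\cU_t$ is the Markov transport of a linear kernel, it extends linearly to finite signed measures. Subtracting, the $\alpha\pdata$ terms cancel and
\[
\cU_{t_0}(q_i)-\spinf=(1-\alpha)\,\cU_{t_0}(\phat^i-\spinf).
\]

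Third, the mode-wise projection. Apply \eqref{eq:sampling-OU-decomposition-Hn}, i.e.\ Proposition~\ref{prop:decomposition-Ut-hermite-basis}, to the signed measure $\phat^i-\spinf$. This extends by linearity from probability measures: write the signed measure as a difference of two measures with $L^2(\gamma)$ density ratios, each a multiple of a probability measure. The result is
\[
\langle \cU_{t_0}(\phat^i-\spinf),H_\bn\rangle=e^{-|\bn|t_0/2}\langle\phat^i-\spinf,H_\bn\rangle.
\]
The remaining term $\phat^{i+1}-\cU_{t_0}(q_i)$ also has an $L^2(\gamma)$ density ratio: $\phat^{i+1}$ does by hypothesis, and $\cU_{t_0}(q_i)$ does because $q_i$ is a convex combination of $L^2$ densities and $\cU_{t_0}$ is an $L^2(\gamma)$ contraction. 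Setting $c^i_\bn:=\langle\phat^i-\spinf,H_\bn\rangle$, we obtain the scalar linear recursion
\[
c^{i+1}_\bn=\kappa_\bn c^i_\bn+\langle\phat^{i+1}-\cU_{t_0}(q_i),H_\bn\rangle,
\]
with $c^0_\bn=\langle\pdata-\spinf,H_\bn\rangle$ since $\phat^0=\pdata$.

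Finally, unrolling this recursion by induction on $N$ gives \eqref{eq:spectral-perturbation} exactly. The main obstacle is not the algebra but justifying the Hermite diagonalization for signed measures, and the fact that all pairings are finite. I would handle this by the positive/negative decomposition above together with the $L^2$ contraction property of the Mehler semigroup. A side point is that, in the imperfect regime, $q_i$ involves $\phat^i$, whose $L^2$ ratio is exactly what the extra hypothesis provides.
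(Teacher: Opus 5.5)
Your proposal is correct and follows essentially the same route as the paper. Both derive the one-step identity $\phat^{i+1}-\spinf=(1-\alpha)\,\cU_{t_0}(\phat^i-\spinf)+\bigl(\phat^{i+1}-\cU_{t_0}(q_i)\bigr)$ from the fixed-point equation and linearity of $\cU_{t_0}$, project onto $H_\bn$ via the Hermite diagonalization of $\cU_{t_0}$, and unroll the resulting scalar recursion. Your extra justification of why $\md(\phat^N-\spinf)/\md\gamma\in L^2(\gamma)$, and of the Hermite diagonalization for signed measures, fills in details that the paper's proof leaves implicit.
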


\begin{remark}[Self-correction of high-frequency errors]
\label{rmk:self-correction}
Proposition~\ref{prop:spectral-perturbation} refines the scalar $\cW_2$ bound of Theorem~\ref{thm:convergence-imperfect-regime}. It  indicates that the effective error ball is  an ellipsoid whose semi-axes $\delta/(1-\kappa_\bn)$ shrink rapidly with $|\bn|$, rather than a ball of radius $\delta/(1-\kappa)$ as suggested by Theorem~\ref{thm:convergence-imperfect-regime}. 
\end{remark}

\section{Discussion and Future Work}
\label{sec:discussion}

This work provides an asymptotic characterization of model collapse in recursively trained diffusion models. First, in an idealized regime without discretization and score estimation errors, we show that collapse still occurs because of the truncation of the reverse diffusion near $t=0$, which is standard in practical sampling. Truncation induces a residual Gaussian smoothing in each generation, which compounds and drives the recursive dynamics to a unique limiting distribution $\spinf$, a geometric mixture of progressively smoothed copies of $\pdata$. The spectral representation of $\spinf$ further shows that this limiting distribution corresponds to a mode-wise attenuation of the data distribution in the Hermite basis, making precise the sense in which recursive diffusion training acts as a low-pass filter.

Our results also highlight that in diffusion models, model collapse is not solely driven by the proportion of fresh-data $\alpha$. Increasing $\alpha$ attenuates recursive degradation, but it does not eliminate the smoothing effect induced by a fixed truncation time $t_0>0$. This observation motivates a diffusion-specific route to mitigating collapse, complementary to fresh-data injection: by decreasing the truncation time across recursive generations (annealed truncation schedules), one can eliminate the asymptotic compounding effect in the error-free regime. 

\paragraph{Future directions and limitations} Several directions remain open. First, our analysis treats score estimation and discretization errors through abstract one-step perturbations. A natural next step is to combine the present fixed-point analysis with finite-sample score-learning bounds, thereby obtaining explicit end-to-end guarantees in terms of model class, sample size, dimension, and numerical solver. Second, the annealed truncation schedules studied here are prescribed, rather than learned or adapted to the current generation. Designing practical adaptive rules that balance numerical stability against recursive bias is an important direction. Third, the spectral characterization is developed for the variance-preserving OU diffusion and the Hermite basis associated with its invariant Gaussian measure. Extending this perspective to other diffusion parameterizations, alternative noising processes, or non-Gaussian reference measures may reveal more general spectral mechanisms underlying collapse.

\section*{Acknowledgements}
NBK is supported by a G-Research Trinity College Studentship, and  RET is supported by the EPSRC Probabilistic AI Hub (EP/Y028783/1).

\bibliographystyle{plain}
\bibliography{references}

@article{mc_shumaylov, 
title={AI models collapse when trained on recursively generated data}, 
journal={Nature}, 
publisher={Springer Science and Business Media LLC}, 
author={Shumailov, Ilia and Shumaylov, Zakhar and Zhao, Yiren and Papernot, Nicolas and Anderson, Ross and Gal, Yarin}, 
year={2024}, 
pages={755--759}
}

@inproceedings{DDPM,
author = {Ho, Jonathan and Jain, Ajay and Abbeel, Pieter},
title = {Denoising diffusion probabilistic models},
year = {2020},
booktitle = {Proceedings of the 34th International Conference on Neural Information Processing Systems},
articleno = {574},
numpages = {12},
}

@InProceedings{seminal-diffusion-models,
  title = 	 {Deep Unsupervised Learning using Nonequilibrium Thermodynamics},
  author = 	 {Sohl-Dickstein, Jascha and Weiss, Eric and Maheswaranathan, Niru and Ganguli, Surya},
  booktitle = 	 {Proceedings of the 32nd International Conference on Machine Learning},
  year = 	 {2015},
}

@INPROCEEDINGS {carlini2024datapoisoning,
author = { Carlini, Nicholas and Jagielski, Matthew and Choquette-Choo, Christopher A. and Paleka, Daniel and Pearce, Will and Anderson, Hyrum and Terzis, Andreas and Thomas, Kurt and Tramer, Florian },
booktitle = {2024 IEEE Symposium on Security and Privacy (SP) },
title = {Poisoning Web-Scale Training Datasets is Practical },
year = {2024},
pages = {407-425},
}

@inproceedings{model-collapse-inevitable,
      title={Is Model Collapse Inevitable? Breaking the Curse of Recursion by Accumulating Real and Synthetic Data}, 
      author={Matthias Gerstgrasser and Rylan Schaeffer and Apratim Dey and Rafael Rafailov and Henry Sleight and John Hughes and Tomasz Korbak and Rajashree Agrawal and Dhruv Pai and Andrey Gromov and Daniel A. Roberts and Diyi Yang and David L. Donoho and Sanmi Koyejo},
      year={2024},
      booktitle={ICML Workshop on Foundation Models in the Wild}
}

@inproceedings{model-collapse-demystified,
title={Model Collapse Demystified: The Case of Regression},
author={Elvis Dohmatob and Yunzhen Feng and Julia Kempe},
booktitle={The Thirty-eighth Annual Conference on Neural Information Processing Systems},
year={2024},
}

@book{oskendal-sde,
  author = {{\O}ksendal, Bernt},
  publisher = {Springer},
  title = {{Stochastic Differential Equations: An Introduction with Applications}},
  year = 2014
}

@article{time-reversal-of-diffusions,
author = {U. G. Haussmann and E. Pardoux},
title = {{Time Reversal of Diffusions}},
volume = {14},
journal = {The Annals of Probability},
number = {4},
publisher = {Institute of Mathematical Statistics},
year = {1986},
}

@article{denoising-score-matching-vincent,
  title={A Connection Between Score Matching and Denoising Autoencoders},
  author={Pascal Vincent},
  journal={Neural Computation},
  year={2011},
  volume={23},
  pages={1661-1674},
}

@inproceedings{max-likelihood-diffusion-models,
author = {Song, Yang and Durkan, Conor and Murray, Iain and Ermon, Stefano},
title = {Maximum likelihood training of score-based diffusion models},
year = {2021},
booktitle = {Proceedings of the 35th International Conference on Neural Information Processing Systems},
articleno = {109},
numpages = {14},
}

@article{wasserstein,
  author  = {Xuefeng Gao and Hoang M. Nguyen and Lingjiong Zhu},
  title   = {Wasserstein Convergence Guarantees for a General Class of Score-Based Generative Models},
  journal = {Journal of Machine Learning Research},
  year    = {2025},
  volume  = {26},
  number  = {43},
  pages   = {1--54},
}

@book{santambrogio-ot,
  title={Optimal Transport for Applied Mathematicians: Calculus of Variations, PDEs, and Modeling},
  author={Santambrogio, Filippo},
  year={2015},
  publisher={Birkh{\"a}user Cham},
  series={Progress in Nonlinear Differential Equations and Their Applications},
  doi={10.1007/978-3-319-20828-2},
}

@article{cuturi-peyre,
year = {2019},
volume = {11},
journal = {Foundations and Trends® in Machine Learning},
title = {Computational Optimal Transport: With Applications to Data Science},
number = {5-6},
pages = {355-607},
author = {Gabriel Peyré and Marco Cuturi}
}

@book{ethier-kurtz,
  address = {New York},
  author = {Ethier, Stewart N. and Kurtz, Thomas G.},
  pages = {x+534},
  publisher = {John Wiley \& Sons Inc.},
  series = {Wiley Series in Probability and Mathematical Statistics: Probability and Mathematical Statistics},
  title = {Markov processes -- characterization and convergence},
  year = {1986}
}

@InProceedings{chen-2023,
  title = 	 {Improved Analysis of Score-based Generative Modeling: User-Friendly Bounds under Minimal Smoothness Assumptions},
  author =       {Chen, Hongrui and Lee, Holden and Lu, Jianfeng},
  booktitle = 	 {Proceedings of the 40th International Conference on Machine Learning},
  year = 	 {2023},
  volume = 	 {202},
}

@article{Heuseletal2017,
  author  = {Heusel, Martin and Ramsauer, Hubert and Unterthiner, Thomas and Nessler, Bernhard and Hochreiter, Sepp},
  title   = {GANs Trained by a Two Time-Scale Update Rule Converge to a Local Nash Equilibrium},
  journal = {Advances in Neural Information Processing Systems 30 (NIPS 2017)},
  year    = {2017},
}

@inproceedings{Barzilai26when-models-dont,
title={When Models Don{\textquoteright}t Collapse: On the Consistency of Iterative {MLE}},
author={Daniel Barzilai and Ohad Shamir},
booktitle={The Thirty-ninth Annual Conference on Neural Information Processing Systems},
year={2026},
}

@article{hyvarinen2007_extension,
title = {Some extensions of score matching},
journal = {Computational Statistics \& Data Analysis},
volume = {51},
number = {5},
pages = {2499-2512},
year = {2007},
author = {Aapo Hyvärinen},
}

@article{song2020improved_ncsn,
  title={Improved Techniques for Training Score-Based Generative Models},
  author={Song, Yang and Ermon, Stefano},
  journal={Advances in Neural Information Processing Systems},
  volume={33},
  pages={12438--12448},
  year={2020}
}

@inproceedings{
garg2026preventing,
title={Preventing Model Collapse Under Overparametrization: Optimal Mixing Ratios for Interpolation Learning and Ridge Regression},
author={Anvit Garg and Sohom Bhattacharya and Pragya Sur},
booktitle={The Fourteenth International Conference on Learning Representations},
year={2026},
}

@book{janson1997gaussian-hilbert-spaces,
  title={Gaussian Hilbert Spaces},
  author={Janson, S.},
  series={Cambridge Tracts in Mathematics},
  year={1997},
  publisher={Cambridge University Press}
}

@book{bogachev2015gaussian-measures,
  title={Gaussian Measures},
  author={Bogachev, V. I.},
  series={Mathematical Surveys and Monographs},
  year={2015},
  publisher={American Mathematical Society}
}

@article{curse-schumailov,
  title={The Curse of Recursion: Training on Generated Data Makes Models Forget},
  author={Ilia Shumailov and Zakhar Shumaylov and Yiren Zhao and Yarin Gal and Nicolas Papernot and Ross Anderson},
  journal={ArXiv},
  year={2023},
}

@article{aronson1968,
  title   = {Non-negative Solutions of Linear Parabolic Equations},
  author  = {Aronson, Donald G.},
  journal = {Annali della Scuola Normale Superiore di Pisa},
  volume  = {22},
  number  = {4},
  pages   = {607--694},
  year    = {1968}
}

@article{aronson1967,
  title   = {Bounds for the Fundamental Solution of a Parabolic Equation},
  author  = {Aronson, Donald G.},
  journal = {Bulletin of the American Mathematical Society},
  volume  = {73},
  number  = {6},
  pages   = {890--896},
  year    = {1967}
}

@article{score-approx-chen-low-dim,
title = {Score Approximation, Estimation and Distribution Recovery of Diffusion Models on Low-Dimensional Data},
author = {Minshuo Chen and Kaixuan Huang and Tuo Zhao and Mengdi Wang},
year = {2023},
journal={40th International Conference on Machine Learning, ICML}
}

@book{villani2008optimal,
  title={Optimal Transport: Old and New},
  author={Villani, C.},
  series={Grundlehren der mathematischen Wissenschaften},
  year={2008},
  publisher={Springer Berlin Heidelberg}
}

@inproceedings{ 
zhang2024tackling, 
title={Tackling the Singularities at the Endpoints of Time Intervals in Diffusion Models}, 
author={Pengze Zhang and Hubery Yin and Chen Li and Xiaohua Xie},
booktitle={Proceedings of the IEEE/CVF Conference on Computer Vision and Pattern Recognition (CVPR)}, 
year={2024}
}

@book{da2014-infinite-sde,
  title={Stochastic Equations in Infinite Dimensions},
  author={Da Prato, G. and Zabczyk, J.},
  series={Encyclopedia of Mathematics and its Applications},
  year={2014},
  publisher={Cambridge University Press}
}

@article{confortiKLguarantees,
author = {Conforti, Giovanni and Durmus, Alain and Gentiloni Silveri, Marta},
year = {2025},
month = {01},
pages = {86-109},
title = {{KL} Convergence Guarantees for Score Diffusion Models under Minimal Data Assumptions},
volume = {7},
journal = {SIAM Journal on Mathematics of Data Science},
}

@inproceedings{shi2026a,
title={A Closer Look at Model Collapse: From a Generalization-to-Memorization Perspective},
author={Lianghe Shi and Meng Wu and Huijie Zhang and Zekai Zhang and Molei Tao and Qing Qu},
booktitle={The Thirty-ninth Annual Conference on Neural Information Processing Systems},
year={2026},
}

@article{zelikman2022star,
  title={Star: Bootstrapping reasoning with reasoning},
  author={Zelikman, Eric and Wu, Yuhuai and Mu, Jesse and Goodman, Noah},
  journal={Advances in Neural Information Processing Systems},
  volume={35},
  pages={15476--15488},
  year={2022}
}

@book{Bakry2014Analysis,
  author = {Bakry, Dominique and Gentil, Ivan and Ledoux, Michel},
  title = {Analysis and Geometry of Markov Diffusion Operators},
  series = {Grundlehren der mathematischen Wissenschaften},
  volume = {348},
  publisher = {Springer},
  year = {2014},
  pages = {552},
}

@inproceedings{benton2024nearly,
title={Nearly $d$-Linear Convergence Bounds for Diffusion Models via Stochastic Localization},
author={Joe Benton and Valentin De Bortoli and Arnaud Doucet and George Deligiannidis},
booktitle={The Twelfth International Conference on Learning Representations},
year={2024},
}

@techreport{cifar10,
  title        = {Learning Multiple Layers of Features from Tiny Images},
  author       = {Krizhevsky, Alex},
  institution  = {University of Toronto},
  year         = {2009},
  url          = {https://www.cs.toronto.edu/~kriz/cifar.html}
}

@inproceedings{
alemohammad2023selfconsuminggenerativemodelsmad,
title={Self-Consuming Generative Models Go {MAD}},
author={Sina Alemohammad and Josue Casco-Rodriguez and Lorenzo Luzi and Ahmed Imtiaz Humayun and Hossein Babaei and Daniel LeJeune and Ali Siahkoohi and Richard Baraniuk},
booktitle={The Twelfth International Conference on Learning Representations},
year={2024},
}

@inproceedings{minimax-optim-score-based-diff-models,
author = {Zhang, Kaihong and Yin, Caitlyn H. and Liang, Feng and Liu, Jingbo},
title = {Minimax optimality of score-based diffusion models: beyond the density lower bound assumptions},
year = {2024},
booktitle = {Proceedings of the 41st International Conference on Machine Learning},
articleno = {2488},
numpages = {45},
}

@inproceedings{karras2022elucidating,
title={Elucidating the Design Space of Diffusion-Based Generative Models},
author={Tero Karras and Miika Aittala and Timo Aila and Samuli Laine},
booktitle={Advances in Neural Information Processing Systems},
year={2022},
}

@article{Efron01122011,
author = {Bradley Efron},
title = {Tweedie’s Formula and Selection Bias},
journal = {Journal of the American Statistical Association},
volume = {106},
number = {496},
pages = {1602--1614},
year = {2011},
publisher = {Taylor \& Francis},
}

@misc{samworth-shape-constraint,
      title={Learning the score under shape constraints}, 
      author={Rebecca M. Lewis and Oliver Y. Feng and Henry W. J. Reeve and Min Xu and Richard J. Samworth},
      year={2025},
      note={arXiv},
      primaryClass={math.ST},
}

@article{convergence-manifold-hypotheses-vdb,
title={Convergence of denoising diffusion models under the manifold hypothesis},
author={Valentin De Bortoli},
journal={Transactions on Machine Learning Research},
issn={2835-8856},
year={2022},
}

@book{sto-calculus-2,
  title={Brownian motion and stochastic calculus},
  author={Karatzas, Ioannis and Shreve, Steven},
  year={2014},
  publisher={Springer}
}

@article{song2019generative,
  title={Generative modeling by estimating gradients of the data distribution},
  author={Song, Yang and Ermon, Stefano},
  journal={Advances in Neural Information Processing Systems},
  volume={32},
  year={2019}
}

@article{sampling-is-as-easy-as-learning-the-score,
  title={Sampling is as easy as learning the score: theory for diffusion models with minimal data assumptions},
  author={Chen, Sitan and Chewi, Sinho and Li, Jerry and Li, Yuanzhi and Salim, Adil and Zhang, Anru R},
  journal={International Conference on Learning Representations},
  year={2023}
}

@misc{gulcehre2023reinforcedselftrainingrestlanguage,
      title={Reinforced Self-Training ({ReST}) for Language Modeling}, 
      author={Caglar Gulcehre and Tom Le Paine and Srivatsan Srinivasan and Ksenia Konyushkova and Lotte Weerts and Abhishek Sharma and Aditya Siddhant and Alex Ahern and Miaosen Wang and Chenjie Gu and Wolfgang Macherey and Arnaud Doucet and Orhan Firat and Nando de Freitas},
      year={2023},
      note={arXiv:2308.08998},
}

@inproceedings{kim2022soft,
  title={Soft Truncation: A Universal Training Technique of Score-based Diffusion Model for High Precision Score Estimation},
  author={Kim, Dongjun and Shin, Seungjae and Song, Kyungwoo and Kang, Wanmo and Moon, Il-Chul},
  booktitle={International Conference on Machine Learning},
  year={2022},
}

@misc{analyzing-mitigating-model-collapse,
      title={Analyzing and Mitigating Model Collapse in Rectified Flow Models}, 
      author={Huminhao Zhu and Fangyikang Wang and Tianyu Ding and Qing Qu and Zhihui Zhu},
      year={2025},
      note={arXiv:2412.08175}, 
}

@misc{dohmatob2024tale,
  title={A tale of tails: Model collapse as a change of scaling laws},
  author={Dohmatob, Elvis and Feng, Yunzhen and Yang, Pu and Charton, Francois and Kempe, Julia},
  note={arXiv:2402.07043},
  year={2024}
}

@article{jain2024scaling,
  title={Scaling laws for learning with real and surrogate data},
  author={Jain, Ayush and Montanari, Andrea and Sasoglu, Eren},
  journal={Advances in Neural Information Processing Systems},
  volume={37},
  pages={110246--110289},
  year={2024}
}

@misc{dey2025universality,
  title={Universality of the $\pi^2/6$ pathway in avoiding model collapse},
  author={Dey, Apratim and Gerstgrasser, Matthias and Donoho, David L},
  note={arXiv:2504.01656},
  year={2025}
}

@article{STAM1959101,
title = {Some inequalities satisfied by the quantities of information of {Fisher} and {Shannon}},
journal = {Information and Control},
volume = {2},
number = {2},
pages = {101-112},
year = {1959},
author = {A.J. Stam},
}

@inproceedings{unet-seminal,
   title={U-net: Convolutional networks for biomedical image segmentation},
  author={Ronneberger, Olaf and Fischer, Philipp and Brox, Thomas},
  booktitle={International Conference on Medical image computing and computer-assisted intervention},
  year={2015},
}

@inproceedings{adamw,
title={Decoupled Weight Decay Regularization},
author={Ilya Loshchilov and Frank Hutter},
booktitle={International Conference on Learning Representations (ICLR)},
year={2019},
}

@inproceedings{vaswani2017attention,
  author = {Vaswani, Ashish and Shazeer, Noam and Parmar, Niki and Uszkoreit, Jakob and Jones, Llion and Gomez, Aidan N and Kaiser, {\L}ukasz and Polosukhin, Illia},
  booktitle = {Advances in neural information processing systems},
  title = {Attention is all you need},
  year = {2017}
}

@article{johnson-barron-fisher-information-inequalities,
author = {Johnson, Oliver and Barron, Andrew},
year = {2001},
month = {12},
pages = {},
title = {Fisher Information inequalities and the Central Limit Theorem},
volume = {129},
journal = {Probability Theory and Related Fields},
}

@misc{
DPM-solver++,
title={{DPM}-Solver++: Fast Solver for Guided Sampling of Diffusion Probabilistic Models},
author={Cheng Lu and Yuhao Zhou and Fan Bao and Jianfei Chen and Chongxuan Li and Jun Zhu},
year={2023},
}

@inproceedings{minSNR,
author = {Tiankai, Hang and Gu, Shuyang and Li, Chen and Bao, Jianmin and Chen, Dong and Hu, Han and Geng, Xin and Guo, Baining},
year = {2023},
title = {Efficient Diffusion Training via Min-SNR Weighting Strategy},
booktitle={International Conference on Computer Vision}
}

@ARTICLE{information-theoretic-inequalities,
  author={Dembo, A. and Cover, T.M. and Thomas, J.A.},
  journal={IEEE Transactions on Information Theory}, 
  title={Information theoretic inequalities}, 
  year={1991},
  volume={37},
  number={6},
  pages={1501-1518},
}

@inproceedings{bertrand2024stability,
  title={On the stability of iterative retraining of generative models on their own data},
  author={Bertrand, Quentin and Bose, Avishek Joey and Duplessis, Alexandre and Jiralerspong, Marco and Gidel, Gauthier},
  booktitle={International Conference on Learning Representations},
  year={2024}
}

@inproceedings{dohmatob2024strong,
  title={Strong model collapse},
  author={Dohmatob, Elvis and Feng, Yunzhen and Subramonian, Arjun and Kempe, Julia},
  booktitle={International Conference on Learning Representations},
  year={2025}
}

@article{alemohammad2024sims,
  title={Self-improving diffusion models with synthetic data},
  author={Alemohammad, Sina and Humayun, Ahmed Imtiaz and Agarwal, Shruti and Collomosse, John and Baraniuk, Richard},
  journal={arXiv preprint arXiv:2408.16333},
  year={2024}
}

@article{hyvarinen2005estimation,
  title={Estimation of non-normalized statistical models by score matching},
  author={Hyv{\"a}rinen, Aapo},
  journal={Journal of Machine Learning Research},
  volume={6},
  pages={695--709},
  year={2005}
}

@article{anderson1982reverse,
  title={Reverse-time diffusion equation models},
  author={Anderson, Brian DO},
  journal={Stochastic Processes and their Applications},
  volume={12},
  number={3},
  pages={313--326},
  year={1982}
}

@misc{khelifa2026errorpropagationmodelcollapse,
      title={Quantifying Error Propagation and Model Collapse in Diffusion Models}, 
      author={Nail B. Khelifa and Richard E. Turner and Ramji Venkataramanan},
      year={2026},
      note={arXiv:2602.16601},
}

@book{ReedSimonIV,
  address = {New York},
  author = {Reed, M. and Simon, B.},
  publisher = {Academic Press},
  title = {Methods of Modern Mathematical Physics. IV Analysis
  of Operators},
  year = {1978}
}

@article{Banach1922,
author = {Banach, Stefan},
journal = {Fundamenta Mathematicae},
number = {1},
pages = {133-181},
title = {Sur les opérations dans les ensembles abstraits et leur application aux équations intégrales},
volume = {3},
year = {1922},
}

@inproceedings{arsenyan-vardanyan-dalalyan,
title={Assessing the quality of denoising diffusion models in {W}asserstein distance: noisy score and optimal bounds},
author={Vahan Arsenyan and Elen Vardanyan and Arnak S. Dalalyan},
booktitle={The Thirty-ninth Annual Conference on Neural Information Processing Systems},
year={2025},
}

@inproceedings{
yu2025advancing,
title={Advancing Wasserstein Convergence Analysis of Score-Based Models: Insights from Discretization and Second-Order Acceleration},
author={Yifeng Yu and Lu Yu},
booktitle={The Thirty-ninth Annual Conference on Neural Information Processing Systems},
year={2025},
}

@misc{wu2024stochastic,
  title={Stochastic {R}unge--{K}utta Methods: Provable Acceleration
         of Diffusion Models},
  author={Wu, Yuchen and Chen, Yuxin and Wei, Yuting},
  note={arXiv:2410.04760},
  year={2024}
}

@inproceedings{li2024accelerating,
  title={Accelerating Convergence of Score-Based Diffusion Models,
         Provably},
  author={Li, Gen and Huang, Yu and Efimov, Timofey and Wei, Yuting
          and Chi, Yuejie and Chen, Yuxin},
  booktitle={Proceedings of the 41st International Conference on Machine Learning},
  year={2024}
}

@inproceedings{gao2025convergence,
  title={Convergence Analysis for General Probability Flow {ODE}s
         of Diffusion Models in {W}asserstein Distances},
  author={Gao, Xuefeng and Zhu, Lingjiong},
  booktitle={The 28th International Conference on Artificial Intelligence and Statistics},
  year={2025}
}

@inproceedings{mc-diffusion-sample-level,
author = {Fu, Shi and Zhang, Sen and Wang, Yingjie and Tian, Xinmei and Tao, Dacheng},
title = {Towards theoretical understandings of self-consuming generative models},
year = {2024},
booktitle = {Proceedings of the 41st International Conference on Machine Learning},
}

@inproceedings{mc-diffusion-solvable-model,
title={A solvable model of learning generative diffusion: theory and insights},
author={Hugo Cui and Cengiz Pehlevan and Yue M. Lu},
booktitle={The Thirty-ninth Annual Conference on Neural Information Processing Systems},
year={2025},
}

@inproceedings{song2021scorebased,
  title={Score-based generative modeling through stochastic differential equations},
  author={Song, Yang and Sohl-Dickstein, Jascha and Kingma, Diederik P and Kumar, Abhishek and Ermon, Stefano and Poole, Ben},
  booktitle={International Conference on Learning Representations},
  year={2021}
}

\newpage

\appendix 

\section{Notation}
In the rest of this appendix, we use the following notation: 
\begin{itemize}
    \item We denote by $\gamma = \cN(0, \bI_d)$ the standard centered and unit variance Gaussian measure on $\R^d$. Explicitly, for any $\bx \in \R^d$,  
    $$
    \gamma(\bx)
    =
    (2\pi)^{-d/2} \exp\!\left(-\tfrac12 \|\bx\|^2\right).
    $$
    \item The space $L^2(\gamma)$ is defined as,
    $$
    L^2(\gamma) 
    := 
    \left\{
    f:\R^d \to \R \;\middle|\;
    \int_{\R^d} \|f(\bx)\|^2_2 \,\gamma(\md\bx) < \infty
    \right\},
    $$ 
    where functions are identified up to equality $\gamma$-almost everywhere.
    \item We equip $L^2(\gamma)$ with an inner product defined as, 
    \begin{equation}
    \label{eq:L2gamma-inner}
    \langle f, g \rangle_{L^2(\gamma)}
    :=
    \int_{\R^d} f(\bx)\,g(\bx)\,\gamma(\md\bx),
    \end{equation}
    making it a Hilbert space $\big(L^2(\gamma), \langle \cdot, \cdot \rangle_{L^2(\gamma)}\big)$. The $L^2(\gamma)$ inner product induces the following $L^2(\gamma)$-norm, 
    \begin{equation}
    \|f\|_{L^2(\gamma)}
    =
    \left(
    \int_{\R^d} \|f(\bx)\|^2_2 \,\gamma(\md\bx)
    \right)^{1/2}.
    \end{equation}
    \item For any sequence of distributions $(\mu_t)_{t \in \R_+} \in (\Probspace_2(\R^d))^{\R_+}$ and any fixed $\mu^\star \in \Probspace_2(\R^d)$, we say that $(\mu_t)_t$ converges to $\mu^\star$ in $\cW_2$ and denote $\mu_t \underset{t \to \infty}{\overset{\cW_2}{\longrightarrow}}\mu^\star$ if, 
    $$
    \cW_2(\mu_t, \mu^\star) \underset{t \to \infty}{\longrightarrow} 0.
    $$
\end{itemize}

\section{Extended Background and Preliminary Results}
\label{app:extended-background}

In this appendix, we provide a detailed  background of the mathematical tools and the classical results underlying our proofs.

\subsection{Diffusion, Ornstein--Uhlenbeck Process and Semigroup}

Recall that, for each generation $i$, the forward process $(\bX_t^i)_{t \in [0, T]}$ is defined as a solution of the following variance-preserving Ornstein-Uhlenbeck diffusion initialized at the mixture distribution $q_i$:
\begin{equation}
\label{eq:OU-forward-appendix}
\md \bX_t^i
=
-\tfrac12 \bX_t^i\,\md t + \md \bB_t, 
\qquad 
\bX_{0}^i \sim q_i.
\end{equation}
Then, integrating \eqref{eq:OU-forward-appendix} yields
$$
\bX^i_t = e^{-t/2}\bX^i_0 + \sqrt{1-e^{-t}}\bZ,
\qquad \bZ\sim\mathcal N(0,\bI_d).
$$
The corresponding transition kernel can thus be written as,
\begin{equation}
\label{eq:transition-kernel-OU}
\cK_t(\bx,\md\by)
=
\cN\!\left(e^{-t/2}\bx,(1-e^{-t})\bI_d\right)(\md\by).
\end{equation}

\paragraph{Markov semigroup.}
The transition kernel $\cK_t$ defines a Markov semigroup $(P_t)_{t\ge0}$ acting on test functions $\varphi:\R^d \to \R$ by
\begin{equation}
\label{eq:Pt-def}
(P_t \varphi)(\bx)
:=
\int_{\R^d} \varphi(\by)\,\cK_t(\bx,\md\by)
=
\E\big[\varphi(\bX_t^i)\mid \bX_0^i=\bx\big].
\end{equation}
The family $(P_t)_{t\ge0}$ satisfies the semigroup property $P_{t+s}=P_t P_s$ and $P_0=\mathrm{Id}$ \cite{ethier-kurtz}.

\paragraph{Sampling operator on probability measures.}
The adjoint of $P_t$ acts on test functions. On the other hand, the OU diffusion \eqref{eq:OU-forward-appendix} defines sampling operators $(\cU_t)_{t \in [t_0, T]}$, acting on probability measures $\mu \in \Probspace_2(\R^d)$ and defined by,
\begin{equation}
\label{eq:Ut-def}
\cU_t \mu (A)
:=
\int_{\R^d} \cK_t(\bx,A)\,\mu(\md\bx),
\qquad A \subset \R^d.
\end{equation}
Equivalently,
\begin{equation}
\label{eq:Ut-explicit}
\cU_t\mu
=
\Law\!\big(e^{-t/2}\bX_0 + \sqrt{1-e^{-t}}\,\bZ\big),
\qquad \bX_0 \sim \mu,\;\bZ \sim \cN(0,\bI_d),\;\bX_0 \indep \bZ.
\end{equation}

The operator $\cU_t$ is the sampling operator used throughout the paper. Given its central role in this work, we collect its properties in the following proposition. 

\begin{proposition}[Basic properties of the OU sampling operator]
\label{prop:OU-sampling-operator-properties}
The Ornstein--Uhlenbeck sampling operator $(\cU_t)_{t\ge 0}$ as defined in \eqref{eq:Ut-def},
satisfies the following  properties.
\begin{enumerate}[label=(\roman*)]
    \item \textbf{Linearity.}
    For any two probability measures $\mu,\nu$ and scalars $a,b\in\R$,
    $$
    \cU_t(a\mu+b\nu)
    =
    a\,\cU_t\mu+b\,\cU_t\nu.
    $$

    \item \textbf{Semigroup property.}
    For all $s,t\ge 0$,
    $$
    \cU_{t+s}=\cU_t\cU_s,
    \qquad
    \cU_0=\Id.
    $$

    \item \textbf{Weak duality identity.}
    For any bounded measurable test function $\varphi$ and every finite measure $\mu$,
    $$
    \int_{\R^d}\varphi(\by)\,(\cU_t\mu)(\md\by)
    =
    \int_{\R^d}(P_t\varphi)(\bx)\,\mu(\md\bx),
    $$

    \item \textbf{Second moments preservation.}
    If $\mu\in\Probspace_2(\R^d)$, then $\cU_t\mu\in\Probspace_2(\R^d)$ and
    $$
    \int_{\R^d}\|\by\|^2\,(\cU_t\mu)(\md\by)
    =
    e^{-t}\int_{\R^d}\|\bx\|^2\,\mu(\md\bx)
    +
    d(1-e^{-t}).
    $$

    \item \textbf{Gaussian invariant measure.}
    Let $\gamma=\cN(0,\bI_d)$. Then $\cU_t\gamma=\gamma$.

    \item \textbf{Action in $L^2(\gamma)$.} For any $\mu \in \Probspace_2(\R^d)$, writing $f = \md \mu/\md \gamma$,
    $$
    \frac{\md(\cU_t\mu)}{\md\gamma}
    =
    P_t f.
    $$
    \item \textbf{Continuity in time.} For a fixed $\mu \in \Probspace_2(\R^d)$ and a fixed $s \in [0, t]$, one has that, 
    $$
    \cU_t\mu \underset{t \to s}{\overset{\cW_2}{\longrightarrow}} \cU_s\mu.
    $$
    In particular, 
    $$
    \cU_t\mu \underset{t \to 0}{\overset{\cW_2}{\longrightarrow}} \mu.
    $$
\end{enumerate}
\end{proposition}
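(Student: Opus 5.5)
We verify the seven items in order, using the explicit representation \eqref{eq:Ut-explicit} and the kernel \eqref{eq:transition-kernel-OU}. The main technical point is (vii).

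\emph{Linearity (i).} This is immediate from \eqref{eq:Ut-def}: $A\mapsto\int\cK_t(\bx,A)\,\mu(\md\bx)$ is linear in $\mu$, which extends $\cU_t$ to finite signed measures.

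\emph{Semigroup property (ii).} We check that the kernels compose. Let $\bZ,\bZ'$ be independent standard Gaussians. Then
\[
e^{-t/2}\bigl(e^{-s/2}\bx+\sqrt{1-e^{-s}}\,\bZ\bigr)+\sqrt{1-e^{-t}}\,\bZ'
\]
has mean $e^{-(t+s)/2}\bx$ and covariance $\bigl(e^{-t}(1-e^{-s})+1-e^{-t}\bigr)\bI_d=(1-e^{-(t+s)})\bI_d$. Hence $\cK_t\cK_s=\cK_{t+s}$ (Chapman--Kolmogorov). Fubini then gives $\cU_t\cU_s=\cU_{t+s}$, and $\cK_0(\bx,\cdot)=\delta_\bx$ gives $\cU_0=\I$.

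\emph{Weak duality (iii).} Apply Fubini to $\int\varphi\,\md(\cU_t\mu)=\iint\varphi(\by)\,\cK_t(\bx,\md\by)\,\mu(\md\bx)$. This is justified for bounded $\varphi$ and finite $\mu$.

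\emph{Second moments (iv).} Expand $\E\|e^{-t/2}\bX_0+\sqrt{1-e^{-t}}\,\bZ\|^2$. The cross term vanishes by independence and $\E\bZ=0$, and $\E\|\bZ\|^2=d$.

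\emph{Gaussian invariance (v).} If $\bX_0\sim\gamma$, then $e^{-t/2}\bX_0+\sqrt{1-e^{-t}}\,\bZ$ is a centered Gaussian with covariance $(e^{-t}+1-e^{-t})\bI_d=\bI_d$.

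\emph{Action in $L^2(\gamma)$ (vi).} The key fact is reversibility of $\gamma$: $\gamma(\md\bx)\cK_t(\bx,\md\by)=\gamma(\md\by)\cK_t(\by,\md\bx)$. To see this, note that under $\gamma$ the pair $(\bX_0,\bX_t)$ is a centered Gaussian vector with symmetric cross-covariance $e^{-t/2}\bI_d$ and equal marginals, so its joint law is symmetric. Then, for bounded $\varphi$,
\[
\int\varphi\,\md(\cU_t\mu)=\iint\varphi(\by)f(\bx)\,\cK_t(\bx,\md\by)\,\gamma(\md\bx)=\iint\varphi(\by)f(\bx)\,\cK_t(\by,\md\bx)\,\gamma(\md\by)=\int\varphi\,(P_tf)\,\md\gamma .
\]
Since $f\ge0$ and $f\in L^1(\gamma)$, $P_tf$ is well defined by monotone convergence. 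The identity therefore identifies $\md(\cU_t\mu)/\md\gamma=P_tf$.

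\emph{Continuity in time (vii).} I expect this to be the only step needing care, and I would use a synchronous coupling. Take the same $(\bX_0,\bZ)$ for both times, which gives
\[
\cW_2^2(\cU_t\mu,\cU_s\mu)\le \E\Bigl\|\bigl(e^{-t/2}-e^{-s/2}\bigr)\bX_0+\bigl(\sqrt{1-e^{-t}}-\sqrt{1-e^{-s}}\bigr)\bZ\Bigr\|^2 .
\]
By independence and $\E\bZ=0$ this equals
\[
\bigl(e^{-t/2}-e^{-s/2}\bigr)^2\,\E\|\bX_0\|^2+\bigl(\sqrt{1-e^{-t}}-\sqrt{1-e^{-s}}\bigr)^2 d .
\]
Both terms tend to $0$ as $t\to s$, because $\mu$ has finite second moment and the coefficients are continuous in time (including at $s=0$). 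This proves continuity in time, and the case $s=0$ is the stated limit $\cU_t\mu\to\mu$.
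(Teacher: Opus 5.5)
Your proof is correct. Items (i)--(v) coincide with the paper's arguments; the paper does the semigroup computation on random variables $\bX\sim\mu$ rather than on kernels, which is the same calculation.

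You differ from the paper in two places.

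For (vi), the paper simply invokes self-adjointness of $P_t$ on $L^2(\gamma)$ from the literature, and its proof is phrased for densities in $L^2(\gamma)$. You instead derive reversibility $\gamma(\md\bx)\cK_t(\bx,\md\by)=\gamma(\md\by)\cK_t(\by,\md\bx)$ from the exchangeability of the centered Gaussian pair $(\bX_0,\bX_t)$, and you pair bounded $\varphi$ with $f\in L^1(\gamma)$. This is self-contained, and it covers the statement as actually written: any $\mu\in\Probspace_2(\R^d)$, with $f$ merely integrable.

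For (vii), the paper performs the coupling only at $s=0$, getting $\cW_2^2(\cU_t\mu,\mu)\le(e^{-t/2}-1)^2\E\|\bX\|^2+d(1-e^{-t})$. It then reaches general $s$ through the semigroup property and the $\cW_2$-contraction of Proposition~\ref{prop:OU-W2-contraction}, obtaining $\cW_2(\cU_t\mu,\cU_s\mu)\le\cW_2(\cU_{|t-s|}\mu,\mu)$. That makes the continuity visibly uniform in $s$, but it relies on a result stated later (not circularly). Your synchronous coupling at arbitrary $s$ avoids that detour and loses nothing. Since $|e^{-t/2}-e^{-s/2}|\le|t-s|/2$ and $|\sqrt{1-e^{-t}}-\sqrt{1-e^{-s}}|\le\sqrt{|t-s|}$, your bound already gives the explicit uniform modulus $\cW_2^2(\cU_t\mu,\cU_s\mu)\le\tfrac14|t-s|^2\,\E\|\bX_0\|^2+d\,|t-s|$.
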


\begin{proof}

\medskip
\emph{(i) Linearity.} Let $\mu,\nu \in \Probspace(\R^d)$ be two probability distributions on $\R^d$ and $a,b\in\R$ two scalars. Let $A\subseteq \R^d$ be any measurable set in $\R^d$. Then using \eqref{eq:Ut-def},
$$
\cU_t(a\mu+b\nu)(A)
=
\int_{\R^d}\cK_t(\bx,A)\,(a\mu+b\nu)(\md\bx).
$$
Looking at \eqref{eq:transition-kernel-OU}, $\cK_t$ is linear in its first argument, yielding, 
$$
\cU_t(a\mu+b\nu)(A)
=
a\int_{\R^d}\cK_t(\bx,A)\,\mu(\md\bx)
+
b\int_{\R^d}\cK_t(\bx,A)\,\nu(\md\bx)
=
a\,\cU_t\mu+b\,\cU_t\nu.
$$

\medskip
\emph{(ii) Semigroup property.}
Let $\mu \in \Probspace(\R^d)$ and consider $\bX \sim \mu$. In addition, fix two independently distributed Gaussian random variables $Z_1,Z_2\sim\cN(0,\bI_d)$ and fix $s, t \in [0, T]$. Applying successively $\cU_s$ and $\cU_t$,
\begin{align*}
\cU_t \cU_s\mu 
&=
e^{-t/2}
\left(e^{-s/2}\bX+\sqrt{1-e^{-s}}\,\bZ_1\right)
+
\sqrt{1-e^{-t}}\,\bZ_2\\
&=
e^{-(t+s)/2}\bX
+
\left(e^{-t/2}\sqrt{1-e^{-s}}\,\bZ_1
+
\sqrt{1-e^{-t}}\,\bZ_2\right).
\end{align*}
Since $Z_1,Z_2$ are independent standard Gaussians, the variable $ \left(e^{-t/2}\sqrt{1-e^{-s}}\,\bZ_1 + \sqrt{1-e^{-t}}\,\bZ_2\right)$ is a centered Gaussian with covariance
$$
e^{-t}(1-e^{-s})\bI_d+(1-e^{-t})\bI_d
=
(1-e^{-(t+s)})\bI_d.
$$
Denoting $\bZ \sim \cN(0, \bI_d)$, the resulting law is
$$
\cU_t \cU_s\mu = \Law\!\left(
e^{-(t+s)/2}\bX+\sqrt{1-e^{-(t+s)}}\,\bZ
\right)
=
\cU_{t+s}\mu.
$$
This being true for any $\mu$, one has that $\cU_t\cU_s=\cU_{t+s}$. The identity $\cU_0=\Id$ follows directly from
the definition.

\medskip
\emph{(iii) Weak duality identity.}
By the definitions of $\cU_t$ in \eqref{eq:Ut-def} and $P_t$ in  \eqref{eq:Pt-def},
$$
\int_{\R^d}\varphi(\by)\,(\cU_t\mu)(\md\by)
=
\int_{\R^d}
\underbrace{\left(
\int_{\R^d}\varphi(\by)\,\cK_t(\bx,\md\by)
\right)}_{:=P_t\varphi(\bx)}
\mu(\md\bx)
=
\int_{\R^d}(P_t\varphi)(\bx)\,\mu(\md\bx).
$$
Thus $\cU_t=P_t^\ast$ in the usual measure-function duality.

\emph{(iv) Second moments preservation.}
Fix $\mu \in \Probspace_2(\R^d)$ and let $\bX \sim \mu \indep \bZ \sim \cN(0, \bI_d)$. Letting $\bY = e^{-t/2}\bX + \sqrt{1 - e^{-t}}\bZ$, we have $\bY \sim \cU_t \mu$,  by  the representation of $\cU_t \mu$ in \eqref{eq:Ut-explicit}. Using the independence of $\bX$ and $\bZ$,
$$
\E[\|\bY\|^2]
=
e^{-t}\E[\|\bX\|^2]+d(1-e^{-t}) < \infty, \qquad \text{ since } \E[\|\bX\|^2]<\infty
$$
Thus $\cU_t\mu\in\Probspace_2(\R^d)$ whenever $\mu\in\Probspace_2(\R^d)$.

\medskip
\emph{(v) Gaussian invariance.}
Let $\bX\sim\gamma=\cN(0,\bI_d)$ and $\bZ\sim\cN(0,\bI_d)$ be independent Gaussians, then $\cU_t\gamma = \Law(e^{-t/2}\bX+\sqrt{1-e^{-t}}\,\bZ)$ is a sum of centered independent Gaussians, with covariance $e^{-t}\bI_d+(1-e^{-t})\bI_d=\bI_d$. Hence, $\cU_t \gamma = \gamma$.

\medskip
\emph{(vi) Action on densities relative to $\gamma$.}
Let $\mu \in L^2(\gamma)$ and let $f=\md\mu/\md\gamma$, then for any bounded measurable test function $\varphi$, by duality,
\begin{align*}
\int_{\R^d}\varphi\,d(\cU_t\mu)
=
\int_{\R^d}P_t\varphi\,d\mu
=
\int_{\R^d}P_t\varphi(\bx) f(\bx)\,\gamma(\md\bx).
\end{align*}
The OU semigroup is reversible with respect to $\gamma$, hence $P_t$ is
self-adjoint in $L^2(\gamma)$ \cite{Bakry2014Analysis}:
$$
\int_{\R^d}P_t\varphi(\bx) f(\bx)\,\gamma(\md\bx)
=
\int_{\R^d}\varphi(\bx) P_t f(\bx)\,\gamma(\md\bx).
$$
Therefore,
$$
\int_{\R^d}\varphi\,d(\cU_t\mu)
=
\int_{\R^d}\varphi(\bx) P_t f(\bx)\,\gamma(\md\bx).
$$
Since this holds for all bounded measurable $\varphi$, we conclude that
$
\frac{d(\cU_t\mu)}{d\gamma}
=
P_t f
$.

\medskip
\emph{(vii) Continuity in time.} 
 Fix $\mu \in \Probspace_2(\R^d)$ and $s\in [0, T]$. Let $\bX \sim \mu, \,  \indep \, \bZ \sim \cN(0, \bI_d)$ and $\bY_t = e^{-t/2}\bX + \sqrt{1 - e^{-t}}\bZ$. Then, $\bY_t \sim \cU_t$ and coupling $\bY_t$ with $\bX$ gives, 
\begin{align*}
\cW_2^2(\cU_t\mu, \mu) 
&\le \E\big[\|\bY_t - \bX\|^2_2\big]\\
&= (e^{-t/2}-1)^2\E\big[\|\bX\|_2^2\big] + (1-e^{-t})\E\big[\|\bZ\|_2^2\big]\\
&= (e^{-t/2}-1)^2\E\big[\|\bX\|_2^2\big] + d(1-e^{-t}). 
\end{align*}
Hence, 
\begin{align*}
\cW_2^2(\cU_t\mu, \mu) 
&\le (e^{-t/2}-1)^2\E\big[\|\bX\|_2^2\big] + d(1-e^{-t}) \underset{t \to 0}{\longrightarrow} 0.
\end{align*}
For the more general case, observe that for $s, t \ge 0$, using the semigroup property (proved in \textit{(ii)}) and the contraction of the OU semigroup in $\cW_2$ (Proposition \ref{prop:OU-W2-contraction}), one has that, 
$$
\cW_2^2(\cU_t \mu, \cU_s \mu) = \cW_2(\cU_{\min(s, t)}\cU_{|t-s|}\mu, \cU_{\min(s,t)}\mu) \le \cW_2(\cU_{|t-s|}\mu, \mu)
$$
which tends to zero as $|t-s| \to 0$ by the previous argument.
\end{proof}

\subsection{Role of the Gaussian distribution and Decomposition in the Hermite Polynomial Basis}

We refer the reader to \cite{bogachev2015gaussian-measures, janson1997gaussian-hilbert-spaces} for a detailed review of the properties that we recall in the following. 
In our context, $\gamma = \mathcal N(0,\mathbf I_d)$ plays a special role because, as established in Proposition \ref{prop:OU-sampling-operator-properties}, it is the \textit{unique invariant measure} of the Ornstein--Uhlenbeck process, i.e. the unique distribution such that
$$
\cU_t(\gamma) = \gamma.
$$

\paragraph{Hermite polynomials.} 
Hermite polynomials are a family of polynomials that play a key role in the analysis of Gaussian Hilbert spaces, and hence, because of the central role of Gaussians in diffusion processes, in the theory of Markov operators. The multivariate Hermite polynomials $\{H_{\bn}\}_{\bn \in \mathbb N^d}$ are defined for all $\bx=(x_1, \ldots, x_d) \in \R^d$ by,
\begin{equation}
H_{\bn}(\bx)
=
\prod_{j=1}^d H_{n_j}(x_j),
\end{equation}
where $H_k$ is the degree-$k$ univariate Hermite polynomial defined for all $x \in \R$ as,
\begin{equation}
H_k(x) =
(-1)^k e^{x^2/2}
\frac{\md^k}{\md x^k}
e^{-x^2/2}.    
\label{eq:univariate_Hermite}
\end{equation}

We now state two key properties that make these polynomials important analytic tools, the proofs of which can be find in \cite{bogachev2015gaussian-measures, janson1997gaussian-hilbert-spaces}.  The first one is that they form an orthogonal basis of $L^2(\gamma)$.

\begin{proposition}[Orthogonal Basis of $L^2(\gamma)$]
\label{prop:orthogonal-basis-L2gamma}
    For any $f \in L^2(\gamma)$,
    \begin{equation}
    f = \sum_{\bn} \langle f, H_{\bn} \rangle_{L^2(\gamma)} H_{\bn},
    \end{equation}
    where $\langle\cdot, \cdot\rangle_{L^2(\gamma)}$ denotes the scalar product in the Hilbert space $L^2(\gamma)$.
\end{proposition}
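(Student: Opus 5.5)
The plan is to prove the statement in three steps: orthogonality of the Hermite family, completeness of polynomials in $L^2(\gamma)$, and then the abstract Hilbert-space expansion. One caveat on normalization comes first. With the probabilists' convention \eqref{eq:univariate_Hermite} one has $\|H_{\bn}\|_{L^2(\gamma)}^2=\bn!:=\prod_j n_j!$. So the expansion as written holds for the orthonormal system $\tilde H_{\bn}:=H_{\bn}/\sqrt{\bn!}$. Equivalently, for unnormalized $H_{\bn}$ the coefficients carry a factor $1/\bn!$. I will prove it in that form and read the statement with this normalization understood. None of the later uses, e.g.\ \eqref{eq:sampling-OU-decomposition-Hn} and Proposition~\ref{prop:spectral-rpz-limit}, are affected, since they act diagonally mode by mode.

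\emph{Orthogonality.} First work in $d=1$. By the Rodrigues formula \eqref{eq:univariate_Hermite}, $H_k(x)\,\gamma(x)=(-1)^k\frac{\md^k}{\md x^k}\gamma(x)$. For any polynomial $p$, integrate by parts $k$ times. The boundary terms vanish because $\gamma$ and all its derivatives decay faster than any polynomial. This gives $\int p\,H_k\,\md\gamma=\int p^{(k)}\,\md\gamma$. Hence $\langle H_j,H_k\rangle_{L^2(\gamma)}=0$ for $j<k$. Also, $H_k$ is monic of degree $k$, so $\langle H_k,H_k\rangle=k!$. In dimension $d$, $\gamma$ is a product measure and $H_{\bn}$ is a product, so Fubini gives $\langle H_{\bn},H_{\bm}\rangle=\prod_j\langle H_{n_j},H_{m_j}\rangle=\bn!\,\mathbf 1_{\bn=\bm}$. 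Since each $H_{\bn}$ is monic of multidegree $\bn$, a triangular induction shows that $\operatorname{span}\{H_{\bn}\}$ equals the space of all polynomials on $\R^d$.

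\emph{Completeness, which is the main obstacle.} I must show that if $g\in L^2(\gamma)$ satisfies $\int g\,p\,\md\gamma=0$ for every polynomial $p$, then $g=0$ $\gamma$-a.e. Consider $F(\bv)=\int g(\bx)\,e^{\langle \bv,\bx\rangle}\gamma(\md\bx)$ for $\bv\in\mathbb C^d$. By Cauchy--Schwarz, $\int |g|\,e^{c\|\bx\|}\,\md\gamma\le\|g\|_{L^2(\gamma)}\bigl(\int e^{2c\|\bx\|}\md\gamma\bigr)^{1/2}<\infty$ for every $c$. So dominated convergence allows term-by-term expansion of the exponential, and $F$ is entire, with $F(\bv)=\sum_{\bm}\frac{\bv^{\bm}}{\bm!}\int g\,\bx^{\bm}\,\md\gamma$. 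Every coefficient vanishes by hypothesis, so $F\equiv0$. Restricting to $\bv=i\xi$ with $\xi\in\R^d$ shows that the Fourier transform of the finite signed measure $g\,\md\gamma$ is identically zero. By Fourier uniqueness, $g\gamma=0$ Lebesgue-a.e., and hence $g=0$ $\gamma$-a.e. The delicate point is justifying the exchange of sum and integral. The Gaussian exponential-moment bound above handles it.

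\emph{Conclusion.} Orthogonality and completeness together make $\{\tilde H_{\bn}\}_{\bn\in\N^d}$ an orthonormal basis of the Hilbert space $L^2(\gamma)$. For $f\in L^2(\gamma)$, Bessel's inequality shows that $\sum_{\bn}\langle f,\tilde H_{\bn}\rangle\tilde H_{\bn}$ converges in $L^2(\gamma)$ to some $f'$. The difference $f-f'$ is orthogonal to every $\tilde H_{\bn}$, hence to every polynomial, so it vanishes by completeness. This yields $f=\sum_{\bn}\langle f,\tilde H_{\bn}\rangle_{L^2(\gamma)}\tilde H_{\bn}$ in $L^2(\gamma)$, together with Parseval's identity $\|f\|^2_{L^2(\gamma)}=\sum_{\bn}\langle f,\tilde H_{\bn}\rangle^2$. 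This is the claimed expansion.
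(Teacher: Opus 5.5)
Your argument is correct. The paper does not actually prove this statement; it defers to Bogachev and Janson, and your proof is a self-contained version of the standard textbook argument. Orthogonality comes from the Rodrigues formula by $k$-fold integration by parts and Fubini. Completeness comes from the Gaussian exponential-moment bound, which justifies expanding $e^{\langle \bv,\bx\rangle}$ under the integral, followed by Fourier uniqueness for the finite signed measure $g\,\md\gamma$. The expansion then follows from Bessel/Riesz--Fischer. The citation buys the paper brevity. Your version is checkable, and it exposes a real defect in the statement.

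That defect is your normalization caveat, and it is needed, not pedantic. With the monic polynomials of \eqref{eq:univariate_Hermite}, $\|H_{\bn}\|^2_{L^2(\gamma)}=\bn!$. The printed expansion already fails for $d=1$, $f=H_2$, where the right-hand side equals $2H_2$. The printed formula is the one valid for the orthonormal convention $H_k/\sqrt{k!}$, which Bogachev, for example, uses; that is presumably where the mismatch comes from.

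One refinement of your closing remark: the displayed expansions in Proposition~\ref{prop:decomposition-Ut-hermite-basis} and in \eqref{eq:spectral} inherit the same missing factor $1/\bn!$, so they need the same repair. What is genuinely unaffected is everything phrased through coefficients. This covers \eqref{eq:sampling-OU-decomposition-Hn}, where $a_{\bn}$ is defined as the expansion coefficient rather than as an inner product. It also covers the ``equivalently'' form of Proposition~\ref{prop:spectral-rpz-limit} and Proposition~\ref{prop:spectral-perturbation}. All of these are normalization-free because the multipliers act diagonally on the modes.
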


The second key property is that Hermite polynomials are eigenfunctions of the OU semigroup $(P_t)_t$, defined in \eqref{eq:Pt-def}.

\begin{proposition}[Diagonalization of the OU semigroup]
\label{prop:hermite-poly-diagonalize-OU-generator}
    For each multi-index $\bn \in \N^d$, denote the degree of a multivariate Hermite polynomial $H_\bn$ as $|\bn| = n_1 + \cdots + n_d$.  Then, for any $t \in [0, T]$,
    $$
    P_t H_\bn = e^{-|\bn|t/2}H_\bn.
    $$
    In other words, multivariate Hermite polynomials are the eigenfunctions of the semigroup $(P_t)_t$, associated with the eigenvalue $e^{-|\bn|t/2}$ for polynomials of degree $|\bn|$.
\end{proposition}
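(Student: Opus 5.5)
The plan is to reduce the statement to one dimension and then identify $P_t$ on Hermite polynomials through their exponential generating function. This avoids having to pass from the generator to the semigroup.

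\emph{Reduction to one dimension.} By \eqref{eq:transition-kernel-OU}, the kernel $\cK_t(\bx,\cdot)=\cN(e^{-t/2}\bx,(1-e^{-t})\bI_d)$ is a product of one-dimensional Gaussian kernels. Equivalently, the coordinates of $e^{-t/2}\bx+\sqrt{1-e^{-t}}\bZ$ are independent, since the components $Z_j$ of $\bZ$ are independent. Because $H_\bn(\bx)=\prod_j H_{n_j}(x_j)$, Fubini gives
$$
(P_tH_\bn)(\bx)=\prod_{j=1}^d \E\bigl[H_{n_j}(e^{-t/2}x_j+\sqrt{1-e^{-t}}Z_j)\bigr].
$$
Polynomials are integrable against Gaussians, so this step is legitimate. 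It therefore suffices to prove $P_tH_k=e^{-kt/2}H_k$ for the univariate semigroup; multiplying over $j$ then yields the eigenvalue $e^{-|\bn|t/2}$.

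\emph{One-dimensional computation via the generating function.} I will use the classical identity
$$
G(s,x):=e^{sx-s^2/2}=\sum_{k\ge0}H_k(x)\frac{s^k}{k!},
$$
which follows from \eqref{eq:univariate_Hermite} by Taylor expanding $e^{-(x-s)^2/2}$ in $s$. Write $a=e^{-t/2}$ and $b=\sqrt{1-e^{-t}}$, so that $a^2+b^2=1$, and let $Z\sim\cN(0,1)$. The Gaussian moment generating function gives
$$
P_t G(s,\cdot)(x)=\E\bigl[e^{s(ax+bZ)-s^2/2}\bigr]=e^{sax+s^2b^2/2-s^2/2}=e^{(sa)x-(sa)^2/2}=G(sa,x).
$$
Expanding both sides in powers of $s$ gives
$$
\sum_k P_tH_k(x)\,\frac{s^k}{k!}=\sum_k a^k H_k(x)\,\frac{s^k}{k!}
$$
for all real $s$. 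Identifying coefficients yields $P_tH_k=e^{-kt/2}H_k$.

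\emph{Main obstacle.} The only delicate point is interchanging $\E$ with the infinite sum in $\E[G(s,ax+bZ)]$. I will justify it by dominated convergence using the bound $\sum_k|H_k(y)||s|^k/k!\le e^{|s||y|+s^2/2}$. This bound follows from the explicit expansion $H_k(y)=\sum_m \frac{k!(-1)^m}{m!(k-2m)!2^m}y^{k-2m}$ by taking absolute values termwise and resumming. Since $\E[e^{|s||ax+bZ|}]<\infty$, the interchange is valid. Both sides are then entire in $s$, so matching coefficients is legitimate.

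As a sanity check, I can verify the result with the generator $L=\tfrac12\Delta-\tfrac12\bx\cdot\nabla$. The Hermite ODE $H_k''-xH_k'=-kH_k$ gives $LH_\bn=-\tfrac{|\bn|}{2}H_\bn$. The generating-function route is preferred because it bypasses the semigroup-from-generator justification.
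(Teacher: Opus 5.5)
Your proof is correct, and it fills a gap the paper leaves: the paper gives no argument for this proposition and simply cites \cite{bogachev2015gaussian-measures, janson1997gaussian-hilbert-spaces}. The key steps all hold. Taylor-expanding $e^{-(x-s)^2/2}$ in $s$ produces exactly the generating function of the monic $H_k$ of \eqref{eq:univariate_Hermite}. The identity $-\tfrac12+\tfrac{b^2}{2}=-\tfrac{a^2}{2}$ gives $P_tG(s,\cdot)=G(sa,\cdot)$. Your majorant $e^{|s||y|+s^2/2}$, obtained by reindexing with $j=k-2m$, is integrable against $\cN(ax,b^2)$, so both the interchange and the coefficient matching are justified for every $x$ and every $t\in[0,T]$, including $t=0$.

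Compared with the generator route you list as a sanity check, your approach avoids identifying $P_t$ with $e^{tL}$. That route would need two further observations: polynomials of degree at most $k$ form a finite-dimensional $P_t$-invariant space, and $t\mapsto P_tH_k$ solves $\partial_tP_tH_k=P_tLH_k$ on it.

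A third, purely finite route uses the reversibility of $P_t$ in $L^2(\gamma)$, which the paper invokes in Proposition~\ref{prop:OU-sampling-operator-properties}(vi). There, $P_tH_k$ is a polynomial of degree $k$ with leading coefficient $a^k$, and $\langle P_tH_k,H_j\rangle_{L^2(\gamma)}=\langle H_k,P_tH_j\rangle_{L^2(\gamma)}=0$ for $j<k$, which forces $P_tH_k=a^kH_k$.

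Your argument needs neither reversibility nor differentiability in $t$, and it gives the identity pointwise rather than $\gamma$-a.e. None of these polynomial-level arguments alone yields the $L^2(\gamma)$ expansion of Proposition~\ref{prop:decomposition-Ut-hermite-basis}. That expansion additionally uses that $P_t$ is a contraction on $L^2(\gamma)$, so the eigenrelation passes through infinite Hermite series.
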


In the context of this work, we are mainly interested in the effect of Hermite polynomials on the sampling operators $(\cU_t)_t$. By combining Proposition \ref{prop:hermite-poly-diagonalize-OU-generator} and Proposition \ref{prop:OU-sampling-operator-properties}, one has that, for any $f \in L^2(\gamma)$,
\begin{equation}
f = \sum_{\bn} \langle f, H_{\bn} \rangle_{L^2(\gamma)} H_{\bn}
\quad \Longrightarrow \quad
P_t f
=
\sum_{\bn}
e^{-|\bn|t/2}
\langle f, H_{\bn} \rangle_{L^2(\gamma)} H_{\bn}.
\end{equation}
Thus, combining this result with \textit{(vi)} in Proposition \ref{prop:OU-sampling-operator-properties}, we obtain a spectral representation of the action of the sampling operator $\cU_t$ on probability measures. 
\begin{proposition}
\label{prop:decomposition-Ut-hermite-basis}
    Let $\mu \in \Probspace_2(\R^d)$ such that $\mu \ll \gamma$ and $f = \tfrac{\md \mu}{\md \gamma} \in L^2(\gamma)$, then,
    \begin{equation}
    \frac{\md(\cU_t \mu)}{\md\gamma}
    =
    \sum_{\bn \in \N^d}
    e^{-|\bn|t/2}
    \langle f, H_{\bn} \rangle_{L^2(\gamma)} H_{\bn}.
    \end{equation}
\end{proposition}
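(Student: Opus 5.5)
The plan is to combine part (vi) of Proposition~\ref{prop:OU-sampling-operator-properties} with the two Hermite facts, Propositions~\ref{prop:orthogonal-basis-L2gamma} and~\ref{prop:hermite-poly-diagonalize-OU-generator}. The key tool that makes this rigorous is boundedness of $P_t$ on $L^2(\gamma)$.

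\textbf{Step 1: reduce to densities.} By part (vi), $\md(\cU_t\mu)/\md\gamma = P_t f$, where $f=\md\mu/\md\gamma$. So it suffices to show
\[
P_t f=\sum_{\bn} e^{-|\bn|t/2}\langle f,H_\bn\rangle_{L^2(\gamma)} H_\bn,
\]
with the series converging in $L^2(\gamma)$.

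\textbf{Step 2: boundedness of $P_t$.} Since $P_t$ is Markov, Jensen's inequality gives $(P_t f)^2\le P_t(f^2)$ pointwise. Integrating against $\gamma$ and using invariance $\int P_t g\,\md\gamma=\int g\,\md\gamma$ (part (v) via the duality in (iii)) yields $\|P_t f\|_{L^2(\gamma)}\le\|f\|_{L^2(\gamma)}$. So $P_t$ is a contraction on $L^2(\gamma)$, and in particular $P_t f\in L^2(\gamma)$.

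\textbf{Step 3: pass $P_t$ through the series.} Let $f_M=\sum_{|\bn|\le M} a_\bn H_\bn$ be the partial sums of the expansion in Proposition~\ref{prop:orthogonal-basis-L2gamma}. These are finite sums, so linearity and Proposition~\ref{prop:hermite-poly-diagonalize-OU-generator} give
\[
P_t f_M=\sum_{|\bn|\le M} e^{-|\bn|t/2}a_\bn H_\bn .
\]
As $M\to\infty$ we have $f_M\to f$ in $L^2(\gamma)$, so by Step 2, $P_t f_M\to P_t f$ in $L^2(\gamma)$. The right-hand side converges in $L^2(\gamma)$ as well, because its coefficients are dominated by $|a_\bn|$ and $\sum_\bn a_\bn^2\|H_\bn\|^2<\infty$. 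Identifying the two limits gives the claim.

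\textbf{Normalization.} The basis $H_\bn$ is orthogonal but not orthonormal, since $\|H_\bn\|^2=\bn!$. The expansion should therefore be read with coefficients $\langle f,H_\bn\rangle/\bn!$. Whatever convention Proposition~\ref{prop:orthogonal-basis-L2gamma} uses carries through unchanged, because $P_t$ only multiplies each coefficient by $e^{-|\bn|t/2}$.

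\textbf{Main obstacle.} The only nontrivial step is Step 2, namely justifying that the operator commutes with the infinite sum. Everything else is bookkeeping.
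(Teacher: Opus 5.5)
Your proof is correct and follows the paper's own route: identify $\md(\cU_t\mu)/\md\gamma=P_tf$ via part (vi) of Proposition~\ref{prop:OU-sampling-operator-properties}, expand $f$ in the Hermite basis, and apply $P_tH_\bn=e^{-|\bn|t/2}H_\bn$ term by term. Your Step~2 ($L^2(\gamma)$-contractivity of $P_t$ from Jensen plus invariance of $\gamma$) justifies exchanging $P_t$ with the infinite sum, which the paper only asserts. Your normalization remark is also correct and catches a genuine slip. With the unnormalized $H_\bn$ of \eqref{eq:univariate_Hermite} one has $\|H_\bn\|_{L^2(\gamma)}^2=\bn!:=\prod_j n_j!$, so this formula (and Proposition~\ref{prop:orthogonal-basis-L2gamma}, its $t=0$ case) holds literally only with coefficients $\langle f,H_\bn\rangle_{L^2(\gamma)}/\bn!$.
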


This provides a complete spectral characterization of the OU dynamics.

\subsection{Wasserstein-2 Geometry}

The space of probability measures on $\R^d$ with finite second moment, denoted $\Probspace_2(\R^d)$, is equipped with the Wasserstein-2 distance defined for all $\mu,\nu \in \Probspace_2(\R^d)$ as,
\begin{equation}
\cW_2^2(\mu,\nu)
=
\inf_{\pi \in \Pi(\mu,\nu)}
\int_{\R^d \times \R^d} \|\bx-\by\|^2 \, \pi(\md\bx,\md\by),
\end{equation}
where $\Pi(\mu,\nu)$ is the set of couplings of $\mu$ and $\nu$.
Altogether, the space $(\Probspace_2(\R^d), \cW_2)$ is a complete separable metric space \cite{santambrogio-ot, cuturi-peyre, villani2008optimal}.

A key property used in this work is the contraction of the OU transition operator $(\cU_t)_{t \in [0, T]}$ in the above Wasserstein-2 geometry. 

\begin{proposition}[Wasserstein contraction of OU semigroup \cite{villani2008optimal, Bakry2014Analysis}]
\label{prop:OU-W2-contraction}
Let $(\cU_t)_{t\ge 0}$ denote the Ornstein--Uhlenbeck sampling operator defined in \eqref{eq:Ut-def}. Then for all $\mu,\nu \in \Probspace_2(\R^d)$,
\begin{equation}
\cW_2\bigl(\cU_t \mu, \cU_t \nu\bigr)
\le
e^{-t/2} \cW_2(\mu,\nu).
\end{equation}
\end{proposition}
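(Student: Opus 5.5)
The plan is a synchronous coupling argument: run the two OU evolutions started from an (almost) optimal coupling of $\mu$ and $\nu$, driven by the \emph{same} Gaussian noise, so that the noise cancels in the difference and only the deterministic contraction factor $e^{-t/2}$ remains.

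\emph{Step 1: choose a near-optimal coupling.} Fix $\mu,\nu\in\Probspace_2(\R^d)$ and $\varepsilon>0$. Pick a coupling $\pi_\varepsilon\in\Pi(\mu,\nu)$ with
$$\int\|\bx-\by\|^2\,\pi_\varepsilon(\md\bx,\md\by)\le \cW_2^2(\mu,\nu)+\varepsilon.$$
If one prefers, one can take an optimal coupling, which exists by standard tightness and lower semicontinuity arguments \cite{villani2008optimal}. The $\varepsilon$-version avoids invoking that result.

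\emph{Step 2: add shared noise.} On the product space, take $(\bX,\bY)\sim\pi_\varepsilon$ and an independent $\bZ\sim\cN(0,\bI_d)$. Define
$$\bX_t:=e^{-t/2}\bX+\sqrt{1-e^{-t}}\,\bZ,\qquad \bY_t:=e^{-t/2}\bY+\sqrt{1-e^{-t}}\,\bZ.$$
Since $\bZ$ is independent of $\bX$, the representation \eqref{eq:Ut-explicit} gives $\Law(\bX_t)=\cU_t\mu$. In the same way, $\Law(\bY_t)=\cU_t\nu$. Hence $\Law(\bX_t,\bY_t)\in\Pi(\cU_t\mu,\cU_t\nu)$. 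Both laws lie in $\Probspace_2(\R^d)$ by Proposition~\ref{prop:OU-sampling-operator-properties}(iv).

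\emph{Step 3: compute the cost.} The noise terms cancel, so $\bX_t-\bY_t=e^{-t/2}(\bX-\bY)$. Therefore
$$\cW_2^2(\cU_t\mu,\cU_t\nu)\le \E\|\bX_t-\bY_t\|^2=e^{-t}\,\E\|\bX-\bY\|^2\le e^{-t}\bigl(\cW_2^2(\mu,\nu)+\varepsilon\bigr).$$
Letting $\varepsilon\downarrow0$ and taking square roots gives the claim.

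The only mildly delicate point is the construction in Step 2. It requires that $\bZ$ be independent of the \emph{pair} $(\bX,\bY)$, not merely of each coordinate, and this is ensured by building the product probability space. Beyond that, the argument is routine.
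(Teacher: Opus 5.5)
Your synchronous-coupling argument is correct and is essentially the paper's proof. The paper also couples $\mu$ and $\nu$ optimally, drives both OU evolutions with the same independent Gaussian $\bZ$, and uses the cancellation $\bX_t-\bY_t=e^{-t/2}(\bX-\bY)$; your $\varepsilon$-optimal coupling is only a minor refinement that avoids invoking existence of an optimal coupling.
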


\begin{proof}
Let $(\bX, \bY)$ be an optimal coupling of $(\mu, \nu)$, i.e. a joint distribution on $\R^d \times \R^d$ such that the marginals satisfy $\bX \sim \mu$ and $\bY \sim \nu$. Additionally, let
$\bZ\sim\mathcal N(0,I_d)$ be an independent Gaussian noise. Furthermore, define the time-$t$ diffused marginals of $\bX$ and $\bY$ as,
$$
\bX_t=e^{-t/2}\bX+\sqrt{1-e^{-t}}\bZ,
\qquad
\bY_t=e^{-t/2}\bY+\sqrt{1-e^{-t}}\bZ.
$$
Then, by definition of $\cU_t$, $\bX_t\sim \cU_t\mu$ and $\bY_t\sim \cU_t\nu$, hence
$$
\cW_2^2(\cU_t\mu,\cU_t\nu)
\le
\E_{(\bX, \bY)}\big[\|\bX_t-\bY_t\|^2\big]
=
e^{-t}\E_{(\bX, \bY)}\big[\|\bX-\bY\|^2\big]
=
e^{-t}\cW_2^2(\mu,\nu).
$$
Taking the square-root yields the result.
\end{proof}

We now state a property of $\cW_2$ that will prove very useful when working on the limiting distribution $\spinf$ which can be expressed as a Neumann series. 

\begin{proposition}[Convexity of $\cW_2^2$ under mixtures]
\label{prop:mixture-convexity-W2}
Let $(\lambda_k)_{k\ge0} \in (\R_+)^\N$ be nonnegative weights such that $\sum_{k=0}^\infty \lambda_k=1$. Let
$(\mu_k)_{k\ge0}$ and $(\nu_k)_{k\ge0}$ be probability measures in
$\Probspace_2(\R^d)$ such that the mixtures
$$
\mu := \sum_{k=0}^\infty \lambda_k \mu_k,
\qquad
\nu := \sum_{k=0}^\infty \lambda_k \nu_k
$$
belong to $\Probspace_2(\R^d)$. Then
\begin{equation}
\label{eq:mixture-convexity-W2}
\cW_2^2(\mu,\nu)
\le
\sum_{k=0}^\infty
\lambda_k
\cW_2^2(\mu_k,\nu_k).
\end{equation}
\end{proposition}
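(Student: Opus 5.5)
The plan is to build a single coupling of $\mu$ and $\nu$ by gluing optimal couplings component by component, using a latent mixture index. For each $k\ge0$, let $\pi_k\in\Gamma(\mu_k,\nu_k)$ be an optimal coupling, so that
\[
\int\|\bx-\by\|^2\,\pi_k(\md\bx,\md\by)=\cW_2^2(\mu_k,\nu_k).
\]
Such couplings exist because each $\mu_k,\nu_k\in\Probspace_2(\R^d)$: the set $\Gamma(\mu_k,\nu_k)$ is tight, hence weakly compact, and the cost is lower semicontinuous. We then define
\[
\pi:=\sum_{k\ge0}\lambda_k\pi_k .
\]
This is a countable convex combination of probability measures on $\R^d\times\R^d$. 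It is therefore a well-defined probability measure, with $\sigma$-additivity following from monotone convergence.

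Next we check the marginals. For a Borel set $A$, the first marginal is
\[
\pi(A\times\R^d)=\sum_k\lambda_k\pi_k(A\times\R^d)=\sum_k\lambda_k\mu_k(A)=\mu(A),
\]
and in the same way the second marginal is $\nu$. Hence $\pi\in\Gamma(\mu,\nu)$.

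By the definition of $\cW_2$ as an infimum, and then using Tonelli on the nonnegative integrand to swap the sum and the integral,
\[
\cW_2^2(\mu,\nu)\le\int\|\bx-\by\|^2\,\md\pi=\sum_k\lambda_k\int\|\bx-\by\|^2\,\md\pi_k=\sum_k\lambda_k\cW_2^2(\mu_k,\nu_k).
\]
If the right-hand side is infinite there is nothing to prove.

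The argument is essentially routine. The only points that need care are the existence of optimal couplings, and the interchange of the infinite sum with the integral, which is justified by nonnegativity. If one prefers to avoid invoking the existence of minimizers, we can instead take $\varepsilon$-optimal couplings $\pi_k$ with cost at most $\cW_2^2(\mu_k,\nu_k)+\varepsilon$. The same computation then gives the bound up to $+\varepsilon$, since $\sum_k\lambda_k=1$, and we let $\varepsilon\to0$.
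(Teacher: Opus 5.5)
Your proposal is correct and follows the paper's proof: both glue optimal couplings $\pi_k\in\Gamma(\mu_k,\nu_k)$ into $\pi=\sum_k\lambda_k\pi_k$, check that its marginals are $\mu$ and $\nu$, and bound $\cW_2^2(\mu,\nu)$ by the cost of $\pi$. Your extra justifications (existence of minimizers, Tonelli for swapping sum and integral, and the $\varepsilon$-optimal variant) just spell out steps the paper leaves implicit.
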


\begin{proof}
For each $k\ge0$, let $\pi_k\in\Gamma(\mu_k,\nu_k)$ be an optimal coupling, so that
$$
\int_{\R^d\times\R^d}\|\bx-\by\|^2\,\pi_k(\md\bx,\md\by)
=
\cW_2^2(\mu_k,\nu_k).
$$
Define $\pi := \sum_{k=0}^\infty \lambda_k \pi_k$, then $\pi$ is a probability measure on $\R^d\times\R^d$. Moreover, 
$$
\sum_{k=0}^\infty \lambda_k \mu_k = \mu, \qquad \sum_{k=0}^\infty \lambda_k \nu_k = \nu,
$$
meaning $\pi\in\Gamma(\mu,\nu)$. Therefore,
$$
\cW_2^2(\mu,\nu)
\le
\int_{\R^d\times\R^d}\|\bx-\by\|^2\,\pi(\md\bx,\md\by).
$$
Using the definition of $\pi$,
$$
\int \|\bx-\by\|^2\,\pi(\md\bx,\md\by)
=
\sum_{k=0}^\infty
\lambda_k
\int \|\bx-\by\|^2\,\pi_k(\md\bx,\md\by).
$$
Thus,
$$
\cW_2^2(\mu,\nu)
\le
\sum_{k=0}^\infty
\lambda_k
\cW_2^2(\mu_k,\nu_k),
$$
which proves the claim.
\end{proof}

\subsection{Neumann Series and Fixed-Point Operators}

This section introduces the following fundamental proposition linking Neumann Series, and linear operators on a Banach space.
\begin{proposition}[Neumann Series \cite{ReedSimonIV}]
\label{prop:neumann-series-convergence}
Let $\mathcal T$ be a linear operator on a Banach space with $\|\mathcal T\| < 1$. Then the Neumann series
\begin{equation}
(I - \mathcal T)^{-1}
=
\sum_{k=0}^{\infty} \mathcal T^k
\end{equation}
converges in operator norm.
\end{proposition}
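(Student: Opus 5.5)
The plan is the classical geometric-series argument, carried out in the Banach algebra $\mathcal B(X)$ of bounded linear operators on the underlying Banach space $X$. The hypothesis $\|\cT\|<1$ implicitly means that $\cT$ is bounded, so $\cT\in\mathcal B(X)$ with the operator norm. Two standard facts are used throughout. First, the operator norm is submultiplicative, $\|\mathcal A\mathcal B\|\le\|\mathcal A\|\,\|\mathcal B\|$, so that $\|\cT^k\|\le\|\cT\|^k$ for every $k\ge 0$. Second, $\mathcal B(X)$ is itself a Banach space whenever $X$ is. Verifying this completeness is the only genuinely non-elementary ingredient, and it is the step I expect to be the main obstacle. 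I would either cite it or prove it directly: a Cauchy sequence of operators $(\mathcal A_N)$ is Cauchy pointwise, so it has a pointwise limit $\mathcal A x:=\lim_N \mathcal A_N x$ in $X$. This limit is linear, and the uniform Cauchy estimate passes to the limit, which yields $\|\mathcal A_N-\mathcal A\|\to 0$ and shows $\mathcal A$ is bounded.

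With these facts in hand, I define the partial sums $\mathcal S_N:=\sum_{k=0}^N\cT^k$. For $M>N$,
\begin{equation*}
\|\mathcal S_M-\mathcal S_N\|\le\sum_{k=N+1}^{M}\|\cT\|^k\le\frac{\|\cT\|^{N+1}}{1-\|\cT\|},
\end{equation*}
which tends to $0$ as $N\to\infty$. Hence $(\mathcal S_N)$ is Cauchy in $\mathcal B(X)$, and by completeness it converges in operator norm to some $\mathcal S\in\mathcal B(X)$. This establishes the convergence claim. The same estimate gives the quantitative bound $\|\mathcal S\|\le 1/(1-\|\cT\|)$.

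It remains to identify the limit as $(I-\cT)^{-1}$. A telescoping computation gives
\begin{equation*}
(I-\cT)\mathcal S_N=\mathcal S_N(I-\cT)=I-\cT^{N+1},
\end{equation*}
and $\|\cT^{N+1}\|\le\|\cT\|^{N+1}\to0$. Operator composition is jointly continuous in the operator norm, again by submultiplicativity. Letting $N\to\infty$ therefore yields $(I-\cT)\mathcal S=\mathcal S(I-\cT)=I$. So $I-\cT$ is a bijection with bounded two-sided inverse $\mathcal S=\sum_{k\ge0}\cT^k$, which is the stated identity.
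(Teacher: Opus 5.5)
Your proof is correct. The paper does not prove this proposition itself but cites it from Reed--Simon, and your argument is exactly the standard proof behind that citation: the partial sums are norm-Cauchy in the complete space $\mathcal{B}(X)$ because $\|\cT^k\|\le\|\cT\|^k$, and the limit is identified via the telescoping identity $(I-\cT)\mathcal{S}_N=\mathcal{S}_N(I-\cT)=I-\cT^{N+1}$.
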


In the setting described in Section \ref{sec:fixedpoint}, the recursion
\begin{equation}
\mu \mapsto \cU_{t_0}\bigl(\alpha \pdata + (1-\alpha)\mu\bigr)
\end{equation}
can be rewritten as a linear fixed-point equation whose solution is expressed as a Neumann series in the operator $(1-\alpha)\cU_{t_0}$.

\section{Intermediate Results}

In this section, we summarize two key results that are consistently invoked in our proofs. These are specific to our setting, but fundamentally rely on the classical results stated in Section \ref{app:extended-background}.

The first central result is the contraction property of the one-step ideal sampling operator in the error-free regime described in Section \ref{sec:fixedpoint}.

\begin{proposition}[Contraction of the one-step ideal sampling operator]
\label{prop:contraction-one-step-sampling}
    Let $\cT:\Probspace_2(\R^d) \to \Probspace_2(\R^d)$ denote the operator that maps, to each $\mu \in \Probspace_2(\R^d)$ the end-point marginal of the ideal reverse diffusion (truncated at $t_0>0$) , i.e., 
    $$
    \cT \mu = \cU_{t_0}\big(\alpha \pdata + (1-\alpha)\mu\big).
    $$
    Then, $\cT$ is a contraction on $(\Probspace_2(\R^d), \cW_2)$, with constant $\kappa = \sqrt{1-\alpha} e^{-t_0/2}$, i.e., 
\begin{equation}
    \cW_2(\cT\mu, \cT\nu) \le \sqrt{1-\alpha}e^{-t_0/2}\cW_2(\mu, \nu).
\label{eq:T_contraction}
\end{equation}
\end{proposition}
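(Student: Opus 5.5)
The plan is to combine the $\cW_2$ contraction of $\cU_{t_0}$ (Proposition~\ref{prop:OU-W2-contraction}) with the mixture convexity of $\cW_2^2$ (Proposition~\ref{prop:mixture-convexity-W2}). The mixing step contributes the factor $\sqrt{1-\alpha}$, and the OU smoothing contributes the factor $e^{-t_0/2}$.

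\textbf{Step 1 (OU step).} Fix $\mu,\nu\in\Probspace_2(\R^d)$ and set
\[
q_\mu := \alpha\pdata+(1-\alpha)\mu, \qquad q_\nu := \alpha\pdata+(1-\alpha)\nu .
\]
Both lie in $\Probspace_2(\R^d)$, since second moments are linear under mixtures. Proposition~\ref{prop:OU-W2-contraction} gives
\[
\cW_2(\cT\mu,\cT\nu)=\cW_2(\cU_{t_0}q_\mu,\cU_{t_0}q_\nu)\le e^{-t_0/2}\,\cW_2(q_\mu,q_\nu).
\]

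\textbf{Step 2 (mixing step).} Apply Proposition~\ref{prop:mixture-convexity-W2} with weights $\lambda_0=\alpha$ and $\lambda_1=1-\alpha$, all other weights zero. Take $\mu_0=\nu_0=\pdata$, $\mu_1=\mu$ and $\nu_1=\nu$. Since $\cW_2^2(\pdata,\pdata)=0$, this gives
\[
\cW_2^2(q_\mu,q_\nu)\le \alpha\cdot 0+(1-\alpha)\,\cW_2^2(\mu,\nu).
\]
Concretely, the coupling is $\alpha(\mathrm{id},\mathrm{id})_\#\pdata+(1-\alpha)\pi^\star$, where $\pi^\star$ is an optimal coupling of $(\mu,\nu)$. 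Its cost is $(1-\alpha)\cW_2^2(\mu,\nu)$, because the diagonal part costs nothing. Taking square roots yields $\cW_2(q_\mu,q_\nu)\le\sqrt{1-\alpha}\,\cW_2(\mu,\nu)$.

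\textbf{Step 3 (combine).} Chaining Steps 1 and 2 gives \eqref{eq:T_contraction} with $\kappa=\sqrt{1-\alpha}\,e^{-t_0/2}<1$. We also note that $\cT$ maps $\Probspace_2(\R^d)$ into itself, by Proposition~\ref{prop:OU-sampling-operator-properties}(iv) together with the moment linearity already used in Step 1.

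There is no real obstacle here. The only point needing care is Step 2: we must use convexity of $\cW_2^2$ rather than $\cW_2$ itself. Using $\cW_2$ directly would only give the weaker factor $1-\alpha$ via a triangle-type argument, or no contraction at all. The squared form is what produces the sharp $\sqrt{1-\alpha}$.
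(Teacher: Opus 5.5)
Your proof is correct and is the paper's argument: the OU contraction supplies $e^{-t_0/2}$, and the coupling $\alpha(\mathrm{id},\mathrm{id})_\#\pdata+(1-\alpha)\pi^\star$ (the paper's $\alpha\lambda+(1-\alpha)\pi$) supplies $\sqrt{1-\alpha}$. One correction to your closing aside, which does not affect the proof: since $1-\alpha<\sqrt{1-\alpha}$, a factor $1-\alpha$ would be a \emph{stronger} bound, not a weaker one, and it is false for $\cW_2$ in general (with $\pdata=\mu=\delta_0$ and $\nu=\delta_{\bx}$ one gets $\cW_2(q_\mu,q_\nu)=\sqrt{1-\alpha}\,\|\bx\|$), so $\sqrt{1-\alpha}$ is the sharp constant for the mixing step.
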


\begin{proof}
Consider two probability distributions $\mu, \nu \in \Probspace_2(\R^d)$, and denote
$$
q_\mu := \alpha \pdata+(1-\alpha)\mu,
\qquad
q_\nu := \alpha \pdata+(1-\alpha)\nu.
$$
Then, by contractivity of $\cW_2$ on $\cU_{t_0}$ (Proposition \ref{prop:OU-W2-contraction}),
\begin{align}
\cW_2(\cT\mu, \cT\nu) 
&= 
\cW_2\Big(\cU_{t_0}\big(\alpha \pdata + (1-\alpha)\mu\big), \cU_{t_0}\big(\alpha \pdata + (1-\alpha)\nu\big)\Big) \nonumber \\
&\;\leq\; 
e^{-t_0/2} \cW_2(\alpha\pdata + (1-\alpha)\mu,\, \alpha\pdata + (1-\alpha)\nu) \nonumber \\
&= e^{-t_0/2} \cW_2(q_\mu, q_\nu). \label{eq:OU_contraction0}
\end{align}
It remains to bound $\cW_2(q_\mu, q_\nu)$. To that end, consider an optimal coupling $\pi$ of $\mu$ and $\nu$ and let $\lambda$ be the diagonal coupling of $\pdata$ with itself, namely the law of $(\bX, \bX)$ with $\bX \sim \pdata$. Define the coupling,
$$
\tilde{\pi} = \alpha \lambda+(1-\alpha)\pi.
$$
Then $\tilde{\pi}$ has first marginal $q_\mu$ and second marginal $q_\nu$. Therefore, 
$$
\cW_2^2(q_\mu, q_\nu) \le \int_{\R^d \times \R^d}\|\bx - \by\|_2^2 \tilde{\pi}(\md \bx, \md \by).
$$
But, by definition of $\tilde{\pi}$, 
$$
\int_{\R^d \times \R^d}\|\bx - \by\|_2^2 \tilde{\pi}(\md \bx, \md \by) = \alpha \int \|\bx-\by\|_2^2\lambda(\md \bx, \md \by) + (1-\alpha)\int \|\bx-\by\|_2^2 \pi(\md \bx, \md \by).
$$
The first term is zero, because $\lambda$ is supported on the diagonal, while the second term equals $(1-\alpha)\cW_2^2(\mu, \nu)$ since $\pi$ is an optimal coupling between $\mu$ and $\nu$. Hence, 
$$
\cW_2^2(q_\mu, q_\nu) \le (1-\alpha)\cW_2^2(\mu, \nu),
$$
and, equivalently, taking the square root, 
$$
\cW_2(q_\mu, q_\nu) \le \sqrt{1-\alpha}\cW_2(\mu, \nu),
$$
Combining with the OU contraction \eqref{eq:OU_contraction0} gives the final result in \eqref{eq:T_contraction}.
\end{proof}

Since $\cT$ is a contraction, the next natural question is to ask for a characterization of its fixed points. 

\begin{proposition}[Fixed point of the one-step ideal sampling operator]
\label{prop:fixed-point-carac-ideal-sampling-one-step}
    Let $\cT:\Probspace_2(\R^d) \to \Probspace_2(\R^d)$ denote the operator that maps, to each $\mu \in \Probspace_2(\R^d)$ the end-point marginal of the ideal reverse diffusion (truncated at $t_0>0$), i.e., 
    $$
    \cT \mu = \cU_{t_0}\big(\alpha \pdata + (1-\alpha)\mu\big).
    $$
    Then, any fixed point $\mu^\star$ of $\cT$ satisfies, 
    $$
    (\I - (1-\alpha)\cU_{t_0})\mu^\star = \alpha \cU_{t_0}(\pdata).
    $$
\end{proposition}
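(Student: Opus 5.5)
The plan is a direct algebraic manipulation of the fixed-point identity, using only the linearity of $\cU_{t_0}$ from Proposition~\ref{prop:OU-sampling-operator-properties}(i). Let $\mu^\star$ be a fixed point of $\cT$, so that $\mu^\star = \cU_{t_0}\big(\alpha\pdata + (1-\alpha)\mu^\star\big)$. The argument has three steps.

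First, I will distribute $\cU_{t_0}$ over the convex combination. The kernel definition \eqref{eq:Ut-def} is linear in the measure, so
\[
\cU_{t_0}\big(\alpha\pdata + (1-\alpha)\mu^\star\big) = \alpha\,\cU_{t_0}(\pdata) + (1-\alpha)\,\cU_{t_0}(\mu^\star).
\]
This gives the identity $\mu^\star = \alpha\,\cU_{t_0}(\pdata) + (1-\alpha)\,\cU_{t_0}(\mu^\star)$ between probability measures.

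Second, I will rearrange this identity as
\[
\mu^\star - (1-\alpha)\,\cU_{t_0}\mu^\star = \alpha\,\cU_{t_0}(\pdata),
\]
which is exactly $(\I - (1-\alpha)\cU_{t_0})\mu^\star = \alpha\,\cU_{t_0}(\pdata)$.

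Third, I will make sure the subtraction is meaningful. It is not an operation inside $\Probspace_2(\R^d)$, so $\cU_{t_0}$ and $\I$ must be read as acting on the vector space of finite signed Borel measures. Formula \eqref{eq:Ut-def} defines $\cU_{t_0}\nu(A)=\int \cK_{t_0}(\bx,A)\,\nu(\md\bx)$ for any finite signed $\nu$, and it is linear there. The operator $\I-(1-\alpha)\cU_{t_0}$ is then a well-defined linear map on that space. The rearrangement is simply subtraction of finite signed measures evaluated setwise on Borel sets $A$, so the identity holds as an equality of set functions.

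The only point needing care is this extension of the linearity in Proposition~\ref{prop:OU-sampling-operator-properties}(i) from probability measures to signed combinations. It follows immediately from the integral form of \eqref{eq:Ut-def} for arbitrary real coefficients, so I expect no genuine obstacle.

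For later use (not required here), $\cU_{t_0}$ is a Markov operator, so it has total-variation norm $1$ on signed measures. Hence $(1-\alpha)\cU_{t_0}$ has norm $1-\alpha<1$, and Proposition~\ref{prop:neumann-series-convergence} lets one invert $\I-(1-\alpha)\cU_{t_0}$. Combined with the semigroup property, this yields the Neumann series \eqref{eq:neumann}.
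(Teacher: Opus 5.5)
Your proposal is correct and follows the paper's proof: both apply the linearity of $\cU_{t_0}$ to the fixed-point identity $\mu^\star = \cU_{t_0}\big(\alpha\pdata + (1-\alpha)\mu^\star\big)$ and rearrange the terms. Your remark that the subtraction is to be read in the space of finite signed measures, where the kernel formula still defines $\cU_{t_0}$ linearly, makes explicit something the paper leaves implicit.
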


\begin{proof}
Assume $\mu^\star$ is a fixed point of $\cT$. Then, 
\begin{align*}
\cT \mu^\star = \mu^\star 
&\iff 
\cU_{t_0}\big(\alpha \pdata + (1-\alpha)\mu^\star\big) = \mu^\star\\
&\iff 
\alpha\cU_{t_0}\big(\pdata) + (1-\alpha)\cU_{t_0}(\mu^\star\big) = \mu^\star\\
&\iff (\I - (1-\alpha)\cU_{t_0})\mu^\star = \alpha \cU_{t_0}(\pdata),
\end{align*}
where the second and third lines use the linearity of the sampling operator $\cU_{t_0}$ (Proposition \ref{prop:OU-sampling-operator-properties}).
\end{proof}

\section{Proofs}
\subsection{Proof of Theorem \ref{thm:collapse-distrib-existence-uniqueness}}
\label{app:proof-thm-collapse}
 
\begin{proof}
\medskip
\textit{(i) Limiting distribution expression}

\medskip
Recall that $\cT: \mu \mapsto \cU_{t_0}(\alpha\pdata + (1-\alpha)\mu)$, where the operator $\cU_{t_0}$ is defined in \eqref{eq:UtOU}, is the one-step ideal sampling operator in the error-free regime. By Proposition \ref{prop:contraction-one-step-sampling}, $\cT$ is a contraction on $(\Probspace_2(\R^d), \cW_2)$ with constant $\kappa = \sqrt{(1-\alpha)}\,e^{-t_0/2} \;<\; 1$, and Banach's fixed point theorem \cite{Banach1922} gives existence and uniqueness.

\medskip
\noindent
Since $\kappa < 1$, by Proposition \ref{prop:neumann-series-convergence}, the Neumann series converges:
$(\I-(1-\alpha)\cU_{t_0})^{-1} = \sum_{k=0}^\infty (1-\alpha)^k \cU_{t_0}^k$. By the semigroup property, $\cU_{t_0}^k = \cU_{kt_0}$ (Proposition \ref{prop:OU-sampling-operator-properties} \textit{(ii)}), hence
\begin{align*}
  \spinf = (\I-(1-\alpha)\cU_{t_0})^{-1}\,\alpha \cU_{t_0}(\pdata) 
  &= \alpha\sum_{k=0}^\infty (1-\alpha)^k \cU_{kt_0}\cU_{t_0}\pdata\\
  &= 
  \alpha\sum_{k=0}^\infty (1-\alpha)^k \cU_{(k+1)t_0}\pdata.
\end{align*}

\medskip
\textit{(ii) Geometric rate of convergence}

Define $D_N := \cW_2(\phat^N, \spinf)$, for $N \ge 0$. We claim that $D_{N+1} \leq \kappa\,D_N$. By the triangle inequality,
$$
D_{N+1} 
= \cW_2(\phat^{N+1}, \spinf) 
\leq \cW_2(\phat^{N+1},\, \cT\phat^N) 
+ \cW_2(\cT\phat^N,\, \spinf)
$$

\medskip
\textit{First term.} Since we are working in the error-free regime $\phat^{N+1} = \cT\phat^N$ (perfect sampler) thus $\cW_2(\phat^{N+1},\, \cT\phat^N) = 0$.

\medskip
\textit{Second term.} Since $\spinf = \cT\spinf$, we need to bound $\cW_2(\cT\phat^N,\, \cT\spinf)$. Once again, the contraction property of $\cT$ (Proposition \ref{prop:contraction-one-step-sampling}) yields, 
$$
\cW_2(\cT\phat^N,\, \spinf) \le \kappa \cW_2(\phat^{N}, \spinf) 
$$
where $\kappa := \sqrt{1-\alpha}\,e^{-t_0/2} < 1$. Combining the two bounds:
\begin{equation}\label{eq:one-step-recursion-v1}
D_{N+1} \leq \kappa\,D_N \qquad N\ge 0,
\end{equation}
and iterating \eqref{eq:one-step-recursion-v1} yields the geometric decay rate, 
$$
D_{N} \leq \kappa^N\,D_0 \qquad N\ge 0.
$$
\end{proof}

\subsection{Proof of Corollary \ref{cor:limiting-behavior}}
\label{app:proof-limiting-behavior}

\begin{proof}
In this proof, we use the Neumann series representation (Theorem \ref{thm:collapse-distrib-existence-uniqueness})
\begin{equation}
\label{eq:pstar-limits-proof}
p_\infty^\star
=
\sum_{k=0}^\infty \pi_k(\alpha)
\cU_{(k+1)t_0}(\pdata),
\end{equation}
with geometric weights $\pi_k(\alpha) = \alpha(1-\alpha)^k$. We also denote $D:=\cW_2(\pdata,\gamma)<\infty$. The proofs mainly rely on the two following facts: 
\begin{enumerate}
    \item \textit{Convexity of $\cW_2$ under mixtures.} By Proposition \ref{prop:mixture-convexity-W2}, for any target distribution $\nu \in \Probspace_2(\R^d)$, since $\spinf$ given in \eqref{eq:pstar-limits-proof} is a convex combination with $\sum_k \pi_k(\alpha)=1$,
    $$
    \cW_2^2(\spinf, \nu) \le \sum_{k=0}^\infty \pi_k(\alpha) \cW_2^2(\cU_{(k+1)t_0}\pdata, \nu).
    $$
    \item \textit{Contraction of $\cW_2$ on the OU sampling operator}. By Proposition \ref{prop:OU-W2-contraction}, for any $\mu \in \Probspace_2(\R^d)$, the OU sampling operator satisfies
    \begin{equation}
    \label{eq:OU-contract-gamma-proof}
    \cW_2(\cU_t\mu,\gamma)
    =
    \cW_2(\cU_t\mu,\cU_t\gamma)
    \le
    e^{-t/2}\cW_2(\mu,\gamma),
    \end{equation}
    since $\gamma$ is invariant under $\cU_t$.
\end{enumerate}

\noindent
\textit{(i) Limit $\alpha\to1^-$.}
Fix $t_0>0$ and let $\mu_k:=\cU_{(k+1)t_0}(\pdata)$. Then $\mu_0=\cU_{t_0}(\pdata)$ and
$$
p_\infty^\star
=
\alpha \mu_0+\sum_{k=1}^\infty \pi_k(\alpha)\mu_k.
$$
We can use $\{ \pi_k(\alpha) \}_{k \ge 0}$ to define a natural coupling between $\bX \sim \mu_0$ and $\bY \sim \spinf$, where $\bX=\bY$ with probability $\pi_0(\alpha)=\alpha$. Then, by Proposition \ref{prop:mixture-convexity-W2} with target $\mu_0$,
$$
\cW_2^2(p_\infty^\star,\mu_0)
\le
\sum_{k=1}^\infty \pi_k(\alpha)
\cW_2^2(\mu_k,\mu_0).
$$
Moreover, by the triangle inequality and \eqref{eq:OU-contract-gamma-proof},
$$
\cW_2(\mu_k,\mu_0)
\le
\cW_2(\mu_k,\gamma)+\cW_2(\mu_0,\gamma)
\le
e^{-(k+1)t_0/2}D+e^{-t_0/2}D
\le
2D.
$$
Therefore
$$
\cW_2^2(p_\infty^\star,\cU_{t_0}\pdata)
\le
4D^2\sum_{k=1}^\infty \pi_k(\alpha)
=
4D^2\sum_{k=1}^\infty \alpha(1-\alpha)^k
=
4D^2(1-\alpha),
$$
which converges to zero as $\alpha\to1^-$.

\medskip

\noindent
\textit{(ii) Limit $\alpha\to0^+$.}
Using \eqref{eq:mixture-convexity-W2} with target $\gamma$ and then
\eqref{eq:OU-contract-gamma-proof},
\begin{align}
\cW_2^2(p_\infty^\star,\gamma)
&\le
\sum_{k=0}^\infty
\pi_k(\alpha)
\cW_2^2\bigl(\cU_{(k+1)t_0}\pdata,\gamma\bigr) \\
&\le
D^2
\sum_{k=0}^\infty
\pi_k(\alpha) e^{-(k+1)t_0}.
\end{align}
The geometric sum is
$$
\sum_{k=0}^\infty
\pi_k(\alpha) e^{-(k+1)t_0}
=
\sum_{k=0}^\infty
\alpha(1-\alpha)^k e^{-(k+1)t_0}
=
\frac{\alpha e^{-t_0}}
{1-(1-\alpha)e^{-t_0}}.
$$
Since $t_0>0$, the denominator converges to $1-e^{-t_0}>0$ as
$\alpha\to0^+$. Hence
$$
\cW_2^2(p_\infty^\star,\gamma)
\le
D^2
\frac{\alpha e^{-t_0}}
{1-(1-\alpha)e^{-t_0}}
\longrightarrow 0.
$$

\medskip

\noindent
\textit{(iii) Limit $t_0\to0^+$.}
Fix $\alpha\in(0,1]$. By Proposition \ref{prop:OU-sampling-operator-properties} \textit{(vii)}, for each fixed $k$,
\begin{equation}
\label{eq:continuity-Ukplus1-proof}
\cU_{(k+1)t_0}(\pdata)
\xrightarrow[t_0\to0^+]{\cW_2}
\pdata.
\end{equation}

Now applying Proposition \ref{prop:mixture-convexity-W2} with $\pdata$ yields,
\begin{equation}
    \cW_2^2(p_\infty^\star,\pdata)
\le
\sum_{k=0}^\infty
\alpha(1-\alpha)^k
\cW_2^2\bigl(
\cU_{(k+1)t_0}\pdata,
\pdata
\bigr).
\label{eq:W2_t0_bound}
\end{equation}
But by \eqref{eq:continuity-Ukplus1-proof}, for each fixed $k$, $\cW_2^2\bigl(
\cU_{(k+1)t_0}\pdata,\pdata\bigr) \xrightarrow[t_0\to0^+]{}0$. Moreover,
uniformly in $k$ and $t_0\ge0$,
$$
\cW_2\bigl(\cU_{(k+1)t_0}\pdata,\pdata\bigr)
\le
\cW_2\bigl(\cU_{(k+1)t_0}\pdata,\gamma\bigr)
+
\cW_2(\gamma,\pdata)
\le
2D.
$$
Thus each summand in \eqref{eq:W2_t0_bound} is bounded by
$
4D^2\alpha(1-\alpha)^k,
$
which is summable in $k$. Therefore, by dominated convergence for series, we have 
\begin{align*}
    \lim_{t_0 \to 0} \cW_2^2(p_\infty^\star,\pdata) & \le  \lim_{t_0 \to 0} 
\sum_{k=0}^\infty
\alpha(1-\alpha)^k
\cW_2^2\bigl(
\cU_{(k+1)t_0}\pdata,
\pdata
\bigr) \\ 
& = \sum_{k=0}^\infty
\alpha(1-\alpha)^k
\lim_{t_0 \to 0} \cW_2^2\bigl( \cU_{(k+1)t_0}\pdata,
\pdata
\bigr)
=  0.
\end{align*}

\medskip

\noindent
\textit{(iv) Strict positivity when $\pdata\neq\gamma$.}
If one assumes by contradiction that $\cW_2(p_\infty^\star,\pdata)=0$, then $p_\infty^\star=\pdata$ but since $p_\infty^\star$ is characterized as a fixed
point of $\mu\mapsto \cU_{t_0}\bigl(\alpha \pdata+(1-\alpha)\mu\bigr)$, we obtain
$$
\pdata
=
\cU_{t_0}\bigl(\alpha \pdata+(1-\alpha)\pdata\bigr)
=
\cU_{t_0}\pdata.
$$

Thus $\pdata$ is invariant under the OU transition at time $t_0$. But, the OU semigroup has the unique invariant probability measure $\gamma=\cN(0,\bI_d)$ (Proposition \ref{prop:OU-sampling-operator-properties} \textit{(v)}). Therefore $\pdata = \gamma$, and this contradicts the initial assumption. Thus, $\pdata \neq \gamma$ and $\cW_2(p_\infty^\star,\pdata)>0$.
\end{proof}

\subsection{Moments of limiting distribution}
\label{app:moments-collapse-proof}

Looking at the first two moments of the collapsed distribution $\spinf$ hints at what collapse imply in terms of concentration. 

\begin{proposition}[Moments]\label{prop:moments-collapse}
The mean and covariance of $\spinf$ are
\begin{align}
  \E_{\spinf}[\bX]
  &= \frac{\alpha\,e^{-t_0/2}}{1-(1-\alpha)e^{-t_0/2}}\;\E_{\pdata}[\bX],
  \label{eq:mean-collapse}\\[4pt]
  \mathrm{Cov}_{\spinf}(\bX)
  &= \frac{\alpha\,e^{-t_0}}{1-(1-\alpha)e^{-t_0}}\;\mathrm{Cov}_{\pdata}(\bX)
  + \left(1 - \frac{\alpha\,e^{-t_0}}{1-(1-\alpha)e^{-t_0}}\right)\bI_d.
  \label{eq:cov-collapse}
\end{align}
The mean is attenuated relative to $\pdata$ for any $\alpha < 1$, and the covariance is pulled toward $\bI_d$. Both effects are monotone in $\alpha$ and $t_0$.
\end{proposition}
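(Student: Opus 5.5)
I will compute the first two moments directly from the Neumann representation \eqref{eq:neumann} of Theorem~\ref{thm:collapse-distrib-existence-uniqueness}, $\spinf=\sum_{k\ge0}\alpha(1-\alpha)^k\,\cU_{(k+1)t_0}(\pdata)$. Since $\spinf$ is a countable mixture, the expectation of any integrable function under $\spinf$ is the corresponding weighted sum of expectations under the components. Interchanging sum and integral is justified by Tonelli for $\|\bx\|^2\ge 0$, together with Proposition~\ref{prop:OU-sampling-operator-properties}\,(iv), which bounds each component's second moment uniformly by $\max(\E_{\pdata}\|\bX\|^2,d)$. For a single component $\bY_t=e^{-t/2}\bX+\sqrt{1-e^{-t}}\bZ$ with $\bX\sim\pdata$ independent of $\bZ$, we have $\E[\bY_t]=e^{-t/2}\E_{\pdata}[\bX]$ and $\E[\bY_t\bY_t^\top]=e^{-t}\E_{\pdata}[\bX\bX^\top]+(1-e^{-t})\bI_d$.

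For the mean, I will sum the geometric series:
$$\sum_{k\ge0}\alpha(1-\alpha)^k e^{-(k+1)t_0/2}=\frac{\alpha e^{-t_0/2}}{1-(1-\alpha)e^{-t_0/2}},$$
which gives \eqref{eq:mean-collapse}. Equivalently, one can take expectations in the fixed-point identity $\spinf=\cU_{t_0}(\alpha\pdata+(1-\alpha)\spinf)$ to get the linear equation $m=e^{-t_0/2}(\alpha m_{\mathrm{data}}+(1-\alpha)m)$. I would mention this as a sanity check.

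For the covariance, I will work with second moments $S:=\E[\bX\bX^\top]$ rather than covariances, since a mixture's covariance is not the mixture of component covariances. Set $c:=\sum_k\alpha(1-\alpha)^ke^{-(k+1)t_0}=\alpha e^{-t_0}/(1-(1-\alpha)e^{-t_0})$. Summing the component second moments gives $S_\infty=c\,S_{\mathrm{data}}+(1-c)\bI_d$. Then
$$\Cov_{\spinf}=S_\infty-m_\infty m_\infty^\top=c\,\Cov_{\pdata}+(1-c)\bI_d+c\,m m^\top-a^2 m m^\top,$$
where $m=\E_{\pdata}[\bX]$ and $a=\alpha e^{-t_0/2}/(1-(1-\alpha)e^{-t_0/2})$.

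This last step is the main obstacle: the stated formula \eqref{eq:cov-collapse} holds exactly only if the extra term $(c-a^2)mm^\top$ vanishes. In general $c\ne a^2$. Jensen on the mixture weights gives $c\ge a^2$, with equality only in degenerate cases. So I expect to need either the assumption that $\pdata$ is centered, or a restatement of \eqref{eq:cov-collapse} as the second-moment identity $\E_{\spinf}[\bX\bX^\top]=c\,\E_{\pdata}[\bX\bX^\top]+(1-c)\bI_d$. In the proof I would state \eqref{eq:cov-collapse} under $\E_{\pdata}[\bX]=0$ and note the correction $(c-a^2)mm^\top\succeq0$ in general.

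Finally, for the monotonicity claims I will view $a$ and $c$ as functions of the form $g(\alpha,\lambda)=\alpha\lambda/(1-(1-\alpha)\lambda)$ with $\lambda=e^{-t_0/2}$ or $\lambda=e^{-t_0}$ in $(0,1)$. Its partial derivatives are
$$\partial_\alpha g=\frac{\lambda(1-\lambda)}{\bigl(1-(1-\alpha)\lambda\bigr)^2}>0,\qquad \partial_\lambda g=\frac{\alpha}{\bigl(1-(1-\alpha)\lambda\bigr)^2}>0.$$
Hence $g$ increases in $\alpha$ and decreases in $t_0$, with $g<1$ whenever $\alpha<1$. This gives the attenuation of the mean and the pull of the covariance toward $\bI_d$, both monotone in the parameters.
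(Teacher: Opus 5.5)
Your mean computation (componentwise means from the Neumann series, then a geometric sum) matches the paper's. So does your monotonicity analysis of $g(\alpha,\lambda)=\alpha\lambda/(1-(1-\alpha)\lambda)$, and your uniform-second-moment justification for exchanging sum and integral is a small improvement on the paper, which does this without comment.

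For the covariance you genuinely part ways with the paper, and you are the one who is right. The paper writes $\Cov_{\spinf}(\bX)=\alpha\sum_{k}(1-\alpha)^k\Cov_{\cU_{(k+1)t_0}\pdata}(\bX)$, i.e.\ it takes the covariance of a mixture to be the mixture of the component covariances. This drops the between-component term $\sum_k\alpha(1-\alpha)^k(m_k-\bar m)(m_k-\bar m)^\top$, where $m_k=e^{-(k+1)t_0/2}m$ and $\bar m=am$. That term is exactly your $(c-a^2)mm^\top$.

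Your Jensen inequality is strict for all $\alpha\in(0,1)$ and $t_0>0$: every weight $\alpha(1-\alpha)^k$ is positive and $k\mapsto e^{-(k+1)t_0/2}$ is strictly decreasing, so $c>a^2$. For example, $\alpha=1/2$ and $e^{-t_0/2}=1/2$ give $a=1/3$, $c=1/7$, $c-a^2=2/63$. Hence \eqref{eq:cov-collapse} is false whenever $\E_{\pdata}[\bX]\neq0$. Your second-moment route is the correct proof, with either of your two repairs: assume $\pdata$ is centered, or state the identity $\E_{\spinf}[\bX\bX^\top]=c\,\E_{\pdata}[\bX\bX^\top]+(1-c)\bI_d$.

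The repair is also needed for the qualitative sentence of the statement, not just the formula. Take $\pdata=\Normal(m,\bI_d)$ with $m\neq0$; the true limiting covariance is $\bI_d+(c-a^2)mm^\top$. It is pushed away from $\bI_d$ along $m$ rather than pulled toward it. It is also not monotone in $\alpha$, since $c-a^2\to0$ both as $\alpha\to0^+$ and as $\alpha\to1^-$ while being positive in between. So restrict the ``pulled toward $\bI_d$, monotone'' claim to the centered case, or state it for the second-moment matrix.

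One minor sharpening of your last step: $g(\alpha,\lambda)<1$ if and only if $\lambda<1$. The strict attenuation of the mean therefore comes from $t_0>0$, not from $\alpha<1$, and it persists even at $\alpha=1$.
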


\begin{proof}
Recall that, 
\begin{equation}\label{eq:neumann-2}
  \spinf \;=\; \alpha\sum_{k=0}^\infty (1-\alpha)^k\,\cU_{(k+1)t_0}\pdata.
\end{equation}
Moreover, recall from \eqref{eq:UtOU} that the Ornstein-Uhlenbeck operator acts on marginals as follows: 
\begin{equation*}
  \cU_t\mu :=\; \Law\!\Bigl(e^{-t/2}\bX + \sqrt{1-e^{-t}}\,\bZ\Bigr),
  \qquad \bX \sim \mu,\quad \bZ \sim \Normal(0,\Id) \text{ independent}.
\end{equation*}
Thus for any $\mu \in \Probspace_2(\R^d)$ and any $t>0$,  $$
\E_{\bX \sim \cU_t(\mu)}[\bX] = e^{-t/2}\E_{\bX \sim \mu}[\bX], \qquad \mathrm{Cov}_{\bX \sim \cU_t(\mu)}(\bX) = e^{-t}\mathrm{Cov}_{\bX \sim \mu}(\bX) + (1-e^{-t})\bI_d.
$$ 
Plugging this in the series \eqref{eq:neumann} yields
\begin{align*}
\E_{\bX \sim \spinf}[\bX] 
& = 
\alpha\sum_{k=0}^\infty (1-\alpha)^k\,\E_{\bX_k \sim \cU_{(k+1)t_0}\pdata}[\bX_k] \\
&=
\alpha\sum_{k=0}^\infty (1-\alpha)^k\,e^{-(k+1)t_0/2}\E_{\bX \sim \pdata}[\bX]\\
&= 
\alpha e^{-t_0/2}\E_{\bX \sim \pdata}[\bX]\sum_{k=0}^\infty [e^{-t_0/2}(1-\alpha)]^k\\
&= 
\frac{\alpha\,e^{-t_0/2}}{1-(1-\alpha)e^{-t_0/2}}\;\E_{\pdata}[\bX].
\end{align*}
Similarly, 
\begin{align*}
\mathrm{Cov}_{\bX \sim p_\infty}[\bX] 
&= 
\alpha\sum_{k=0}^\infty (1-\alpha)^k\,\Cov_{\bX_k \sim \cU_{(k+1)t_0}\pdata}[\bX_k]\\
&= 
\alpha\sum_{k=0}^\infty (1-\alpha)^k\,e^{-(k+1)t_0}\Cov_{\bX \sim \pdata}[\bX] + \alpha\sum_{k=0}^\infty (1-\alpha)^k (1-e^{-(k+1)t_0})\bI_d\\
&= 
\frac{\alpha\,e^{-t_0}}{1-(1-\alpha)e^{-t_0}}\;\mathrm{Cov}_{\pdata}(\bX)
  + \left(1 - \frac{\alpha\,e^{-t_0}}{1-(1-\alpha)e^{-t_0}}\right)\bI_d.
\end{align*}
\end{proof}

\subsection{Proof of Proposition~\ref{prop:spectral-rpz-limit}}
\label{app:proof-spectral-rpz-limit}

\begin{proof}
Under Assumption~\ref{ass:spectral-regularity-A4}, the data distribution admits
a density $\fdata := \frac{\md \pdata}{\md \gamma}
\in L^2(\gamma)$, where $\gamma=\cN(0,\bI_d)$. Since the OU sampling operator
$\cU_t$ preserves absolute continuity with respect to $\gamma$, the collapse
distribution $\spinf$ also admits a density
$
\sfinf := \frac{\md \spinf}{\md \gamma}.
$

From Theorem~\ref{thm:collapse-distrib-existence-uniqueness}, we have the
Neumann-series representation
\begin{equation}
\label{eq:neumann-measure-proof}
\spinf
=
\alpha \sum_{k=0}^\infty (1-\alpha)^k
\cU_{(k+1)t_0}\pdata.
\end{equation}

Recall $(P_t)_{t\ge0}$ denotes the OU semigroup acting on $L^2(\gamma)$ (see Section \ref{app:extended-background}). By Proposition \ref{prop:OU-sampling-operator-properties} \textit{(vi)}),
$$
\frac{\md(\cU_t\mu)}{\md\gamma}
=
P_t\!\left(\frac{\md\mu}{\md\gamma}\right).
$$
Applying this to \eqref{eq:neumann-measure-proof} and using the linearity of $\cU_t$, we obtain
\begin{equation}
\label{eq:neumann-density-proof}
\sfinf
=
\alpha \sum_{k=0}^\infty (1-\alpha)^k
P_{(k+1)t_0} \fdata,
\end{equation}
with convergence in $L^2(\gamma)$.

Since $\fdata \in L^2(\gamma)$, by Proposition \ref{prop:orthogonal-basis-L2gamma}
\begin{equation}
\label{eq:ortho-decomp-mode-proof}
\fdata
=
\sum_{\bn\in\mathbb N^d}
\langle \fdata, H_{\bn}\rangle_{L^2(\gamma)} H_{\bn},
\end{equation}
with convergence in $L^2(\gamma)$. But, by Proposition \ref{prop:hermite-poly-diagonalize-OU-generator}, for each $\bn \in \N^d$ and $k\ge0$,
\begin{equation}
\label{eq:hermite-diagonalize-kplus1-proof}
P_{(k+1)t_0}H_\bn = e^{-|\bn|(k+1)t_0/2}H_\bn.
\end{equation}
Thus, applying $P_{(k+1)t_0}$, and combining equations \eqref{eq:hermite-diagonalize-kplus1-proof} and \eqref{eq:ortho-decomp-mode-proof} yields,
\begin{equation}
\label{eq:P-kplus1-on-hermite}
P_{(k+1)t_0} \fdata
=
\sum_{\bn\in\mathbb N^d}
e^{-|\bn|(k+1)t_0/2}
\langle \fdata, H_{\bn}\rangle_{L^2(\gamma)} H_{\bn}.
\end{equation}

Substituting \eqref{eq:P-kplus1-on-hermite} into \eqref{eq:neumann-density-proof},
\begin{align}
f_\infty^\star
&=
\alpha \sum_{k=0}^\infty (1-\alpha)^k
\sum_{\bn\in\mathbb N^d}
e^{-|\bn|(k+1)t_0/2}
\langle f_{\mathrm{data}}, H_{\bn}\rangle H_{\bn}.
\end{align}
Since $P_t$ is a contraction on $L^2(\gamma)$ and the coefficients are
square-summable, the series converges absolutely in $L^2(\gamma)$, which
justifies exchanging the sums. Hence,
$$
f_\infty^\star
=
\sum_{\bn\in\mathbb N^d}
\left[
\alpha \sum_{k=0}^\infty (1-\alpha)^k
e^{-|\bn|(k+1)t_0/2}
\right]
\langle f_{\mathrm{data}}, H_{\bn}\rangle H_{\bn}.
$$

Now looking at each term inside the series on $\N^d$, we observe that, for each $\bn \in \N^d$,
\begin{align*}
\alpha \sum_{k=0}^\infty (1-\alpha)^k e^{-|\bn|(k+1)t_0/2}
&=
\alpha e^{-|\bn|t_0/2}
\sum_{k=0}^\infty \left[(1-\alpha)e^{-|\bn|t_0/2}\right]^k \\
&= 
\alpha e^{-|\bn|t_0/2}
\frac{1}
{1-(1-\alpha)e^{-|\bn|t_0/2}} \qquad ((1-\alpha)e^{-|\bn|t_0/2}<1).
\end{align*}
Denoting $m_\bn(\alpha, t_0) = \frac{\alpha e^{-|\bn|t_0/2}}
{1-(1-\alpha)e^{-|\bn|t_0/2}}$, yields \eqref{eq:spectral}:
$$
\langle f_\infty^\star, H_{\bn}\rangle_{L^2(\gamma)}
=
m_{\bn}(\alpha,t_0)
\langle f_{\mathrm{data}}, H_{\bn}\rangle_{L^2(\gamma)},
$$

Therefore, each Hermite mode of the data distribution is attenuated by the explicit factor \(m_{\bn}(\alpha,t_0)\). This shows that the collapse distribution is a spectrally filtered version of \(\pdata\), with preferential suppression of high-frequency modes.
\end{proof}

\subsection{Proof of Theorem~\ref{thm:annealed-truncation-corrected}}
\label{app:proof-annealed-truncation-corrected}
\begin{proof}
We consider the error-free recursion
\begin{equation}
\label{eq:proof-annealed-recursion}
\phat^{i+1}
=
\cU_{t_0^{(i)}}\bigl(\alpha\,\pdata + (1-\alpha)\phat^i\bigr),
\qquad i\ge 0.
\end{equation}

We first prove \eqref{eq:annealed-unrolled} by induction on \(N\).
For \(N=1\), by linearity of $\cU_t$ (Proposition \ref{prop:OU-sampling-operator-properties} \textit{(i)}),
$$
\phat^1
=
\cU_{t_0^{(0)}}\bigl(\alpha\,\pdata + (1-\alpha)\phat^0\bigr)
=
\alpha\,\cU_{t_0^{(0)}}\pdata
+
(1-\alpha)\,\cU_{t_0^{(0)}}\pdata,
$$
which agrees with \eqref{eq:annealed-unrolled}, since
$$
\sigma_{0,1}=t_0^{(0)},
\qquad
s_{0,1}=t_0^{(0)}.
$$

Assume now that \eqref{eq:annealed-unrolled} holds for some \(N\ge 1\), that is,
$$
\phat^N
=
\alpha \sum_{m=0}^{N-1}(1-\alpha)^m\,\cU_{\sigma_{m,N}}\pdata
+
(1-\alpha)^N \cU_{s_{0,N}}\pdata,
$$
where $\sigma_{m,N}=\sum_{\ell=N-1-m}^{N-1} t_0^{(\ell)}$ and $s_{0,N}=\sum_{\ell=0}^{N-1} t_0^{(\ell)}$. Once again, by linearity of $\cU_t$  and leveraging the induction assumption,
\begin{align*}
\phat^{N+1}
&=
\cU_{t_0^{(N)}}\bigl(\alpha\,\pdata + (1-\alpha)\phat^N\bigr) \\
&=
\alpha\,\cU_{t_0^{(N)}}\pdata
+
(1-\alpha)\,\cU_{t_0^{(N)}}\phat^N \\
&=
\alpha\,\cU_{t_0^{(N)}}\pdata
+
(1-\alpha)\,\cU_{t_0^{(N)}}\!\left[
\alpha \sum_{m=0}^{N-1}(1-\alpha)^m\,\cU_{\sigma_{m,N}}\pdata
+
(1-\alpha)^N \cU_{s_{0,N}}\phat^0
\right]\\
&=\alpha\,\cU_{t_0^{(N)}}\pdata
+
\!\left[
\alpha \sum_{m=0}^{N-1}(1-\alpha)^{m+1}\,\cU_{t_0^{(N)}}\cU_{\sigma_{m,N}}\pdata
+
(1-\alpha)^{N+1} \cU_{t_0^{(N)}}\cU_{s_{0,N}}\phat^0
\right]
\end{align*}
But, by semigroup property of $\cU_t$ (Proposition \ref{prop:OU-sampling-operator-properties}, \textit{(ii)}), $,\cU_{t_0^{(N)}}\cU_{\sigma_{m,N}} = \cU_{t_0^{(N)} + \sigma_{m,N}}$ and similarly $\cU_{t_0^{(N)}}\cU_{s_{0,N}} = \cU_{t_0^{(N)}+s_{0,N}}$. Observing that $t_0^{(N)}+\sigma_{m,N}
= \sum_{\ell=N-m}^{N} t_0^{(\ell)} = \sigma_{m+1,N+1}$ and $t_0^{(N)}+s_{0,N}=s_{0,N+1}$ finally yields,
\begin{equation}
\phat^{N+1}
=
\alpha\sum_{m=0}^{N}(1-\alpha)^m\,\cU_{\sigma_{m,N+1}}\pdata
+
(1-\alpha)^{N+1}\cU_{s_{0,N+1}}\pdata,
\label{eq:phat-expression-proof-trunc}
\end{equation}
which is exactly \eqref{eq:annealed-unrolled} with $N$ replaced by \(N+1\). This proves the
representation formula for all \(N\ge 1\).

\medskip
\noindent
\textit{Asymptotic behavior when \(t_0^{(i)}\to \tinf\).}
Assume that \(t_0^{(i)}\to \tinf\in[0,\infty)\). We define the geometric weights $\pi_m(\alpha) = \alpha(1-\alpha)^m$ and the \textit{annealed truncation limiting distribution} $\truncpinf$ as,
$$
\truncpinf
:=
\sum_{m=0}^{\infty}\pi_m(\alpha)\,
\cU_{(m+1)\tinf}\pdata.
$$
The objective of this part is to show that $\cW_2(\phat^N, \truncpinf) \underset{N \to \infty}{\longrightarrow}0$. Observe that, for any fixed \(m\ge 0\),
\begin{equation}
    \label{eq:proof-sigma-limit}
\sigma_{m,N} : =\sum_{\ell=N-1-m}^{N-1} t_0^{(\ell)} \, \longrightarrow  \, (m+1)\tinf
\quad\text{as }N\to\infty.
\end{equation}
By continuity of \(t\mapsto \cU_t(\pdata)\) in \(\cW_2\) (Proposition \ref{prop:OU-sampling-operator-properties}, \textit{(vii)}),
\begin{equation}
\label{eq:continuity-tinf-proof}
\cU_{\sigma_{m,N}}(\pdata)
\;\xrightarrow[N\to\infty]{\cW_2}\;
\cU_{(m+1)\tinf}(\pdata)
\qquad\text{for each fixed }m.
\end{equation}
We now compare $\phat^N$ and $\truncpinf$ as mixtures with the
same total weights. To that end, we decompose $\truncpinf$ as follows,
$$
p_\infty^{(\tinf)}
=
\sum_{m=0}^{N-1}\pi_m(\alpha)\,\cU_{(m+1)\tinf}(\pdata)
+
(1-\alpha)^N r_N,
$$
where
$$
r_N
:=
\frac{1}{(1-\alpha)^N}
\sum_{m=N}^{\infty}\pi_m(\alpha)
\cU_{(m+1)\tinf}\pdata
=
\alpha\sum_{j=0}^{\infty}(1-\alpha)^j
\cU_{(N+j+1)\tinf}\pdata.
$$
The measure \(r_N\) is a probability measure. By convexity of \(\cW_2^2\) under mixtures (Proposition~\ref{prop:mixture-convexity-W2}), we have
\begin{align}
\cW_2^2\!\left(\phat^N,p_\infty^{(\tinf)}\right)
&\le
\sum_{m=0}^{N-1}\pi_m(\alpha)
\cW_2^2\!\left(
\cU_{\sigma_{m,N}}\pdata,
\cU_{(m+1)\tinf}\pdata
\right)
+
(1-\alpha)^N
\cW_2^2\!\left(
\cU_{s_{0,N}}\pdata,
r_N
\right).
\label{eq:annealed-asymptotic-bound}
\end{align}

We treat the two terms separately. 

\medskip
\textit{First term.} Fix $M \ge 0$, then
\begin{align*}
\sum_{m=0}^{N-1}\pi_m(\alpha)
\cW_2^2\!\left(
\cU_{\sigma_{m,N}}\pdata,
\cU_{(m+1)\tinf}\pdata 
\right)
&\le 
\sum_{m=0}^{M}\pi_m(\alpha)
\cW_2^2\!\left(
\cU_{\sigma_{m,N}}\pdata,
\cU_{(m+1)\tinf}\pdata
\right)\\
&+
\sum_{m=M+1}^{N-1}\pi_m(\alpha) B,
\end{align*}
where \(B<\infty\) is a uniform second-moment bound. Proposition \ref{prop:OU-sampling-operator-properties} \textit{(iv)} ensures that, for any $t$, $\cU_t \pdata$ has uniformly bounded second moment,
\begin{equation}
\label{eq:uniform-second-bound-moment-proof-trunc}
\sup_{t\ge0}\int \|\bx\|^2_2\,\cU_t\pdata(\md\bx)
\le
\int\|\bx\|^2_2\,\pdata(\md\bx)+d,
\end{equation}
Therefore, all the above pairwise $\cW_2$-distances are bounded uniformly.

For fixed $M$, the finite sum converges to zero by the continuity invoked in \eqref{eq:continuity-tinf-proof}. The tail satisfies
\begin{align*}
\sum_{m=M+1}^{N-1}\pi_m(\alpha) B
= 
B\sum_{m=M+1}^{N-1}\alpha(1-\alpha)^m  
&\le
B\sum_{m=M+1}^{\infty}\alpha(1-\alpha)^m \\[0.3em]
&=
B(1-\alpha)^{M+1}.
\end{align*}
which can be made arbitrarily small by choosing \(M\) large. Hence the first term in \eqref{eq:annealed-asymptotic-bound} converges to zero.

\medskip
\textit{Second term.} For the second term in \eqref{eq:annealed-asymptotic-bound}, as stated in  \eqref{eq:uniform-second-bound-moment-proof-trunc}, $\cU_{s_{0,N}}\pdata$  has uniformly bounded second moments by Proposition \ref{prop:OU-sampling-operator-properties} \textit{(iv)}. As $r_N$ is  a convex combination of measures of the form $\cU_{t_j}\pdata$ for $j \ge 0$, its second moment is also bounded. Therefore, there exists a constant $B_0 < \infty$ such that,
$$
\cW_2^2\!\left(
\cU_{s_{0,N}}(\phat^0),
r_N
\right)
\le B_0
$$
for all \(N\). Since \((1-\alpha)^N\to0\), the second term in
\eqref{eq:annealed-asymptotic-bound} also converges to zero.

Combining the two estimates yields the desired convergence
$$
\cW_2\!\left(\phat^N,p_\infty^{(\tinf)}\right)\to0.
$$

\medskip
\textit{Case $\tinf = 0$.}
If \(\tinf=0\), then \(\cU_{(m+1)\tinf}=\cU_0=\I\) for every \(m\), and thus
$$
p_\infty^{(0)}
=
\alpha \sum_{m=0}^{\infty}(1-\alpha)^m\,\pdata
=
\pdata,
$$
since \(\sum_{m=0}^{\infty}\alpha(1-\alpha)^m=1\). This proves that the recursive
compounding effect disappears asymptotically when \(t_0^{(i)}\to 0\).
\end{proof}

\subsection{Proof of Theorem~\ref{thm:convergence-imperfect-regime}}
\label{app:proof-convergence-imperfect-regime}

\begin{proof}
The proof proceeds in three steps: a one-step recursion, its iteration by induction, and verification of the three regimes.

\textit{(a) One-step Recursion.}

By the triangle inequality,
$$
D_{N+1} 
= \cW_2(\phat^{N+1}, \spinf) 
\leq \cW_2(\phat^{N+1},\, \cT(\phat^N)) 
+ \cW_2(\cT(\phat^N),\, \spinf)
$$
For the first term: since $\phat^{N+1} = \hat{\cS}_N(q_N)$ (imperfect sampler) and $\cT(\phat^N) = \cU_{t_0}(q_N)$ (ideal sampler applied to the same training mixture $q_N$), the definition of $\delta_N$ in~\eqref{eq:one-step-error} gives
$$
\cW_2(\phat^{N+1},\, \cT(\phat^N)) 
= \cW_2(\hat{\cS}_N(q_N),\, \cU_{t_0}(q_N)) 
\leq \delta_N.
$$
For the second term: since $\spinf = \cT(\spinf)$, we need to bound $\cW_2(\cT(\phat^N),\, \cT(\spinf))$. But $\cT$ is a contraction (Proposition \ref{prop:contraction-one-step-sampling}) thus, 
$$
\cW_2(\cT(\phat^N),\, \spinf) 
= \cW_2(\cT(\phat^N),\, \cT(\spinf))
\leq \sqrt{1-\alpha}\,e^{-t_0/2}\;D_N 
= \kappa\,D_N,
$$
where $\kappa := \sqrt{1-\alpha}\,e^{-t_0/2} < 1$. Combining the two bounds:
\begin{equation}\label{eq:one-step-recursion}
D_{N+1} \leq \kappa\,D_N + \delta_N, \qquad N\ge 0.
\end{equation}

\medskip
\textit{(b) Full Induction.}

We prove~\eqref{eq:W2-recursion-main} by induction on $N$. The base case $N = 1$ follows directly from~\eqref{eq:one-step-recursion}: $D_1 \leq \kappa\,D_0 + \delta_0$. Assume the bound holds at generation $N$:
$$
D_N \leq \kappa^N\,D_0 + \sum_{i=0}^{N-1} \kappa^{N-1-i}\,\delta_i.
$$
Then, substituting into~\eqref{eq:one-step-recursion},
\begin{align*}
D_{N+1} 
\leq \kappa\,D_N + \delta_N 
\leq \kappa\Bigl(\kappa^N\,D_0 + \sum_{i=0}^{N-1} \kappa^{N-1-i}\,\delta_i\Bigr) + \delta_N 
= \kappa^{N+1}\,D_0 + \sum_{i=0}^{N} \kappa^{N-i}\,\delta_i,
\end{align*}
which completes the induction and establishes~\eqref{eq:W2-recursion-main}.

We now prove the two asymptotic regimes of Theorem \ref{thm:convergence-imperfect-regime}.

\noindent\textit{(i) Summable perturbations.}
Assume $\sum_{i \geq 0} \delta_i < \infty$, which implies $\delta_N \to 0$ as $N \to \infty$. The first term $\kappa^N\,D_0 \to 0$. For the convolution term, fix $\varepsilon > 0$ and choose $M$ such that $\sum_{i=M}^{\infty} \delta_i < \varepsilon$. For $N > M$, split:
$$
\sum_{i=0}^{N-1} \kappa^{N-1-i}\,\delta_i
= \underbrace{\sum_{i=0}^{M-1} \kappa^{N-1-i}\,\delta_i}_{\to\,0 \text{ as } N \to \infty \text{ ($M$ fixed, $\kappa < 1$)}}
+ \underbrace{\sum_{i=M}^{N-1} \kappa^{N-1-i}\,\delta_i}_{\leq\, \sum_{i=M}^{N-1} \delta_i \;\leq\; \sum_{i=M}^{\infty} \delta_i \,<\, \varepsilon}.
$$
Since $\varepsilon$ is arbitrary, $D_N \to 0$.

\noindent\textit{(ii) Constant perturbation floor.}
If $\delta_i <\delta$, the geometric sum evaluates to
$\sum_{i=0}^{N-1} \kappa^{N-1-i} = \sum_{j=0}^{N-1} \kappa^j = (1-\kappa^N)/(1-\kappa)$,
so
$$
D_N 
\leq \kappa^N\,D_0 + \frac{\delta\,(1-\kappa^N)}{1-\kappa}
\;\xrightarrow{N \to \infty}\; \frac{\delta}{1-\kappa}. 
$$
\end{proof}

\subsection{Explicit characterization of perturbation}
\label{app:proof-perturbation-decomp-prop}

\begin{proposition}[Linear perturbation recursion]
\label{prop:perturbation-decomposition}
Define the perturbation $\xi_N := \phat^N - \spinf$ and, for each generation $i$, let $\eta_i \;:=\; \phat^{i+1} - \cU_{t_0}(q_i)$ be the \emph{one-step perturbation} at generation $i$. Then, the imperfect distribution $ \phat^N$ satisfies:
\begin{equation}\label{eq:explicit-decomposition}
  \phat^N
  \;=\;
  \spinf
  \;+\;
  \underbrace{(1-\alpha)^N\,\cU_{Nt_0}(\xi_0)}_{\text{initialization transient}}
  \;+\;
  \underbrace{\sum_{i=0}^{N-1}(1-\alpha)^{N-1-i}\,
    \cU_{(N-1-i)t_0}(\eta_i)}_{\text{accumulated perturbations}}.
\end{equation}
\end{proposition}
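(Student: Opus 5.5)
The plan is to treat everything as signed measures, so that the linearity of $\cU_t$ (Proposition~\ref{prop:OU-sampling-operator-properties}~\textit{(i)}) extends to differences such as $\xi_N$ and $\eta_i$. Each $\xi_N$ and $\eta_i$ is a finite signed measure of total mass zero. The kernel formula \eqref{eq:Ut-def} defines $\cU_t$ on all finite signed measures, it is linear there, and the semigroup identity $\cU_t\cU_s=\cU_{t+s}$ continues to hold.

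\textbf{Step 1: one-step recursion for $\xi_N$.} By the definition of $\eta_N$, we have $\phat^{N+1}=\cU_{t_0}(q_N)+\eta_N$ with $q_N=\alpha\pdata+(1-\alpha)\phat^N$. The fixed-point identity of Theorem~\ref{thm:collapse-distrib-existence-uniqueness} gives $\spinf=\cU_{t_0}(\alpha\pdata+(1-\alpha)\spinf)$. Subtracting the two and using linearity, the $\alpha\pdata$ contributions cancel exactly, leaving
\[
\xi_{N+1}=(1-\alpha)\,\cU_{t_0}(\xi_N)+\eta_N .
\]

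\textbf{Step 2: unroll by induction on $N$.} For $N=0$ the claimed formula reads $\phat^0=\spinf+\cU_0\xi_0$, which holds since $\cU_0=\I$. For the inductive step, assume
\[
\xi_N=(1-\alpha)^N\cU_{Nt_0}\xi_0+\sum_{i=0}^{N-1}(1-\alpha)^{N-1-i}\cU_{(N-1-i)t_0}\eta_i .
\]
Apply $(1-\alpha)\cU_{t_0}$ to both sides. The semigroup property (Proposition~\ref{prop:OU-sampling-operator-properties}~\textit{(ii)}) turns $\cU_{t_0}\cU_{Nt_0}$ into $\cU_{(N+1)t_0}$ and $\cU_{t_0}\cU_{(N-1-i)t_0}$ into $\cU_{(N-i)t_0}$, and every exponent of $(1-\alpha)$ increases by one. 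Adding $\eta_N=(1-\alpha)^0\cU_0\eta_N$ then supplies the $i=N$ term, which gives the formula at $N+1$. Adding $\spinf$ to both sides yields \eqref{eq:explicit-decomposition}.

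\textbf{Main obstacle.} The only real issue is justification rather than computation. Linearity and the semigroup law are stated in Proposition~\ref{prop:OU-sampling-operator-properties} for probability measures, but here $\cU_t$ is applied to signed measures. I would handle this in a brief remark: the kernel definition is linear in the measure argument for any finite signed measure, by the Jordan decomposition into positive and negative parts. The semigroup law is a kernel identity, namely Chapman--Kolmogorov for $\cK_t$, so it holds for any finite measure and hence, by linearity, for signed ones. Once this is in place, the proof is the elementary unrolling of the affine recursion in Step 1.
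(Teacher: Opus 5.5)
Your proof is correct and follows essentially the same route as the paper: you derive the affine recursion $\xi_{N+1}=(1-\alpha)\cU_{t_0}(\xi_N)+\eta_N$ from the fixed-point identity and linearity, then unroll it using the semigroup property. The paper leaves two steps implicit that you make explicit: the induction for the unrolling, and the extension of linearity and the semigroup law to finite signed measures via the Jordan decomposition and Chapman--Kolmogorov.
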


\begin{proof}
Define the perturbation $\xi_N := \phat^N - \spinf$. We first prove it satisfies the following linear recursion: 
\begin{equation}\label{eq:xi-recursion}
  \xi_{N+1} \;=\; (1-\alpha)\,\cU_{t_0}(\xi_N) \;+\; \eta_N.
\end{equation}
Indeed, by definition of $\eta_N$, we have $\phat^{N+1} = \cU_{t_0}(q_N) + \eta_N$. Since $\spinf = \cU_{t_0}(\alpha\,\pdata + (1-\alpha)\spinf)$, subtracting and using the linearity of $\cU_{t_0}$ on signed measures, we obtain the following:
\begin{align*}
  \xi_{N+1} 
  &= \phat^{N+1} - \spinf \\
  &= \cU_{t_0}(q_N) - \cU_{t_0}(\alpha\,\pdata + (1-\alpha)\spinf) + \eta_N \\
  &= \cU_{t_0}(\alpha \pdata + (1-\alpha)\phat^N) - \cU_{t_0}(\alpha\,\pdata + (1-\alpha)\spinf) + \eta_N\\
  &= \alpha\cU_{t_0}(\pdata) + (1-\alpha)\cU_{t_0}(\phat^N) - \alpha\cU_{t_0}(\pdata) - (1-\alpha)\cU_{t_0}(\spinf) + \eta_N
  \qquad \text{ by linearity of } \cU_{t_0}\\
  &= \cU_{t_0}\!\bigl((1-\alpha)(\phat^N - \spinf)\bigr) + \eta_N \qquad \text{ by linearity of } \cU_{t_0} \\
  &= (1-\alpha)\,\cU_{t_0}(\xi_N) + \eta_N.
\end{align*}
Unrolling the recursion and applying the semigroup property $\cU_{t_0}^k = \cU_{kt_0}$ yields the end result:
$$
  \phat^N
  \;=\;
  \spinf
  \;+\;
  (1-\alpha)^N\,\cU_{Nt_0}(\xi_0)
  \;+\;
  \sum_{i=0}^{N-1}(1-\alpha)^{N-1-i}\,
  \cU_{(N-1-i)t_0}(\eta_i).
$$
\end{proof}

Equation~\eqref{eq:explicit-decomposition} reveals that each iterate
$\phat^N$ decomposes as $\spinf$ plus two correction terms. The
initialization transient decays geometrically \emph{and} is progressively smoothed by $Nt_0$ units of OU evolution. Each perturbation $\eta_i$ introduced at generation $i$ is subsequently
smoothed by $(N{-}1{-}i)t_0$ units of OU evolution before being
observed at generation $N$, with amplitude damped by $(1{-}\alpha)^{N-1-i}$.

\subsection{Proof of Proposition \ref{prop:spectral-perturbation}}
\label{app:proof-spectral-perturbation-cor}
\begin{proof}
In Proposition \ref{prop:perturbation-decomposition}, we explicitly wrote $\phat^N$ in presence of errors as, 
$$
\phat^N
\;=\;
\spinf
\;+\;
(1-\alpha)^N\,\cU_{Nt_0}(\xi_0)
\;+\;
\sum_{i=0}^{N-1}(1-\alpha)^{N-1-i}\,
\cU_{(N-1-i)t_0}(\eta_i),
$$
where, 
$$
\xi_N = \phat^N - \spinf, \quad \text{ and } \quad \eta_N = \phat^{N+1} - \cU_{t_0}(q_N).
$$
Now, projecting perturbation $\xi_N$ onto the Hermite polynomials basis $H_\bn$, since $\cU_{t_0}$ acts on mode $\bn$ by multiplication by $e^{-|\bn|t_0/2}$ (Proposition Prop. \ref{prop:decomposition-Ut-hermite-basis}):
\begin{align*}
\langle \xi_{N+1}, H_\bn \rangle
&= \langle (1-\alpha)\cU_{t_0}(\xi_N) + \eta_N, H_\bn \rangle\\
&= (1-\alpha)\,e^{-|\bn|t_0/2}\,\langle \xi_N, H_\bn \rangle
+ \langle \eta_N, H_\bn \rangle\\
&= \kappa_\bn\,\langle \xi_N, H_\bn \rangle + \langle \eta_N, H_\bn \rangle.
\end{align*}
Unrolling gives~\eqref{eq:spectral-perturbation}.
\end{proof}

\section{Self-Regularization and Annealed Truncation}
\label{app:self-regularization}

In this section, we investigate the well-posedness of annealed truncation schedules, which consists in considering adaptive truncation times $t_0^{(i)}$ that converge to $0$ as  the generation $i \to \infty$.

\subsection{Fisher Information Instability and Properties}
\label{app:subsec-fisher-information-and-prop}

\paragraph{Instability of the reverse diffusion around $t \downarrow 0$} We first formally state the instability problem that arises near the reverse diffusion endpoint $t=0$. Recall that, in order to sample from the reverse diffusion, one needs to estimate the score of the reverse time-$t$ marginal at all times $t$. For a target distribution $q_i$ at generation $i$, we denote the score of the reverse time-$t$ marginal by $\nabla \log q_{i,t}$. The key quantity that captures how hard these are to estimate is the Fisher information of $q_{i,t}$ \cite{max-likelihood-diffusion-models}, defined as the expected squared magnitude of the score:
\begin{equation}
\label{eq:Fisher-information}
J(q_{i,t})=\int \| \nabla \log q_{i,t}(\bx)\|_2^2 \,  q_{i,t}(\bx)\md \bx.
\end{equation}
In diffusion training, the score matching objective \cite{hyvarinen2005estimation, hyvarinen2007_extension, denoising-score-matching-vincent} at generation $i$ contains terms of the form,
\begin{equation}
\label{eq:target-dsm}
\E_{\bx_t \sim q_{i,t}}\big[\|\bs_\theta(\bX_t, t)- \nabla \log q_{i,t}(\bX_t)\|_2^2\big],
\end{equation}
suggesting that if $J(q_{i,t})$ defined in \eqref{eq:Fisher-information} is large, the target score itself \eqref{eq:target-dsm} has large $L^2$-norm. 

Concretely, for OU diffusion, one has 
$$
\cU_t q_{i} = \Law(\bX_t^i) = \Law(e^{-t/2}\bX_0^i + \sqrt{1-e^{-t}}\bZ)
$$
where $\bZ \sim \cN(0, \bI_d)$ and $\bX_0^i \sim q_i$.
Therefore $\cU_t q_{i}$  approaches the raw data distribution $q_i$ as $t \to 0$. If the training distribution is empirical, singular, sharp, multimodal or concentrated near a low-dimensional set, then its density develops steep gradients at small $t$. The generic Fisher bound captures exactly this ((ii) in Proposition \ref{prop:fisher-control-ou}) :
$$
  J(\cU_t q_{i})
  \leq
  \frac{d}{1-e^{-t}}
  \underset{t \to 0}{\sim}
  \frac{d}{t},
  \qquad \text{ since } 1-e^{-t} \underset{t \to 0}{\sim} t.
$$

\begin{proposition}[Fisher control for OU smoothing]
\label{prop:fisher-control-ou}
Let $\mu \in \Probspace(\R^d)$ be absolutely continuous with respect to the
Lebesgue measure on $\R^d$. Denote by $J(\mu)$ its Fisher information,
\begin{equation}
\label{eq:def-Fisher-information}
J(\mu)
=
\int_{\mathbb R^d}
\|\nabla \log \mu(\bx)\|^2 \mu(\bx)\,\md \bx
=
\int_{\mathbb R^d}
\frac{\|\nabla \mu(\bx)\|^2}{\mu(\bx)}\,\md \bx .
\end{equation}
We equivalently write $J(\bX)$ to denote $J(\mu)$ when $\bX\sim\mu$.
Let $(\cU_t)_{t\ge0}$ denote the OU sampling operator (defined in \eqref{eq:Ut-def})
Then the following properties hold.
\begin{enumerate}
    \item[(i)] \textbf{Quadratic scaling under dilation.}
    For any $a\neq0$,
    $$
    J(a\bX)=a^{-2}J(\bX).
    $$

    \item[(ii)] \textbf{Universal OU smoothing bound.}
    For every $t>0$,
    $$
    J(\cU_t\mu)
    \le
    \frac{d}{1-e^{-t}}.
    $$

    \item[(iii)] \textbf{Fisher control under OU evolution.}
    If $J(\mu)<\infty$, then for every $t\ge0$,
    $$
    J(\cU_t\mu)
    \le
    e^t J(\mu).
    $$
    In particular, for $t$ in a bounded interval $[0,T]$,
    $$
    \sup_{0\le t\le T}J(\cU_t\mu)
    \le
    e^T J(\mu).
    $$

    \item[(iv)] \textbf{Convexity under mixtures.}
    Let $\mu_1,\ldots,\mu_N$ have smooth positive densities, and let
    $w_1,\ldots,w_N\ge0$ satisfy $\sum_{k=1}^Nw_k=1$. Then
    $$
    J\left(\sum_{k=1}^Nw_k\mu_k\right)
    \le
    \sum_{k=1}^Nw_kJ(\mu_k).
    $$
\end{enumerate}
\end{proposition}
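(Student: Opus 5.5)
We prove the four items separately, using the random-variable representation \eqref{eq:Ut-explicit} of $\cU_t$ and standard Fisher-information tools: the score of a sum of independent variables, Stam/Blachman, and convexity of $(\bv,p)\mapsto\|\bv\|^2/p$.

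\textbf{Item (i).} This is a change of variables. If $\bX$ has density $\mu$, then $a\bX$ has density $|a|^{-d}\mu(\bx/a)$. Its score is $\bx\mapsto a^{-1}(\nabla\log\mu)(\bx/a)$. Substituting into \eqref{eq:def-Fisher-information} and changing variables $\by=\bx/a$ gives $J(a\bX)=a^{-2}J(\bX)$.

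\textbf{Item (ii).} Write $\bY=e^{-t/2}\bX+\sigma\bZ$ with $\sigma^2=1-e^{-t}$.
\begin{itemize}
\item The score of $\bY$ is a conditional expectation: $\nabla\log(\cU_t\mu)(\by)=\E[\nabla\log\phi_\sigma(\sigma\bZ)\mid\bY=\by]$, where $\phi_\sigma$ is the $\cN(0,\sigma^2\bI_d)$ density. This follows by differentiating the convolution density $\int\phi_\sigma(\by-e^{-t/2}\bx)\,\mu(\md\bx)$ under the integral.
\item Jensen's inequality for conditional expectation then gives $J(\bY)\le\E\|\nabla\log\phi_\sigma(\sigma\bZ)\|^2=J(\cN(0,\sigma^2\bI_d))=d/\sigma^2$.
\end{itemize}
Differentiating under the integral is justified because Gaussian derivatives are integrable uniformly on compacts. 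This requires no regularity of $\mu$ at all.

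\textbf{Item (iii).} Apply the same conditional-score identity, now projecting onto the $\bX$-component instead of the noise. When $J(\mu)<\infty$, the score of $\bY$ equals $\E[e^{t/2}\nabla\log\mu(\bX)\mid\bY]$. Indeed, $\nabla\log\mu(\bX)$ is the score of $\bX$, and by item (i) the scaled variable $e^{-t/2}\bX$ has score $e^{t/2}\nabla\log\mu$ evaluated at $e^{t/2}\cdot$. Jensen then gives $J(\bY)\le e^{t}J(\mu)$.

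Alternatively, Stam's inequality $1/J(\bY)\ge 1/J(e^{-t/2}\bX)+1/J(\sigma\bZ)$ yields the sharper bound $J(\bY)\le J(e^{-t/2}\bX)=e^tJ(\mu)$.

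The supremum over $[0,T]$ follows from monotonicity of $e^t$.

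\textbf{Item (iv).} Write $\mu=\sum_k w_k\mu_k$, so $\nabla\mu=\sum_k w_k\nabla\mu_k$. The map $(\bv,p)\mapsto\|\bv\|^2/p$ is jointly convex on $\R^d\times(0,\infty)$, since it is the perspective of $\|\bv\|^2$. Hence, pointwise,
\[
\frac{\|\nabla\mu\|^2}{\mu}\le\sum_k w_k\frac{\|\nabla\mu_k\|^2}{\mu_k}.
\]
Integrating over $\R^d$ gives the claim. Positivity of the densities ensures the quotients are well defined.

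\textbf{Main obstacle.} The only delicate step is rigorously justifying the conditional-expectation score identities in (ii) and (iii). This means interchanging differentiation and integration, and, in (iii), using $J(\mu)<\infty$ to integrate by parts with vanishing boundary terms. I would handle this by first treating smooth, compactly perturbed $\mu$ and then passing to the limit via lower semicontinuity of $J$. Alternatively, simply invoking Stam/Blachman \cite{STAM1959101, johnson-barron-fisher-information-inequalities} as a known result avoids the issue.
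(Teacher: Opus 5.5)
Your proof is correct and follows essentially the paper's route. Items (i) and (iv) coincide with the paper's argument: a change of variables, and pointwise joint convexity of $(\bv,u)\mapsto\|\bv\|^2/u$. For (ii) and (iii), the paper cites the monotonicity $J(\bX_1+\bX_2)\le J(\bX_2)$ for independent summands, applied to $\sqrt{1-e^{-t}}\,\bZ$ and to $e^{-t/2}\bX$ together with the scaling from (i); your conditional-score-plus-Jensen argument is the standard proof of that inequality, written out inline.

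One wording nit: Stam's inequality gives the sharper bound $J(\bY)\le\bigl(e^{-t}/J(\mu)+(1-e^{-t})/d\bigr)^{-1}$, which implies $J(\bY)\le e^{t}J(\mu)$. So $e^tJ(\mu)$ is a consequence of that sharper bound, not the sharper bound itself.
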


These results are standard, and detailed proofs can be found in \cite{STAM1959101, johnson-barron-fisher-information-inequalities}. We provide proofs for completeness.

\begin{proof}
\textit{(i)  Quadratic scaling under dilation.}
Let $\bY=a\bX$ with $a\neq0$. If $\bX$ has density $\mu$, then $\bY$ has
density
$$
\mu_a(\by)
=
|a|^{-d}\mu(\by/a).
$$
Therefore,
$$
\nabla_{\by}\log\mu_a(\by)
=
\nabla_{\by}\log\mu(\by/a)
=
a^{-1}\nabla\log\mu(\by/a).
$$
Hence,
\begin{equation}
\label{eq:change-of-variables-a}
J(\bY)= \int_{\R^d}
\|\nabla\log\mu_a(\by)\|^2\mu_a(\by)\,\md\by=
\int_{\R^d}
\left\|a^{-1}\nabla\log\mu(\by/a)\right\|^2
|a|^{-d}\mu(\by/a)\,\md\by .
\end{equation}
By change of variables $\by=a\bx$, \eqref{eq:change-of-variables-a} yields the desired results:
$$
J(\bY)
=
a^{-2}
\int_{\R^d}
\|\nabla\log\mu(\bx)\|^2\mu(\bx)\,\md\bx
=
a^{-2}J(\bX).
$$

\medskip
\textit{(ii)  Universal OU smoothing bound.}
Fix $\mu \in \Probspace(\R^d)$ and let $\bY_t = e^{-t/2}\bX+\sqrt{1-e^{-t}}\bZ$ where $\bX \sim \mu \indep \bZ \sim \cN(0, \bI_d)$. Also, let $\sigma_t^2 = 1-e^{-t}$. Using the standard Fisher-information inequality \cite{information-theoretic-inequalities, johnson-barron-fisher-information-inequalities} $J(\bX_1 + \bX_2) \le J(\bX_2)$ for any two independent random variables $\bX_1$ and $\bX_2$, to $\bX_1 = e^{-t/2}\bX$ and $\bX_2 = \sigma_t \bZ$ gives,
$$
J(\bY_t)
\le
J(\sigma_t\bZ).
$$
Since $\sigma_t\bZ\sim\Normal(0,\sigma_t^2\bI_d)$, its Fisher information is
$$
J(\sigma_t\bZ)
=
\frac{d}{\sigma_t^2}
=
\frac{d}{1-e^{-t}}.
$$
Therefore,
$$
J(\cU_t\mu)
=
J(\bY_t)
\le
\frac{d}{1-e^{-t}}.
$$

\medskip
\textit{(iii)  Fisher control under OU evolution.} Once again, let $\bY_t$ be defined as 
$\bY_t=e^{-t/2}\bX+\sqrt{1-e^{-t}}\bZ$. By the  same monotonicity  property Fisher information used in \textit{(ii)}, 
$$
J(\bY_t)
\le
J(e^{-t/2}\bX).
$$
Using the scaling property proved in \textit{(i)} with $a=e^{-t/2}$, 
$$
J(e^{-t/2}\bX)
=
e^tJ(\bX).
$$
Therefore,
$$
J(\cU_t\mu)
=
J(\bY_t)
\le
e^tJ(\mu).
$$
If $t\in[0,T]$, then $e^t\le e^T$, and hence
$$
\sup_{0\le t\le T}J(\cU_t\mu)
\le
e^TJ(\mu).
$$

\medskip
\textit{(iv) Convexity under mixtures.}
Let $\mu_1,\ldots,\mu_N$ denote both the probability measures and their smooth
positive densities and define the convex mixture distribution $\mu = \sum_{k=1}^Nw_k\mu_k$. Then, by taking the gradient of $\mu$,
$$
\nabla\mu
=
\sum_{k=1}^Nw_k\nabla\mu_k .
$$
Noticing that the map $(u,\bv)\mapsto \frac{\|\bv\|^2}{u}$ for $u>0$ and $\bv \in \R^d$ is convex, one has pointwise in $\bx$ that
$$
\frac{\|\nabla\mu(\bx)\|^2}{\mu(\bx)}
=
\frac{
\left\|
\sum_{k=1}^Nw_k\nabla\mu_k(\bx)
\right\|^2
}{
\sum_{k=1}^Nw_k\mu_k(\bx)
}
\le
\sum_{k=1}^Nw_k
\frac{\|\nabla\mu_k(\bx)\|^2}{\mu_k(\bx)}.
$$
Integrating over $\R^d$ gives the final result,
$$
J(\mu)
=
\int_{\R^d}
\frac{\|\nabla\mu(\bx)\|^2}{\mu(\bx)}
\,\md\bx
\le
\sum_{k=1}^Nw_k
\int_{\R^d}
\frac{\|\nabla\mu_k(\bx)\|^2}{\mu_k(\bx)}
\,\md\bx  = \sum_{k=1}^Nw_kJ(\mu_k).
$$
\end{proof}

\subsection{Annealed Truncation}

The  instability of score estimation near $t=0$ highlights the limits of considering truncation times $t_0^{(i)}$ that tend to zero. The recursive structure partially mitigates this problem, because synthetic samples have already passed through previous OU smoothing steps. However, it does not imply that the whole synthetic distribution has accumulated the full cumulative smoothing \(\sum_{j<i}t_0^{(j)}\) since, in our setting, fresh-data are reintroduced at each generation, and recently injected components have only been smoothed for the most recent truncation times. The following decomposition makes this precise.

\begin{proposition}[Fisher decomposition under annealed truncation]
\label{prop:fisher-decomposition-annealed}
Consider the error-free annealed recursion with schedules $(t_0^{(i)})_{i \geq 1}$,
\begin{equation}
\label{eq:expression-phat-i-annealed-recursion}
\phat^{i+1}
=
\cU_{t_0^{(i)}}\bigl(
\alpha \pdata+(1-\alpha)\phat^i
\bigr),
\qquad
\phat^0=\pdata,
\end{equation}
where $t_0^{(i)}>0$. Then, for each generation $N\geq1$, 
\begin{equation}
\label{eq:fisher-info-annealed-truncation-upper-bound}
\begin{split}
  J(q_{N,t_0^{(N)}})
  & \leq
  \alpha J(\cU_{t_0^{(N)}} \pdata) 
  +
  (1-\alpha)\alpha
  \sum_{m=0}^{N-1}
  (1-\alpha)^{m}
  J(\cU_{t_0^{(N)}+\sigma_{m,N}}\pdata) \\
  & \qquad 
  +  (1-\alpha)^{N +1}
  J(\cU_{t_0^{(N)}+s_{0,N}}\pdata),
  \end{split}
\end{equation}
where $\sigma_{m,N}:=\sum_{\ell=0}^{N-1} t_0^{(\ell)}$ and $s_{0,N}:=\sum_{\ell=0}^{N-1} t_0^{(\ell)}$.
\end{proposition}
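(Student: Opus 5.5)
The plan is to combine the explicit unrolled representation of Theorem~\ref{thm:annealed-truncation-corrected} with linearity and the semigroup property of $\cU_t$ (Proposition~\ref{prop:OU-sampling-operator-properties} (i)--(ii)), and then apply convexity of Fisher information under mixtures (Proposition~\ref{prop:fisher-control-ou} (iv)). Here $q_{N,t}=\cU_t q_N$ with $q_N=\alpha\pdata+(1-\alpha)\phat^N$, so we must write $\cU_{t_0^{(N)}}q_N$ as a convex mixture of measures of the form $\cU_s\pdata$.

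First, substitute \eqref{eq:annealed-unrolled} for $\phat^N$ into $q_N$. This gives
\begin{equation*}
q_N=\alpha\pdata+(1-\alpha)\alpha\sum_{m=0}^{N-1}(1-\alpha)^m\cU_{\sigma_{m,N}}\pdata+(1-\alpha)^{N+1}\cU_{s_{0,N}}\pdata .
\end{equation*}
Next, apply $\cU_{t_0^{(N)}}$ termwise using linearity, and merge the compositions via $\cU_{t_0^{(N)}}\cU_{\sigma}=\cU_{t_0^{(N)}+\sigma}$. The result is $q_{N,t_0^{(N)}}$ written as a finite mixture of $\cU_{t_0^{(N)}}\pdata$, $\cU_{t_0^{(N)}+\sigma_{m,N}}\pdata$ for $m=0,\dots,N-1$, and $\cU_{t_0^{(N)}+s_{0,N}}\pdata$. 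The weights are $\alpha$, $(1-\alpha)\alpha(1-\alpha)^m$, and $(1-\alpha)^{N+1}$.

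Check that these weights are nonnegative and sum to one. This holds because $\alpha+(1-\alpha)\bigl[\alpha\sum_{m<N}(1-\alpha)^m+(1-\alpha)^N\bigr]=\alpha+(1-\alpha)=1$. Then invoke Proposition~\ref{prop:fisher-control-ou} (iv), which applies to finitely many components, to obtain exactly \eqref{eq:fisher-info-annealed-truncation-upper-bound}. I would interpret $\sigma_{m,N}$ as in Theorem~\ref{thm:annealed-truncation-corrected}, i.e.\ $\sum_{\ell=N-1-m}^{N-1}t_0^{(\ell)}$. The displayed definition appears to be a typo, and the bound holds with whatever times come out of the unrolled formula.

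The main obstacle is the hypothesis of (iv): each component must have a smooth strictly positive density. Every component is $\cU_s\pdata$ with $s\ge t_0^{(N)}>0$, which is a Gaussian convolution of a rescaled $\pdata$, so it is smooth and positive. One could also appeal to \ref{A1}, which is preserved by the forward diffusion. If some $J(\cU_s\pdata)$ is infinite, the inequality is trivial. In any case, each term is finite by Proposition~\ref{prop:fisher-control-ou} (ii), bounded by $d/(1-e^{-s})$.
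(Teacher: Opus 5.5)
Your proposal is correct and follows the paper's proof essentially step for step: substitute the unrolled formula of Theorem~\ref{thm:annealed-truncation-corrected} into $q_N$, push $\cU_{t_0^{(N)}}$ through by linearity and the semigroup property, and apply convexity of Fisher information under mixtures. Your reading $\sigma_{m,N}=\sum_{\ell=N-1-m}^{N-1}t_0^{(\ell)}$ is the one the paper's proof actually relies on, and your checks that the weights sum to one and that each component has a smooth positive density fill in details the paper leaves implicit.
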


\begin{proof}
By Theorem \ref{thm:annealed-truncation-corrected}, for any finite horizon $N \ge 1$, $\phat^N$ can be written as 
\begin{equation}
\label{eq:expression-phat-i-annealed-recursion-proof}
\phat^N
=
\alpha \sum_{m=0}^{N-1} (1-\alpha)^m\,\cU_{\sigma_{m,N}}\pdata
+
(1-\alpha)^N \cU_{s_{0,N}}\pdata,
\end{equation}
For any $t \in [t_0^{(i)}, T]$, $q_{N,t}$ can be written as
\begin{align*}
  q_{N,t}
  &=
  \cU_t(\alpha \pdata + (1-\alpha)\phat^N) = 
  \alpha \cU_t \pdata
  +
  (1-\alpha)\cU_t\phat^N, 
\end{align*}
where we have used the linearity of $\cU_t$ (Proposition \ref{prop:OU-sampling-operator-properties}).
Thus, plugging the expression of $\phat^N$ given in \eqref{eq:expression-phat-i-annealed-recursion-proof} and using linearity again, we obtain
\begin{align}
\label{eq:expression-qNt-fisher}
  q_{N,t} 
  &= 
  \alpha \cU_t \pdata
  +
  (1-\alpha)\cU_t\phat^N \nonumber\\
  &=
  \alpha \cU_t \pdata 
  +
  \alpha
  \sum_{m=0}^{N-1}
  (1-\alpha)^{m+1}
  \cU_{t+\sigma_{m,N}}\pdata
  +
  (1-\alpha)^{N+1}
  \cU_{t+s_{0,N}} \pdata.
\end{align}
The Fisher bound \eqref{eq:fisher-info-annealed-truncation-upper-bound} follows from \eqref{eq:expression-qNt-fisher} using the convexity of Fisher information under mixtures (Proposition (iv) \ref{prop:fisher-control-ou}) and taking $t = t_0^{(N)}$.
\end{proof}

Proposition~\ref{prop:fisher-decomposition-annealed} clearly differentiates between two terms. On the one hand, components injected many generations in the past have accumulated several steps of smoothing and are regularized by the recursion. On the other hand, components injected more recently have less smoothing, and the fresh component 
$\alpha\cU_t \pdata$ has only been smoothed by the current truncation time $t_0^{(N)}$. Therefore, recursive self-regularization does not by itself give a uniform Fisher bound as $t \downarrow 0$ for arbitrary raw data. We distinguish the settings in which these annealed truncations are well-defined.

\subsubsection{Case 1: Fresh-data proportion \texorpdfstring{$\alpha>0$}{alpha > 0} and finite Fisher information}
If the unknown target distribution $\pdata$ has finite Fisher information, then annealed truncation schedules are well-defined. 

\begin{corollary}[Annealing under finite-Fisher regularity]
\label{cor:annealing-finite-fisher}
If $\pdata$ has finite Fisher information, i.e. if $J(\pdata)<\infty$, then, for a bounded positive truncation schedule $(t_0^{(N)})_{N\ge0}$,
$$
\sup_N J\bigl(q_{N,t_0^{(N)}}\bigr)<\infty .
$$
\end{corollary}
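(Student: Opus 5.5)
Plan: I would start from the Fisher decomposition of Proposition~\ref{prop:fisher-decomposition-annealed}, which bounds $J(q_{N,t_0^{(N)}})$ by a convex combination of Fisher informations of the form $J(\cU_s\pdata)$. These smoothing times are $s=t_0^{(N)}$, $s=t_0^{(N)}+\sigma_{m,N}$ for $0\le m\le N-1$, and $s=t_0^{(N)}+s_{0,N}$. I would check that the weights
$$
\alpha,\qquad (1-\alpha)\alpha(1-\alpha)^m \ (0\le m\le N-1),\qquad (1-\alpha)^{N+1}
$$
sum to at most $1$. This holds because $\alpha+\alpha\sum_{m=0}^{N-1}(1-\alpha)^{m+1}+(1-\alpha)^{N+1}=1$. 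It therefore suffices to produce a single constant $C$, independent of $N$ and $m$, with $J(\cU_s\pdata)\le C$ for every such time $s$.

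For the times that stay bounded I would use Proposition~\ref{prop:fisher-control-ou}(iii), which gives $J(\cU_s\pdata)\le e^{s}J(\pdata)$. That alone is not uniform, because the cumulative times $\sigma_{m,N}$ and $s_{0,N}$ can grow without bound as $N\to\infty$, even though each $t_0^{(\ell)}\le \bar t:=\sup_\ell t_0^{(\ell)}<\infty$.

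The fix is to combine (iii) with the universal bound (ii), which gives $J(\cU_s\pdata)\le d/(1-e^{-s})$. This bound is decreasing in $s$ and is at most $d/(1-e^{-1})$ once $s\ge 1$. So I would take
$$
J(\cU_s\pdata)\le \min\Bigl\{e^{s}J(\pdata),\ \tfrac{d}{1-e^{-s}}\Bigr\}\le C:=\max\Bigl\{e\,J(\pdata),\ \tfrac{d}{1-e^{-1}}\Bigr\},
$$
treating the cases $s\le1$ and $s\ge1$ separately. This holds for every $s\ge0$ and uses only $J(\pdata)<\infty$. With this uniform bound in hand, boundedness of the schedule is not even needed, beyond ensuring that every time involved is a finite nonnegative number.

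Plugging $C$ into the decomposition then gives $J(q_{N,t_0^{(N)}})\le C$ for every $N\ge1$. The case $N=0$ is handled directly: $q_{0,t}=\cU_t\pdata$, so $J(q_{0,t_0^{(0)}})\le C$ by the same bound, and taking the supremum concludes.

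The step I expect to be the main obstacle is making sure the Fisher-information tools apply rigorously rather than formally. Convexity (iv) is stated for finite mixtures with smooth positive densities. Here the mixtures are finite for each $N$, and all components $\cU_s\pdata$ with $s>0$ are smooth and positive by \ref{A1}, so (iv) applies. The monotonicity $J(\bX_1+\bX_2)\le J(\bX_2)$ behind (ii) and (iii) I would take as given, as the paper does.
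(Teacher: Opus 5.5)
Your proposal is correct and follows essentially the same route as the paper. Both write $q_{N,t_0^{(N)}}$ as a finite convex mixture of the components $\cU_s\pdata$, apply convexity of Fisher information, and bound each component uniformly using $e^{s}J(\pdata)$ for small $s$ and the universal bound $d/(1-e^{-s})$ for large $s$. The only difference is the split point. The paper splits at $T_0=\sup_N t_0^{(N)}$, giving the constant $\max\{e^{T_0}J(\pdata),\, d/(1-e^{-T_0})\}$. You split at $s=1$, which gives a schedule-independent constant and correctly shows that the boundedness hypothesis on the schedule is not actually needed.
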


\begin{proof}
Let \(T_0:=\sup_N t_0^{(N)}<\infty\). By
\eqref{eq:expression-qNt-fisher}, evaluated at \(t=t_0^{(N)}\), we have
\begin{align*}
q_{N,t_0^{(N)}}
&=
\alpha \cU_{t_0^{(N)}}\pdata
+
\alpha
\sum_{m=0}^{N-1}
(1-\alpha)^{m+1}
\cU_{t_0^{(N)}+\sigma_{m,N}}\pdata
+
(1-\alpha)^{N+1}
\cU_{t_0^{(N)}+s_{0,N}}\pdata .
\end{align*}
The coefficients in this decomposition are non-negative and sum to one. Hence, by
convexity of Fisher information under mixtures,
\begin{align}
\label{eq:convex-decompositio-case1-proof}
J(q_{N,t_0^{(N)}})
&\le
\alpha J(\cU_{t_0^{(N)}}\pdata)
+
\alpha
\sum_{m=0}^{N-1}
(1-\alpha)^{m+1}
J(\cU_{t_0^{(N)}+\sigma_{m,N}}\pdata) \nonumber \\
&\qquad
+
(1-\alpha)^{N+1}
J(\cU_{t_0^{(N)}+s_{0,N}}\pdata).
\end{align}

We now show that every term appearing on the right-hand side is uniformly
bounded. Since $J(\pdata)<\infty$, Proposition~\ref{prop:fisher-control-ou}(iii) implies that, for $0\le s\le T_0$,
$$
J(\cU_s\pdata)\le e^{T_0}J(\pdata).
$$
On the other hand, for $s\ge T_0$, Proposition~\ref{prop:fisher-control-ou}(ii) gives the universal smoothing bound
$$
J(\cU_s\pdata)
\le
\frac{d}{1-e^{-s}}
\le
\frac{d}{1-e^{-T_0}} .
$$
Consequently, for all $s\ge0$,
$$
J(\cU_s\pdata)
\le
C_{T_0,\pdata}
:=
\max\left\{
e^{T_0}J(\pdata),
\frac{d}{1-e^{-T_0}}
\right\}.
$$
Applying this bound to each component in the convex decomposition \eqref{eq:convex-decompositio-case1-proof} yields
$$
J(q_{N,t_0^{(N)}})\le C_{T_0,\pdata}
$$
for every $N$, which yields the result.  
\end{proof}

\begin{remark}[Interpretation of the finite Fisher information assumption ($J(\pdata)<\infty$)]
The condition $J(\pdata)<\infty$ is a regularity assumption on $\pdata$ or equivalently, on the training distribution. Such an assumption fails on raw empirical measures which have no Lebesgue densities and hence no classical scores, and may also not be satisfied for idealized distributions supported on low-dimensional sets. Hence, in such cases, annealing the truncation all the way to zero is not justified by Fisher
information control alone.
\end{remark}

\subsubsection{Case 2: Fresh-data proportion \texorpdfstring{$\alpha>0$}{alpha > 0} and target distribution does not have finite Fisher information}

We distinguish between two alternatives in the case where $J(\pdata) < \infty$ fails.

    1. \textit{Strictly positive truncation floor.} A first option is to consider truncation schedules $(t_0^{(N)})_{N\ge0}$ that are decreasing down to a small positive floor $\tinf>0$ for which Fisher scores remain bounded independently of the regularity of $\pdata$, by ((ii)-Proposition \ref{prop:fisher-control-ou}),
    $$
      J(\cU_{\tinf} \pdata)
      \leq
      \frac{d}{1-e^{-\tinf}}
    $$

    2. \textit{Coupled annealing of fresh-data and truncation.}
    A way to remove the finite-Fisher-information requirement on
    $\pdata$ while still letting the truncation time vanish is to
    anneal the fresh-data proportion together with the truncation schedule. Consider
    the generation-dependent recursion
    $$
        \phat^{N+1}
        =
        \cU_{t_0^{(N)}}
        \Bigl(
            \alpha_N \pdata
            +
            (1-\alpha_N)\phat^N
        \Bigr),
        \qquad
        t_0^{(N)}\downarrow 0,
        \qquad
        \alpha_N\downarrow 0 .
    $$
    For any generation $N\ge1$, linearity of $\cU_{t_N}$ (Proposition \ref{prop:OU-sampling-operator-properties}) and convexity of Fisher information (Proposition \ref{prop:fisher-control-ou}),
    \begin{align*}
        J(\phat^{N+1})
        &=
        J\!\left(
            \alpha_N \mathcal U_{t_N}p_{\mathrm{data}}
            +
            (1-\alpha_N)\mathcal U_{t_N}\phat^N
        \right) \\
        &\le
        \alpha_N J(\mathcal U_{t_N}p_{\mathrm{data}})
        +
        (1-\alpha_N)J(\mathcal U_{t_N}\phat^N).
    \end{align*}
    The first term is the only term involving freshly injected, potentially
    irregular data. By the universal OU smoothing bound ((ii) in Proposition \ref{prop:fisher-control-ou}),
    $$
        \alpha_N J(\mathcal U_{t_N}p_{\mathrm{data}})
        \le
        \alpha_N\frac{d}{1-e^{-t_N}}.
    $$
    Consequently, the fresh-data contribution remains uniformly bounded provided
    $$
        \sup_{N\ge0}
        \frac{\alpha_N}{1-e^{-t_N}}
        <\infty,
    $$
    or equivalently, since $1-e^{-t_N}\sim t_N$ as $t_N \to 0$,
    $$
        \alpha_N=\mathcal O(t_N)
        \qquad\text{as }t_N\downarrow0.
    $$
    
    This condition controls the newly injected raw-data component. To obtain a uniform Fisher bound for the whole training distribution, one must additionally control the synthetic term
    $$
        (1-\alpha_N)J(\mathcal U_{t_N}\phat^N),
    $$
    either by an induction argument, by an explicit unrolling of the recursion, or by imposing a stronger global summability condition on the whole sequence of weights and accumulated smoothing times. Thus, coupled annealing of $\alpha_N$ and $t_N$ provides an alternative to $J(p_{\mathrm{data}})<\infty$ for the fresh component, but full uniform Fisher control requires the same convexity argument to be applied to all mixture components.

\section{Experiments}
\label{app:experiments}

We validate our theoretical results on (i) synthetic 2D Gaussian mixtures, where the limiting distribution $\spinf$ (Theorem~\ref{thm:collapse-distrib-existence-uniqueness}) can be computed in closed form  and serves as a controlled environment for Corollary~\ref{cor:limiting-behavior}; (ii) CIFAR-10 \cite{cifar10} with fixed truncation, where we test the geometric rate of convergence to $\spinf$ (Theorems~\ref{thm:collapse-distrib-existence-uniqueness} and \ref{thm:convergence-imperfect-regime}) and the spectral structure of collapse (Proposition~\ref{prop:spectral-rpz-limit}); and (iii) CIFAR-10 with annealed truncation schedules, where we test the recovery of $\pdata$ predicted by Theorem~\ref{thm:annealed-truncation-corrected}. All synthetic two-dimensional experiments were run on CPU workers, whereas image experiments (on CIFAR-10) were run on a single NVIDIA RTX 6000 Blackwell GPU with 96GB memory; a full 8-generation CIFAR-10 recursive training run required approximately 12 hours of wall-clock time.

\subsection{Synthetic 2D Gaussian mixtures}
\label{sec:exp-synthetic}

We set $\pdata$ to a three-component isotropic mixture in $\R^2$ and run the error-free recursion \eqref{eq:fixed-point-problem} for $50$ generations at $\alpha \in \{0.1, 0.5, 0.9\}$, using a Variance Preserving SDE \cite{song2021scorebased} with $\beta\equiv 1$ so that the OU semigroup matches our analysis exactly and the contraction constant $\kappa = \sqrt{1-\alpha}\,e^{-t_0/2}$ holds without approximation. For $\pdata$ a Gaussian mixture, $\spinf$ in  \eqref{eq:neumann} is itself a countably-infinite mixture of Gaussians  that can be sampled from KL divergences and admit accurate Monte-Carlo estimators.

\textit{Limiting behaviors (Corollary~\ref{cor:limiting-behavior}).} Figure~\ref{fig:limiting-behaviours-2d-gmm} sweeps $\alpha \in [0.02, 0.98]$ at fixed $t_0$ (left and center panels) and $t_0 \in [0, 5]$ at fixed $\alpha$ (right panel), computing $\spinf$ analytically at each grid point. The three predicted limits are recovered: $\mathrm{KL}(\spinf \,\|\, \cU_{t_0}\pdata) \to 0$ as $\alpha \to 1$, $\mathrm{KL}(\spinf \,\|\, \cN(0,\bI)) \to 0$ as $\alpha \to 0$, and $\mathrm{KL}(\spinf \,\|\, \pdata) \to 0$ as $t_0 \to 0$, with strict positivity at every interior $(\alpha, t_0)$.

\textit{Truncation schedules (Theorem~\ref{thm:annealed-truncation-corrected}).} Figure~\ref{fig:truncation-2d-gmm} compares four truncation schedules over $50$ generations at $\alpha \in \{0.1, 0.5, 0.9\}$: a fixed schedule $t_0^{(i)} \equiv t_0$, a shifted schedule with $t_0^{(i)} \to t_\infty = 0.2 > 0$, and two summable annealed schedules $t_0^{(i)} = t_0/(1+i)^\beta$ with $\beta \in \{1, 2\}$. The fixed schedule plateaus at the predicted irreducible floor $\mathrm{KL}(\spinf, \pdata)$, and the shifted schedule plateaus at the lower predicted limit $\mathrm{KL}(p_\infty^{(t_\infty)}, \pdata)$, both consistent with the asymptotic statement of Theorem~\ref{thm:annealed-truncation-corrected}. The two summable annealed schedules drive the recursion back toward $\pdata$, with the decay rate increasing in $\beta$, confirming that recursive compounding is asymptotically eliminated whenever $t_0^{(i)} \to 0$. The qualitative behavior is consistent across $\alpha$, with smaller $\alpha$ producing slower convergence, as expected from the contraction constant $\kappa = \sqrt{1-\alpha}\,e^{-t_0/2}$.

\begin{figure}[t]
    \centering
    \includegraphics[width=1.0\linewidth]{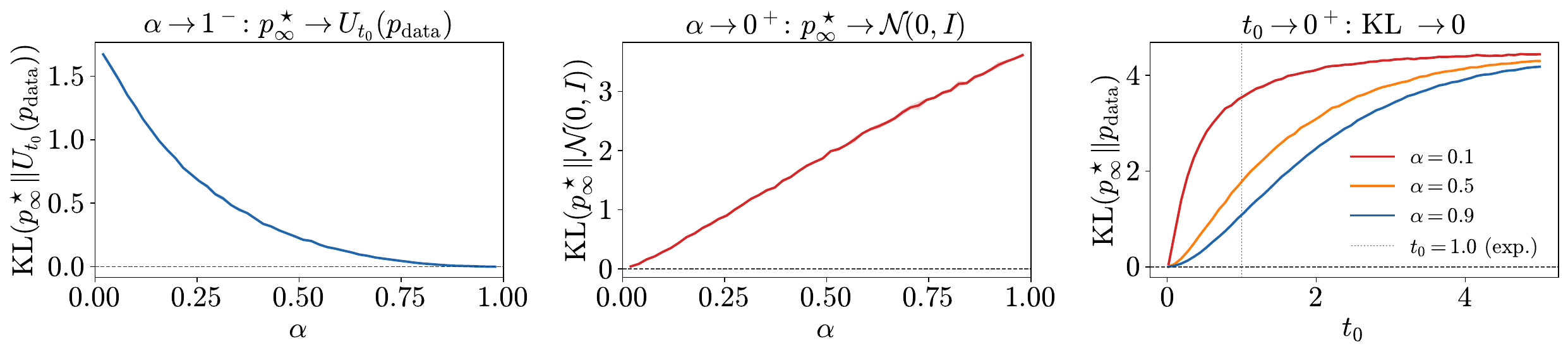}
    \caption{\textbf{Limiting behaviours of $\spinf$ on a 2D Gaussian mixture.} The closed-form Neumann-series representation of $\spinf$ (Theorem~\ref{thm:collapse-distrib-existence-uniqueness}) recovers the three predicted limits of Corollary~\ref{cor:limiting-behavior}: convergence to a single OU-smoothed copy of $\pdata$ as $\alpha \to 1$ (i), convergence to the standard Gaussian as $\alpha \to 0$ (ii), and recovery of $\pdata$ as $t_0 \to 0$ (iii); strict positivity at all interior $(\alpha, t_0)$ is visible in panel (iii).}
    \label{fig:limiting-behaviours-2d-gmm}
\end{figure}

\begin{figure}[t]
    \centering
    \includegraphics[width=1.0\linewidth]{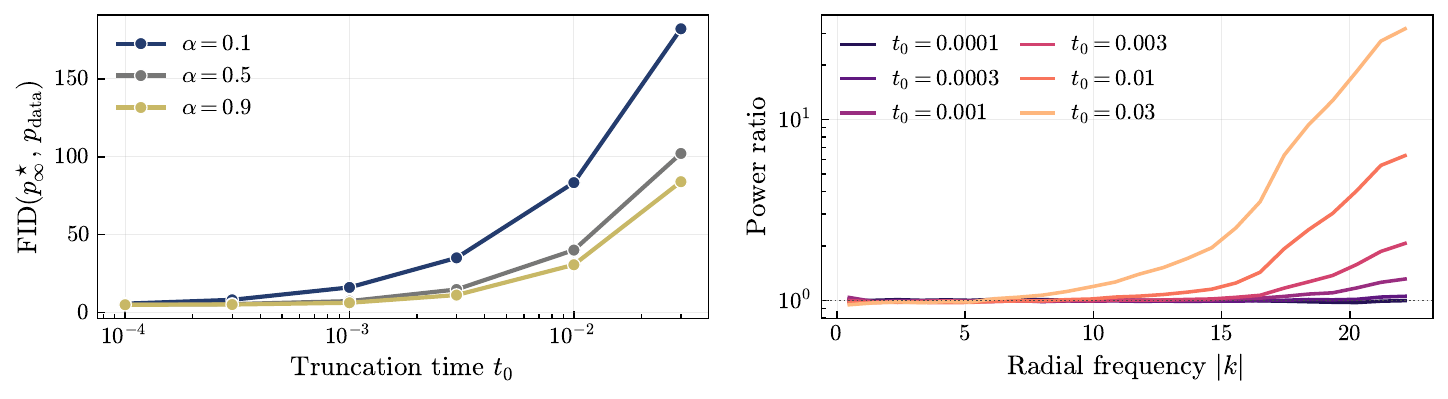}
    \caption{\textbf{Truncation time controls the severity of collapse (CIFAR-10 \cite{cifar10}).}
    \textit{Left}: FID between the limiting collapse distribution \(p_\infty^\star\) and the data distribution \(\pdata\) as a function of the reverse-time truncation \(t_0\), for several fresh-data proportions \(\alpha\). Increasing \(t_0\) amplifies the bias induced by each recursive generation, and the effect is strongest when \(\alpha\) is small, i.e. when synthetic samples dominate the next training distribution.
    \textit{Right}: absolute radial Fourier power of $\spinf$, divided by that of $\pdata$, for increasing values of $t_0$. Since the ratio is not normalized to unit total spectral mass, values above one indicate excess broadband power induced by the truncation noise rather than a redistribution of fixed spectral energy. As $t_0 \to 0$, the ratio approaches one, consistently with the recovery of
    $\pdata$.
    }
    \label{fig:truncation-affect-collapse}
\end{figure}

\subsection{CIFAR-10: limiting collapse and spectral signature}
\label{sec:exp-cifar-fixed}

\paragraph{Setup.} We use a U-Net \cite{unet-seminal} score network 
with self-attention \cite{vaswani2017attention} at the two coarsest spatial 
scales, trained with the min-SNR-$\gamma$ objective \cite{minSNR} ($\gamma=5$) 
and EMA decay $0.9999$. Each generation is trained for $300$ epochs at 
$\mathrm{lr}=2\times 10^{-4}$ with cosine decay, batch size $128$, and 
AdamW \cite{adamw} (weight decay $10^{-2}$). The truncation time is fixed 
at $t_0 = 10^{-3}$. At generation $i \ge 1$, the training loader returns each 
example with probability $\alpha$ from CIFAR-10 and probability $1-\alpha$ 
from a pool of $50{,}000$ samples drawn from generation $i-1$ via 
DPM-Solver++ \cite{DPM-solver++} with $50$ steps. We run $8$ generations 
for $\alpha \in \{0.1, 0.3, 0.5, 0.7, 0.9\}$ and report metrics averaged 
across $2$ seeds. FID \cite{Heuseletal2017} is computed against the full 
CIFAR-10 training split using $30{,}000$ generated samples per checkpoint.

\paragraph{Empirical proxy for $\spinf$.} The collapse distribution $\spinf$ is not directly accessible, since $\pdata$ is unknown. We consider an empirical proxy of $\spinf$ by approximating $\pdata \approx \phat_\mathrm{trained}$, where $\phat_\mathrm{trained}$ is a well-trained diffusion model on CIFAR-10, solely on real data, and from which one can thus sample. This allows to sample from $\spinf$ by leveraging its closed-form formula (infinite mixture of Gaussian-smoothed version of $\pdata$), which is required to compute the $\mathrm{FID}$.

\paragraph{High-frequency energy.} For a probability measure $p$ on 
$\R^{3 \times 32 \times 32}$ approximated by $N_{\mathrm{spec}} = 2000$ 
samples $\{x^{(j)}\}_{j=1}^{N_{\mathrm{spec}}}$, we define the 
\emph{high-frequency energy} as
\begin{equation}
\label{eq:hf-energy-def}
\mathrm{HF}(p)
\;:=\;
\frac{1}{N_{\mathrm{spec}}}
\sum_{j=1}^{N_{\mathrm{spec}}}
\frac{1}{|\mathcal{A}|}
\sum_{\mathbf{k} \in \mathcal{A}}
\bigl| \mathcal{F}[\bar{x}^{(j)}](\mathbf{k}) \bigr|^2,
\end{equation}
where $\bar{x}^{(j)} = 0.2989\,x^{(j)}_R + 0.5870\,x^{(j)}_G + 0.1140\,x^{(j)}_B$ 
is the luminance projection of the $j$-th sample, $\mathcal{F}$ is the 
zero-centered 2D discrete Fourier transform, and $\mathcal{A} := 
\{\mathbf{k} \in \mathbb{Z}^2 : \|\mathbf{k}\|_2 \ge 15\}$ is the 
high-frequency annulus in the $32 \times 32$ Fourier plane (cutoff radius 
$k^\star = 15$, slightly below the Nyquist limit of $16$). We define the 
\emph{high-frequency deficit} of generation $N$ as
\begin{equation}
\label{eq:hf-deficit-def}
\mathrm{D}_{\mathrm{HF}}(\phat^N)
\;:=\;
1 - \frac{\mathrm{HF}(\phat^N)}{\mathrm{HF}(p_{\mathrm{data}})},
\end{equation}
where $\mathrm{HF}(p_{\mathrm{data}})$ is computed on $N_{\mathrm{spec}}$ 
real CIFAR-10 images. Larger $\mathrm{D}_{\mathrm{HF}}$ indicates stronger 
loss of fine-scale Fourier content relative to the data, providing a real-image 
proxy for the high-degree Hermite-mode attenuation predicted by 
Proposition~\ref{prop:spectral-rpz-limit}.

\paragraph{Convergence to $\spinf$ 
(Figure~\ref{fig:cifar-fid-pinf}, Theorem~\ref{thm:convergence-imperfect-regime}).} 
We report $\mathrm{FID}(\phat^N, \phat_\infty^{\,(\alpha)})$, where $\phat_\infty^{\,(\alpha)}$ denotes the limiting distribution of the recursion with proportion $\alpha$, across the  first $8$ recursive generations for each $\alpha$. The contraction is fastest  at large $\alpha$ and slowest at $\alpha = 0.1$, where the trajectory is  clearly transient: the iterate drops from $\mathrm{FID} \approx 33$ at $N=1$  to $\mathrm{FID} \approx 5$ by $N=8$. By the eighth generation, all 
trajectories settle near a common empirical floor of $\mathrm{FID} \approx 5$, which we interpret as the residual $\mathrm{FID}(\phat_\infty^{\,(\alpha)},\, \spinf)$ plus measurement noise from finite-sample FID. This pattern is consistent with the contraction-rate ordering predicted by $\kappa(\alpha) = \sqrt{1-\alpha}\,e^{-t_0/2}$ in Theorem~\ref{thm:convergence-imperfect-regime} and with the stability statement (ii) of the same theorem.

\paragraph{Spectral signature 
(Figure~\ref{fig:cifar-hf-heatmap}, 
Propositions~\ref{prop:spectral-rpz-limit} and \ref{prop:spectral-perturbation}).} 
The high-frequency deficit $\mathrm{D}_{\mathrm{HF}}(\phat^N)$ across the first $8$ generations is monotone increasing in both $N$ and $1-\alpha$. At $\alpha = 0.1$, the deficit reaches $\approx 0.5$ by $N = 10$, indicating that roughly half of the high-frequency Fourier energy of CIFAR-10 has been suppressed; at $\alpha = 0.9$, the deficit remains below $0.15$ over the same horizon. The mode-dependent contraction rate $\kappa_{\bn}(\alpha) = (1-\alpha)\,e^{-|\bn|t_0/2}$ in Proposition \ref{prop:spectral-perturbation} predicts exactly this joint monotonicity in $N$ (faster compounding for larger $|\bn|$) and in $1-\alpha$ (smaller fresh-data injection magnifies attenuation).

\paragraph{Truncation severity 
(Figure~\ref{fig:truncation-affect-collapse}, Corollary~\ref{cor:limiting-behavior}).} The FID gap $\mathrm{FID}(\spinf, p_{\mathrm{data}})$ grows with $t_0$ and decays toward zero as $t_0 \to 0$, while the radial Fourier spectrum of $\spinf$ approaches that of $p_{\mathrm{data}}$ at all frequencies as $t_0$ shrinks.

\end{document}